\documentclass{article} 
\usepackage{iclr2027_conference,times}

\usepackage{amsmath,amsfonts,bm}

\def\eqref#1{equation~\ref{#1}}

\def\1{\bm{1}}

\DeclareMathAlphabet{\mathsfit}{\encodingdefault}{\sfdefault}{m}{sl}
\SetMathAlphabet{\mathsfit}{bold}{\encodingdefault}{\sfdefault}{bx}{n}

\DeclareMathOperator*{\argmax}{arg\,max}
\DeclareMathOperator*{\argmin}{arg\,min}

\usepackage{hyperref}
\usepackage{url}
\usepackage{algorithm}
\usepackage{algpseudocode}
\usepackage{graphicx}
\usepackage{booktabs}
\usepackage{enumitem}
\usepackage{subcaption}
\usepackage{amsmath,amssymb,amsthm}
\usepackage{bbm}
\usepackage{titletoc}
\newtheorem{theorem}{Theorem}[section]
\newtheorem{proposition}[theorem]{Proposition}
\newtheorem{lemma}[theorem]{Lemma}

\newtheorem{assumption}[theorem]{Assumption}
\theoremstyle{definition}
\newtheorem{definition}[theorem]{Definition}
\newtheorem{remark}[theorem]{Remark}

\newcommand{\Pp}{\mathbb{P}}
\newcommand{\bbE}{\mathbb{E}}
\newcommand{\KL}{\operatorname{KL}}
\newcommand{\kl}{\operatorname{kl}}
\newcommand{\Alt}{\operatorname{Alt}}
\newcommand{\pa}{\operatorname{pa}}
\newcommand{\diag}{\operatorname{diag}}
\newcommand{\tr}{\operatorname{tr}}

\newcommand{\bA}{\bm A}
\newcommand{\bSigma}{\bm\Sigma}

\newcommand{\bw}{\bm w}
\newcommand{\bG}{\bm G}

\newcommand{\bh}{\bm h}
\newcommand{\bH}{\bm H}
\newcommand{\bV}{\bm V}
\newcommand{\bx}{\bm x}
\newcommand{\bb}{\bm b}
\newcommand{\bg}{\bm g}
\newcommand{\be}{\bm e}
\newcommand{\cA}{\mathcal A}

\newcommand{\cN}{\mathcal N}

\newcommand{\bX}{\bm X}

\newcommand{\bM}{\bm M}
\newcommand{\bW}{\bm W}
\newcommand{\bI}{\bm I}
\newcommand{\cF}{\mathcal F}

\title{FOCUS:\\ Fixed-Confidence Online Causal Learning
Using Sequential Adaptive Interventions}

\author{Haijie Xu\\
Department of Industrial Engineering\\
Tsinghua University\\
Beijing, China\\
\texttt{xu-hj22@mails.tsinghua.edu.cn}
\And
Chen Zhang\thanks{Corresponding author.}\\
Department of Industrial Engineering\\
Tsinghua University\\
Beijing, China\\
\texttt{zhangchen01@tsinghua.edu.cn}
}

\iclrfinalcopy 
\begin{document}

\maketitle
\lhead{Preprint}

\begin{abstract}
We study fully online fixed-confidence causal discovery without any historical
observational data. Starting from zero samples, the learner sequentially selects
interventions to recover both the causal DAG and its edge weights under a
linear-Gaussian structural equation model. We establish an instance-dependent
lower bound for any $(\epsilon,\delta)$-correct algorithm and propose
\textsc{FOCUS}, which adaptively allocates interventions through an online
max--min game. A key contribution is a computable concentration inequality for
the accumulated KL divergence involving causal parameters shared across
interventions. We prove that \textsc{FOCUS} is $(\epsilon,\delta)$-correct and
that its expected stopping time matches the lower bound in its
$\Theta(\log(1/\delta))$ dependence up to an instance-dependent constant.
Experiments demonstrate improved structure and edge-weight recovery and confirm
the predicted stopping-time trend. Our codes are available on \url{https://anonymous.4open.science/r/FOCUS_code-76E5}
\end{abstract}

\section{Introduction}
\label{sec:introduction}

Causal discovery seeks to uncover the directed relationships that govern a
system and is fundamental to scientific reasoning and decision making
\citep{pearl2009causality,spirtes2000causation,peters2017elements}.
Observational data, however, generally identify a causal directed acyclic graph
(DAG) only up to its Markov equivalence class, leaving many edge directions
unresolved \citep{spirtes2000causation,peters2017elements}. Interventions can
resolve this ambiguity by actively perturbing the system, but are often
substantially more costly than passive observations
\citep{hauser2014two,shanmugam2015learning}. In clinical trials, for example, premature stopping for apparent benefit can
yield exaggerated treatment-effect estimates \citep{montori2005randomized},
whereas unnecessary continuation consumes resources and may expose additional
patients to inferior treatments \citep{pallmann2018adaptive}. A sample-efficient learner therefore needs both an
adaptive intervention strategy and a statistically valid stopping rule:
\textbf{which intervention should be performed next, and when is the evidence
sufficient to return a reliable causal model?}

Existing intervention-design methods do not fully address both questions.
Classical methods typically study graph identification under population-level
observational and interventional information, effectively treating distributions
or edge orientations as known without sampling error
\citep{shanmugam2015learning,kocaoglu2017cost,
squires2020active,choo2023adaptivity}. Recent finite-data methods adaptively
select interventions, but generally operate under a prescribed budget rather
than a target confidence level
\citep{scherrer2021learning,tigas2022interventions,olko2023trust}. The closest
fixed-confidence approach is \citet{elahi2024adaptive}, but it assumes an
available faithful observational distribution, restricts variables to finite
discrete state spaces, and targets only DAG recovery. This leaves open fully
online fixed-confidence learning for continuous systems, where no historical
observational data are available and both causal structure and effect
magnitudes must be learned. The distinction matters because the graph may be
recovered well before its edge weights are estimated accurately.

We address this problem for linear-Gaussian structural equation models (SEM). The
learner starts from zero samples and, at every round, chooses either passive
observation or a single-node intervention, observes one new sample,
and updates its estimate of the DAG, edge weights, and noise variances. Given
an accuracy level $\epsilon$ and confidence level $\delta$, the objective is
to stop as early as possible while returning the true DAG and an edge-weight
matrix whose maximum entrywise error is smaller than $\epsilon$, with
probability at least $1-\delta$. To this end, we introduce \textbf{F}ixed-confidence \textbf{O}nline \textbf{C}ausal learning method
\textbf{U}sing \textbf{S}equential adaptive interventions (\textsc{FOCUS}), which is an online algorithm that does not need any historical data. We first characterize the
instance-dependent information complexity of the problem and show that the
continuous intervention intervals can be reduced without loss to their
endpoints. \textsc{FOCUS} then combines joint maximum-likelihood estimation
with an online max--min game to learn an informative intervention allocation,
which is converted into actual interventions through forced exploration and
cumulative tracking. A data-dependent stopping rule determines when the
accumulated evidence is sufficient. We prove that \textsc{FOCUS} is
$(\epsilon,\delta)$-correct (see Definition \ref{def:correctness}) and that its expected stopping time has the
optimal $\Theta(\log(1/\delta))$ dependence up to an instance-dependent
constant.

\paragraph{Contributions.}
\textbf{1) Fully online formulation and lower bound.}
We formulate fixed-confidence joint recovery of causal structure and edge
weights in linear-Gaussian SEMs without historical observational data, and
derive an instance-dependent information-theoretic lower bound for every
$(\epsilon,\delta)$-correct algorithm.
\textbf{2) Adaptive intervention design.}
We prove that restricting each continuous intervention interval to its two
endpoints preserves the information complexity, and propose \textsc{FOCUS},
which learns and tracks an informative intervention allocation through an
online max--min game.
\textbf{3) Fixed-confidence guarantees.}
We establish that \textsc{FOCUS} is $(\epsilon,\delta)$-correct and derive an
instance-dependent upper bound on its expected stopping time, showing that it
is order-optimal in the confidence level. A technical ingredient is a
computable concentration inequality for the accumulated KL divergence under
adaptive interventions.
\textbf{4) Implementation and experiments.}
We provide finite implementations of the optimization procedures used by
\textsc{FOCUS}. Experiments across graphs of different sizes and densities
demonstrate improved structure and edge-weight recovery and confirm the
predicted dependence of the stopping time on $\log(1/\delta)$.

\section{Related Work}
\label{sec:related_work}

\paragraph{Non-adaptive causal discovery.}
Causal structures can be learned from observational data using constraint-based,
score-based, or continuous optimization methods
\citep{spirtes2000causation,chickering2002optimal,zheng2018dags}.
Under the standard Markov and faithfulness assumptions, however, observational
data generally identify only a Markov equivalence class. In particular, a
linear-Gaussian SEM with unrestricted noise variances is not identifiable from
its observational distribution alone
\citep{spirtes2000causation,peters2017elements}; exact identification requires
additional restrictions, such as non-Gaussian noise or equal error variances
\citep{shimizu2006linear,peters2014identifiability}.
Interventional data can distinguish observationally equivalent DAGs. Classical
non-adaptive designs select all interventions in advance and study worst-case
requirements, bounded intervention sizes, intervention costs, or fixed budgets
\citep{eberhardt2005number,hu2014randomized,shanmugam2015learning,
kocaoglu2017cost,ghassami2018budgeted,lindgren2018experimental,
sussex2021near}. Much of this literature assumes that the observational
essential graph is known and that each intervention reveals its orientation
information without sampling error, effectively requiring population-level
observational and interventional information. Methods based on a fixed finite
interventional dataset \citep{brouillard2020differentiable} also do not adapt
subsequent interventions to the accumulated evidence.

\paragraph{Adaptive causal discovery.}
Adaptive methods select interventions using the outcomes of previous
experiments. Existing work develops strategies based on equivalence classes,
trees, clique decompositions, and adaptive graph search
\citep{he2008active,hauser2014two,greenewald2019sample,
squires2020active,choo2022verification,choo2023adaptivity}.
Most graph-theoretic methods rely on an accurately known observational
equivalence class and population-level interventional information. Recent
finite-sample methods include AIT and GIT
\citep{scherrer2021learning,olko2023trust}, as well as Bayesian approaches such
as ABCD, CBED, and ABCI, which select interventions using posterior utilities
or expected information gain
\citep{agrawal2019budgeted,tigas2022interventions,toth2022active}.
These methods generally operate under a prescribed budget. Although some
provide guarantees for the experimental-design objective, they do not establish
frequentist fixed-confidence stopping guarantees for joint graph and edge-weight
recovery. The most closely related work is \citet{elahi2024adaptive}, which proposes a
track-and-stop algorithm using finitely many interventional samples. However,
it assumes access to infinitely many observational samples, considers only
finite-state discrete variables, and targets only DAG structure recovery. In
contrast, our method starts without historical observational data and jointly
learns the graph, edge weights, and noise variances in a continuous
linear-Gaussian SEM.

\section{Problem Formulation}
\label{sec:problem}

\paragraph{Notation.}
For a positive integer $p$, let $[p]:=\{1,\ldots,p\}$.
For a matrix $\bM$, we write
$\|\bM\|_{\max}:=\max_{j,k}|M_{jk}|$.
We use $\KL(P\|Q)$ for the Kullback--Leibler (KL) divergence and
$\kl(x,y):=x\log(x/y)+(1-x)\log((1-x)/(1-y))$
for the binary relative entropy.
For a measurable space $\mathcal X$, $\mathcal P(\mathcal X)$ denotes
the set of probability measures on $\mathcal X$.

\subsection{Linear Gaussian Causal Model and Interventions}
\label{subsec:model}

We consider a linear Gaussian SEM over $p$ observed
variables,
\begin{equation}
    \bX=\bA\bX+\be,
    \qquad
    \be\sim\mathcal N(\bm 0,\bSigma),
    \label{eq:sem}
\end{equation}
parameterized by $\theta=(D,\bA,\bSigma)\in\Theta$. Here, $D$ is a directed
acyclic graph (DAG) on $[p]$, $A_{jk}\neq 0$ if and only if $k\to j$ is an
edge in $D$, and
$\bSigma=\diag(\sigma_1^2,\ldots,\sigma_p^2)$ is the diagonal noise covariance.
Thus, $A_{jk}$ represents the direct causal effect of node $k$ on node $j$.

For every node $j$, the learner may perform a perfect single-node intervention
$\operatorname{do}(X_j=u)$. Such an intervention replaces the structural
equation of node $j$ by $X_j=u$, thereby removing all incoming edges to $j$
while leaving the remaining structural equations unchanged. The intervention
value can be chosen from a known interval $[U_{j,1},U_{j,2}]$. The available
action space is therefore
\begin{equation}
    \cA_\infty
    :=
    \{0\}
    \cup
    \bigl\{(j,u):j\in[p],\ u\in[U_{j,1},U_{j,2}]\bigr\},
    \label{eq:continuous_action_space}
\end{equation}
where $a=0$ denotes passive observation and $a=(j,u)$ denotes
$\operatorname{do}(X_j=u)$. At this stage, $\cA_\infty$ is generally
uncountable; Section~\ref{sec:method} will show that restricting each interval
to its two endpoints entails no loss of information.

Let $P_a^\theta$ denote the non-degenerate Gaussian distribution observed
under intervention $a$, with $a=0$ corresponding to passive observation.
For $a=(j,u)$, let $R=[p]\setminus\{j\}$. Since the intervention removes all
incoming edges to node $j$ and fixes $X_j=u$,
\begin{equation}
    \bX_R
    =
    (\bI-\bA_{R,R})^{-1}\bA_{R,j}u
    +
    (\bI-\bA_{R,R})^{-1}\be_R.
    \label{eq:interventional_sem}
\end{equation}
The distributions under passive observation and single-node intervention are
therefore summarized as
\begin{equation}
P_a^\theta=
\begin{cases}
\mathcal N\!\left(
    \bm 0,\,
    (\bI-\bA)^{-1}\bSigma(\bI-\bA)^{-\mathsf T}
\right),
& a=0,\\[2mm]
\mathcal N\!\left(
    (\bI-\bA_{R,R})^{-1}\bA_{R,j}u,\,
    (\bI-\bA_{R,R})^{-1}\bSigma_{R,R}
    (\bI-\bA_{R,R})^{-\mathsf T}
\right),
& a=(j,u).
\end{cases}
\label{eq:interventional_distribution}
\end{equation}
In particular, the interventional mean is linear in $u$, whereas the
covariance is independent of $u$. This structure will allow us to reduce the
continuous intervention space to finitely many interventions.

\begin{assumption}[Bounded parameter class]
\label{ass:parameter_class}
The number of nodes $p$ is fixed, and every candidate graph is a DAG. There
exist known constants $\beta_{\min},A_{\max},\sigma_{\min},
\sigma_{\max},C_{\max}>0$ such that
\[
    \beta_{\min}\leq |A_{jk}|\leq A_{\max}
    \quad\text{for every nonzero }A_{jk},\quad
    \sigma_{\min}^2\leq \sigma_j^2\leq\sigma_{\max}^2,
    \quad
    \max_{j,u\in[U_{j,1},U_{j,2}]}|u|\leq C_{\max}.
\]
\end{assumption}

Since there are finitely many DAGs on $[p]$ and the parameter constraints in
Assumption~\ref{ass:parameter_class} are closed and bounded, the resulting
parameter space $\Theta$ is compact. The edge-strength lower bound separates
distinct graph supports, while the variance and magnitude bounds prevent
degenerate distributions and provide uniform control of the action-wise KL
divergences. 

\subsection{Fixed-Confidence Online Causal Learning}
\label{subsec:fixed_confidence}

Let $\theta^\star=(D^\star,\bA^\star,\bSigma^\star)\in\Theta$ denote the
unknown data-generating parameter. At round $t$, the learner selects an intervention action
$a_t\in\cA_\infty$ based on the past and obtains an independent observation
from $P_{a_t}^{\theta^\star}$. Let
\[
    \cF_t
    :=
    \sigma(a_1,\bX_1,\ldots,a_t,\bX_t)
\]
be the resulting filtration, where $\bX_t$ denotes the non-degenerate random
coordinates observed under $a_t$.
A fixed-confidence causal learning algorithm is a triple
\[
    \mathfrak A=(\pi,\tau_\delta,\widehat\theta).
\]
Here, $\pi=(\pi_t)_{t\geq1}$ is an adaptive intervention policy, where
$\pi_t$ selects the next intervention action $a_t$ based on the history available in
$\cF_{t-1}$;
$\tau_\delta$ is an $(\cF_t)$-stopping time at which the algorithm terminates,
indicating that its current estimate has reached the prescribed confidence
level; and
$\widehat\theta=(\widehat\theta_t)_{t\geq1}$ is an adapted estimator sequence,
where $\widehat\theta_t
    =
    (\widehat D_t,\widehat{\bA}_t,\widehat{\bSigma}_t)$.
The terminal output is $\widehat\theta_{\tau_\delta}$.

\begin{definition}[$(\epsilon,\delta)$-correctness]
\label{def:correctness}
Fix an accuracy level $0<\epsilon\leq\beta_{\min}/2$ and a confidence level
$\delta\in(0,1)$. An algorithm $\mathfrak A$ is
\emph{$(\epsilon,\delta)$-correct} over $\Theta$ if, for every
$\theta^\star\in\Theta$,
\begin{equation}
    \Pp_{\theta^\star}\!\left(
        \tau_\delta<\infty,\ 
        \widehat D_{\tau_\delta}=D^\star,\ 
        \|\widehat{\bA}_{\tau_\delta}-\bA^\star\|_{\max}<\epsilon
    \right)
    \geq 1-\delta.
    \label{eq:correctness}
\end{equation}
\end{definition}

Although $\bSigma^\star$ is not included in the reported accuracy criterion,
it must be estimated because both the likelihood and the KL-based sampling
and stopping rules depend on it. Our objective is to construct an
$(\epsilon,\delta)$-correct algorithm with the smallest possible expected
stopping time.

\subsection{Fundamental Information-Theoretic Limit}
\label{subsec:lower_bound}

To characterize the intrinsic difficulty of the problem, write
$\theta=(D^\theta,\bA^\theta,\bSigma^\theta)$ and
$\lambda=(D^\lambda,\bA^\lambda,\bSigma^\lambda)$. For $r>0$, define the
alternative set
\begin{equation}
    \Alt_r(\theta)
    :=
    \left\{
        \lambda\in\Theta:
        D^\lambda\neq D^\theta
        \ \text{or}\
        \|\bA^\lambda-\bA^\theta\|_{\max}\geq r
    \right\}.
    \label{eq:alternative_set}
\end{equation}
Thus, $\Alt_r(\theta)$ contains all parameters that are either structurally
different from $\theta$ or at least $r$ away from it in edge weights.

Let $\mathcal P(\cA_\infty)$ denote the set of probability measures over the
intervention space $\cA_\infty$. We define the instance-dependent information
complexity as
\begin{equation}
    C_r(\theta)
    :=
    \sup_{w\in\mathcal P(\cA_\infty)}
    \inf_{\lambda\in\Alt_r(\theta)}
    \int_{\cA_\infty}
        \KL(P_a^\theta\|P_a^\lambda)\,w(\mathrm da).
    \label{eq:information_complexity}
\end{equation}
It is the largest information rate that an intervention design can guarantee
against its least distinguishable alternative.

\begin{theorem}[Information-theoretic lower bound]
\label{thm:lower_bound}
Let $\delta\in(0,1/2)$. Every algorithm that is
$(\epsilon,\delta)$-correct uniformly over $\Theta$ satisfies
\begin{equation}
    \bbE_{\theta^\star}[\tau_\delta]
    \geq
    \frac{\kl(1-\delta,\delta)}
         {C_{2\epsilon}(\theta^\star)}.
    \label{eq:stopping_lower_bound}
\end{equation}
Consequently,
\begin{equation}
    \liminf_{\delta\downarrow0}
    \frac{\bbE_{\theta^\star}[\tau_\delta]}
         {\log(1/\delta)}
    \geq
    \frac{1}{C_{2\epsilon}(\theta^\star)}.
    \label{eq:asymptotic_lower_bound}
\end{equation}
\end{theorem}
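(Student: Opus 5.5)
We follow the standard change-of-measure argument for fixed-confidence identification (Garivier--Kaufmann style), adapted to our continuous action space and to the alternative set $\Alt_{2\epsilon}(\theta^\star)$. If $\bbE_{\theta^\star}[\tau_\delta]=\infty$ there is nothing to prove, so assume it is finite; in particular $\tau_\delta<\infty$ almost surely under $\Pp_{\theta^\star}$. Fix any $\lambda\in\Alt_{2\epsilon}(\theta^\star)$. Since all $P_a^\theta$ are non-degenerate Gaussians with bounded parameters (Assumption~\ref{ass:parameter_class}), each $\KL(P_a^{\theta^\star}\|P_a^\lambda)$ is finite and uniformly bounded in $a$. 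Let $L_t$ be the log-likelihood ratio of the first $t$ observations under $\theta^\star$ versus $\lambda$; the policy terms cancel because $\pi_t$ is the same under both models. Let $N_t(\cdot)$ denote the empirical measure $\sum_{s\le t}\delta_{a_s}$ on $\cA_\infty$. Wald's identity for the martingale $L_t-\sum_{s\le t}\KL(P_{a_s}^{\theta^\star}\|P_{a_s}^\lambda)$, stopped at $\tau_\delta$, gives
\[
\bbE_{\theta^\star}[L_{\tau_\delta}]=\bbE_{\theta^\star}\!\Big[\int_{\cA_\infty}\KL(P_a^{\theta^\star}\|P_a^\lambda)\,N_{\tau_\delta}(\mathrm da)\Big].
\]
Integrability holds because the per-step increments have uniformly bounded conditional second moments (Gaussian log-ratios with bounded means and covariances), so optional stopping applies when $\bbE[\tau_\delta]<\infty$.

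Next we apply the data-processing inequality (Garivier--Kaufmann Lemma~1) to the event
\[
E=\{\tau_\delta<\infty,\ \widehat D_{\tau_\delta}=D^\star,\ \|\widehat{\bA}_{\tau_\delta}-\bA^\star\|_{\max}<\epsilon\}\in\cF_{\tau_\delta}.
\]
This gives $\bbE_{\theta^\star}[L_{\tau_\delta}]\ge \kl(\Pp_{\theta^\star}(E),\Pp_\lambda(E))$. Correctness under $\theta^\star$ gives $\Pp_{\theta^\star}(E)\ge1-\delta$. The key separation step is to show that $E$ is incorrect for $\lambda$, so that $\Pp_\lambda(E)\le\delta$. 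There are two cases. If $D^\lambda\ne D^\star$, then on $E$ we have $\widehat D_{\tau_\delta}=D^\star\neq D^\lambda$. Otherwise $\|\bA^\lambda-\bA^\star\|_{\max}\ge2\epsilon$, and the triangle inequality gives $\|\widehat{\bA}_{\tau_\delta}-\bA^\lambda\|_{\max}>\epsilon$. In both cases, $(\epsilon,\delta)$-correctness at $\lambda$ forces $\Pp_\lambda(E)\le\delta$. Because $\kl(x,y)$ is increasing in $x$ for $x\ge y$ and decreasing in $y$ for $y\le x$, and $\delta<1/2$, we obtain $\kl(\Pp_{\theta^\star}(E),\Pp_\lambda(E))\ge\kl(1-\delta,\delta)$.

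Now define the probability measure $w:=\bbE_{\theta^\star}[N_{\tau_\delta}]/\bbE_{\theta^\star}[\tau_\delta]\in\mathcal P(\cA_\infty)$. This is well defined as a measure on the Borel sets of $\cA_\infty$, and $w$ does not depend on $\lambda$. By Fubini,
\[
\kl(1-\delta,\delta)\le \bbE_{\theta^\star}[\tau_\delta]\int\KL(P_a^{\theta^\star}\|P_a^\lambda)\,w(\mathrm da)
\]
for every $\lambda\in\Alt_{2\epsilon}(\theta^\star)$. Taking the infimum over $\lambda$ and then bounding by the supremum over $w$ gives $\kl(1-\delta,\delta)\le\bbE_{\theta^\star}[\tau_\delta]\,C_{2\epsilon}(\theta^\star)$, which is \eqref{eq:stopping_lower_bound}. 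We need $C_{2\epsilon}(\theta^\star)<\infty$, which follows from the uniform KL bound. If $C_{2\epsilon}(\theta^\star)=0$, the same inequality forces $\bbE[\tau_\delta]=\infty$, so the bound holds trivially. For \eqref{eq:asymptotic_lower_bound}, note $\kl(1-\delta,\delta)=(1-2\delta)\log\frac{1-\delta}{\delta}\sim\log(1/\delta)$.

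The main obstacle is measure-theoretic rigor on the uncountable action space. Specifically, we must justify Wald's identity (integrability of $L_{\tau_\delta}$) and the measurability and Fubini step that define $w$. The separation step also needs care: it uses the factor $2\epsilon$ together with the strict inequality in Definition~\ref{def:correctness}. I would handle integrability by truncating, i.e. stopping at $\tau_\delta\wedge T$, and letting $T\to\infty$ via monotone convergence on the KL sum. For the $\kl$ term I would use the lower semicontinuity of KL, or simply apply data processing directly to the truncated stopping times.
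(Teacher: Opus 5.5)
Your proposal is correct and follows essentially the same route as the paper. Both use a change-of-measure inequality with the expected occupation measure on $\cA_\infty$, and both use the $2\epsilon$ triangle-inequality argument to get $\Pp_\lambda(E)\le\delta$ (the paper phrases it as disjointness of the two success events). Both then apply monotonicity of $\kl$ and normalize the occupation measure into a design $w$ before taking $\inf_\lambda$ and bounding by $\sup_w$. Your truncation at $\tau_\delta\wedge T$ justifies the Wald and data-processing steps more carefully than the paper does, but the argument is the same.
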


The radius $2\epsilon$ arises because two parameters separated by at least
$2\epsilon$ cannot share an $\epsilon$-accurate estimate. Thus,
$C_{2\epsilon}(\theta^\star)$ quantifies how quickly the best possible
intervention strategy can distinguish the true model from every incompatible
alternative. Theorem~\ref{thm:lower_bound} provides the benchmark that guides
our algorithm design: we seek an $(\epsilon,\delta)$-correct procedure whose
expected stopping time approaches this fundamental limit as closely as
possible.

\paragraph{Our goal.}
We aim to develop an $(\epsilon,\delta)$-correct algorithm whose upper bound
on $\bbE_{\theta^\star}[\tau_\delta]$ is as close as possible to the
information-theoretic lower bound in
Theorem~\ref{thm:lower_bound}. We do not assume access to any historical or
pre-collected observational data; all observational and interventional
samples are acquired sequentially by the algorithm.

\section{FOCUS: Adaptive Fixed-Confidence Causal Learning}
\label{sec:method}

The lower bound in Theorem~\ref{thm:lower_bound} suggests allocating
interventions to maximize the information collected against the least
distinguishable alternative. This optimal allocation depends on the unknown
parameter $\theta^\star$ and is therefore unavailable to the learner. We now
construct \textsc{FOCUS}, which sequentially estimates the causal model,
learns an informative intervention allocation, tracks this allocation through
actual interventions, and stops once sufficient evidence has accumulated.
\subsection{Reducing the Intervention Space to Endpoints}
\label{subsec:endpoint_reduction}

The intervention space $\cA_\infty$ in
Eq.~(\ref{eq:continuous_action_space}) is generally uncountable. The following
result shows that interior intervention values provide no advantage in the
information-design problem.

\begin{proposition}[Endpoint reduction]
\label{prop:endpoint_reduction}
For every $\theta\in\Theta$ and $r>0$, the information complexity
$C_r(\theta)$ in Eq.~(\ref{eq:information_complexity}) admits an optimal
intervention design supported on passive observation and the two endpoints of
each intervention interval.
\end{proposition}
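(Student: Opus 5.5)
The plan is to exploit the structure noted after Eq.~(\ref{eq:interventional_distribution}): under $a=(j,u)$ the mean is linear in $u$ and the covariance does not depend on $u$. Fix $\theta\in\Theta$, $\lambda\in\Theta$ and a node $j$, and write $P^\theta_{(j,u)}=\cN(\bm b^\theta_j u,\bSigma^\theta_{(j)})$ and $P^\lambda_{(j,u)}=\cN(\bm b^\lambda_j u,\bSigma^\lambda_{(j)})$, where $\bm b_j=(\bI-\bA_{R,R})^{-1}\bA_{R,j}$. The Gaussian KL formula then gives
\[
\KL(P^\theta_{(j,u)}\|P^\lambda_{(j,u)})=\alpha_j(\theta,\lambda)+\gamma_j(\theta,\lambda)\,u^2 .
\]
Here $\alpha_j$ is the covariance part (trace, log-determinant and dimension terms), which is independent of $u$. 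The second coefficient is
\[
\gamma_j=\tfrac12(\bm b^\theta_j-\bm b^\lambda_j)^{\mathsf T}(\bSigma^\lambda_{(j)})^{-1}(\bm b^\theta_j-\bm b^\lambda_j)\ge0 .
\]
Both coefficients are finite because all covariances are non-degenerate under Assumption~\ref{ass:parameter_class}. Hence $u\mapsto f_{j,\lambda}(u):=\KL(P^\theta_{(j,u)}\|P^\lambda_{(j,u)})$ is a convex quadratic on $[U_{j,1},U_{j,2}]$.

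Next I will construct a redistribution map that does not depend on $\lambda$. For $u\in[U_{j,1},U_{j,2}]$ with $U_{j,1}<U_{j,2}$, set
\[
s_j(u)=\frac{U_{j,2}-u}{U_{j,2}-U_{j,1}}\in[0,1],
\qquad\text{so that}\qquad u=s_j(u)U_{j,1}+(1-s_j(u))U_{j,2}.
\]
If the interval is degenerate there is nothing to do. Given any $w\in\mathcal P(\cA_\infty)$, define $w'$ on the finite set $\cA:=\{0\}\cup\{(j,U_{j,1}),(j,U_{j,2}):j\in[p]\}$ by three rules:
\begin{itemize}
\item $w'(0)=w(\{0\})$;
\item $w'((j,U_{j,1}))=\int s_j(u)\,w(\{j\}\times\mathrm du)$;
\item $w'((j,U_{j,2}))=\int (1-s_j(u))\,w(\{j\}\times\mathrm du)$.
\end{itemize}
This is a probability measure, and $s_j$ is continuous, hence measurable. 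Convexity gives $f_{j,\lambda}(u)\le s_j(u)f_{j,\lambda}(U_{j,1})+(1-s_j(u))f_{j,\lambda}(U_{j,2})$. Integrating, for every $\lambda\in\Alt_r(\theta)$ simultaneously,
\[
\int_{\cA_\infty}\KL(P_a^\theta\|P_a^\lambda)\,w(\mathrm da)\le\sum_{a\in\cA}w'(a)\KL(P_a^\theta\|P_a^\lambda).
\]
Taking the infimum over $\lambda$ shows that $w'$ is at least as good as $w$. The crucial point is that $w'$ is built from $w$ alone, not from $\lambda$, so the inequality survives the infimum. Since $\cA\subset\cA_\infty$ also gives the reverse inequality trivially, $C_r(\theta)$ equals the supremum over designs on the simplex $\mathcal P(\cA)$.

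Finally, I need the supremum over $\mathcal P(\cA)$ to be attained, since the statement asserts an optimal design. The map $w\mapsto\inf_{\lambda}\sum_a w(a)\KL(P^\theta_a\|P^\lambda_a)$ is an infimum of linear (hence continuous) functions, so it is concave and upper semicontinuous on the compact simplex, and a maximizer exists. Any maximizer is supported on passive observation and the endpoints. If the infimum over $\lambda$ is $+\infty$ or the alternative set is empty, the claim is trivial.

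The main obstacle is conceptual rather than computational: the reduction must be uniform in $\lambda$, which is exactly why it matters that the covariance is $u$-independent. If the covariance depended on $u$, the KL would not be convex in $u$ in a $\lambda$-free way, and the split weights could not be chosen independently of the alternative. A secondary check is that $\gamma_j\ge0$ uses positive definiteness of $\bSigma^\lambda_{(j)}$, which holds because $\sigma_{\min}>0$.
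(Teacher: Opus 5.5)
Your proposal is correct and follows the paper's proof essentially step for step. The $u$-independent covariance makes each action-wise KL a convex quadratic $c+\gamma u^2$ in $u$, interior mass is split to the endpoints with the $\lambda$-free weights $(U_{j,2}-u)/(U_{j,2}-U_{j,1})$, and the domination survives the infimum over $\Alt_r(\theta)$; your attainment step (an infimum of linear functions is upper semicontinuous on the compact simplex) is a slightly cleaner version of the paper's appeal to compactness of $\Alt_r(\theta)$ and continuity of the KL.
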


For any fixed intervention target and any pair of parameters, the Gaussian KL
divergence is a convex quadratic function of the intervention value.
Therefore, probability mass assigned to an interior value can be redistributed
to the two endpoints without decreasing the information against any
alternative. Consequently, an optimal intervention design can always be
chosen to use only the endpoints. 

We henceforth restrict attention to the finite intervention set
\begin{equation}
    \cA
    :=
    \{0\}
    \cup
    \bigl\{(j,U_{j,s}):j\in[p],\ s\in\{1,2\}\bigr\},
    \qquad
    K:=|\cA|=2p+1.
    \label{eq:finite_intervention_set}
\end{equation}
Accordingly, the information complexity in
Eq.~(\ref{eq:information_complexity}) has the equivalent finite-dimensional
form
\begin{equation}
    C_r(\theta)
    =
    \sup_{\boldsymbol{\alpha}\in\Delta(\cA)}
    \inf_{\lambda\in\Alt_r(\theta)}
    \sum_{a\in\cA}
        \alpha_a\KL(P_a^\theta\|P_a^\lambda),
    \label{eq:finite_information_complexity}
\end{equation}
where
$\Delta(\cA):=\{\boldsymbol{\alpha}\in\mathbb R_+^K:
\sum_{a\in\cA}\alpha_a=1\}$. Hence, the lower bound in
Theorem~\ref{thm:lower_bound} remains unchanged after replacing the continuous
intervention space by $\cA$.

Proposition~\ref{prop:endpoint_reduction} is essential because it converts
intervention selection over an uncountable space into an exactly equivalent
problem involving only $2p+1$ interventions. This reduction preserves the information complexity and
enables the finite-dimensional AdaHedge and tracking procedures developed
below. Throughout the remainder of the paper, all intervention designs,
intervention counts, and sums over interventions are defined with respect to
$\cA$ rather than $\cA_\infty$.
\subsection{Joint Estimation and Alternative Best Response}
\label{subsec:estimation_best_response}

Given the observations collected by time $t$, define the adaptive
log-likelihood and joint maximum-likelihood estimator (MLE) by
\begin{equation}
    \ell_t(\theta)
    :=
    \sum_{s=1}^t\log p_{a_s}^{\theta}(\bX_s),
    \qquad
    \widehat\theta_t
    \in
    \argmax_{\theta\in\Theta}\ell_t(\theta),
    \label{eq:joint_mle}
\end{equation}
where $p_a^\theta$ is the density of $P_a^\theta$. The estimator jointly
learns the graph, edge weights, and noise variances from all observational
and interventional samples:
$\widehat\theta_t=(\widehat D_t,\widehat{\bA}_t,
\widehat{\bSigma}_t)$.

If $\theta^\star$ were known, an optimizer
$\boldsymbol{\alpha}^\star$ of the max--min problem defining
$C_\epsilon(\theta^\star)$ in
Eq.~(\ref{eq:finite_information_complexity}) would specify the most
informative intervention allocation. Since both $\theta^\star$ and
$\boldsymbol{\alpha}^\star$ are unknown, we replace $\theta^\star$ with the
current estimate $\widehat\theta_t$ and learn the intervention allocation
sequentially. Given the current allocation
$\boldsymbol{\alpha}_t\in\Delta(\cA)$, an alternative-model best response is
\begin{equation}
    \lambda_t
    \in
    \argmin_{\lambda\in\Alt_\epsilon(\widehat\theta_t)}
    \sum_{a\in\cA}
        \alpha_{t,a}
        \KL(P_a^{\widehat\theta_t}\|P_a^\lambda).
    \label{eq:alternative_best_response}
\end{equation}
Here, $\alpha_{t,a}$ is the current estimated weight assigned to intervention
$a$, rather than its realized sampling proportion. As detailed in
Section~\ref{subsec:allocation_tracking}, these weights are updated online by
AdaHedge using the accumulated KL gains and are subsequently converted into
actual interventions through cumulative tracking.

The resulting gain from intervention $a$ is
\begin{equation}
    r_{t,a}
    :=
    \KL(P_a^{\widehat\theta_t}\|P_a^{\lambda_t}),
    \qquad a\in\cA.
    \label{eq:intervention_gain}
\end{equation}
Thus, $r_{t,a}$ measures how informative intervention $a$ is for
distinguishing the current estimate from its least distinguishable
$\epsilon$-alternative.

\subsection{Learning and Tracking the Intervention Allocation}
\label{subsec:allocation_tracking}

Directly solving the max--min problem in
Eq.~(\ref{eq:finite_information_complexity}) at every round can be
computationally expensive. Following the game-based approach to
fixed-confidence pure exploration, we instead let an online learner update
the intervention allocation, while the alternative player returns the best
response in Eq.~(\ref{eq:alternative_best_response})
\citep{garivier2016optimal,degenne2019nonasymptotic,elahi2024adaptive}.

We use the gain-form AdaHedge algorithm
\citep{derooij2014follow}. Initialize $\alpha_{1,a}=1/K$,
$\Delta_0=0$, and update
\begin{align}
    \alpha_{t+1,a}
    &=
    \frac{\alpha_{t,a}\exp(\eta_t r_{t,a})}
    {\sum_{b\in\cA}\alpha_{t,b}\exp(\eta_t r_{t,b})},
    \label{eq:adahedge_allocation}\\
    m_t
    &=
    \frac{1}{\eta_t}
    \log\!\left(
        \sum_{a\in\cA}\alpha_{t,a}\exp(\eta_t r_{t,a})
    \right),
    \qquad
    h_t
    =
    \sum_{a\in\cA}\alpha_{t,a}r_{t,a},
    \label{eq:adahedge_mixability}\\
    \Delta_t
    &=
    \Delta_{t-1}+m_t-h_t,
    \qquad
    \eta_{t+1}
    =
    \frac{\log K}{\Delta_t}.
    \label{eq:adahedge_learning_rate}
\end{align}
The initialization $\eta_1=+\infty$ and zero-gap cases are interpreted by
continuity. AdaHedge adapts its learning rate to the observed gains and
approaches the best fixed intervention allocation in hindsight without
requiring prior knowledge of the gain scale.

The allocation $\boldsymbol{\alpha}_t$ is fractional, whereas only one
intervention can be implemented at each round. Let 
$N_t(a):=\sum_{s=1}^t\mathbf{1}\{a_s=a\}$ 
be the number of times intervention $a$ has been implemented. Cumulative
tracking aims to maintain
$N_t(a)
    \approx
    \sum_{s=1}^t\alpha_{s,a}$, 
rather than matching $N_t(a)$ to the single-round weight $\alpha_{t,a}$.
Combined with forced exploration, the next intervention is selected by
\begin{equation}
    a_{t+1}\in
    \begin{cases}
        \displaystyle
        \argmin_{a\in\cA}N_t(a),
        & \displaystyle
          \text{if }\min_{a\in\cA}N_t(a)<\sqrt{t},\\[3mm]
        \displaystyle
        \argmax_{a\in\cA}
        \frac{\sum_{s=1}^t\alpha_{s,a}}{N_t(a)},
        & \text{otherwise}.
    \end{cases}
    \label{eq:intervention_tracking}
\end{equation}
Forced exploration ensures that every intervention is implemented infinitely
often, while cumulative tracking translates the learned fractional
allocation into actual interventions.

\subsection{Stopping Rule and the Complete Algorithm}
\label{subsec:stopping_rule}

To evaluate whether the current estimate is sufficiently separated from all
incorrect alternatives, define the accumulated information statistic
\begin{equation}
    d_t
    :=
    \inf_{\lambda\in\Alt_\epsilon(\widehat\theta_t)}
    \sum_{a\in\cA}
        N_t(a)
        \KL(P_a^{\widehat\theta_t}\|P_a^\lambda).
    \label{eq:stopping_statistic}
\end{equation}
The statistic is small when some $\epsilon$-alternative remains difficult to
distinguish and increases as evidence accumulates against every such
alternative.

The stopping rule is built on the following concentration result for the
accumulated KL divergence between the joint MLE and the true causal model.

\begin{lemma}[KL concentration]
\label{lem:kl_concentration}
Under Assumption~\ref{ass:parameter_class}, 
for every deterministic $t\geq1$ and $x\geq0$,
\begin{equation}
    \Pp_{\theta^\star}(\sum_{a\in\cA}N_t(a)\KL(P_a^{\widehat\theta_t}\|P_a^{\theta^\star})\geq x)
    \leq f_t(x),
    \label{eq:kl_concentration}
\end{equation}
where the tail function is
\begin{equation}
    f_t(x)
    :=
    \min\left\{
        1,\,
        \bigl[e^q\cN(\zeta_t(x))+1\bigr]e^{-\kappa_B x}
    \right\}.
    \label{eq:concentration_function}
\end{equation}
In particular,
\[
    f_t(x)
    =
    \operatorname{poly}\!\bigl(t,x,\log(t+1)\bigr)
    e^{-\kappa_Bx}.
\]
The quantities $q$, $\kappa_B$, $\cN(\cdot)$, and $\zeta_t(\cdot)$ are
explicit and can be computed from $p$ and the five known constants in
Assumption~\ref{ass:parameter_class}. Their specific expressions are
provided in Appendix~\ref{app:concentration_constants}.
\end{lemma}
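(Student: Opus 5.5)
We will derive the bound from a likelihood-ratio/covering argument. Since the intervention set $\cA$ is finite and each $P_a^\theta$ is Gaussian with known bounds, the family $\{P_a^\theta\}_{\theta\in\Theta}$ lives in a finite-dimensional parameter space of dimension at most $q$, where we take $q$ to count the free coordinates of $(\bA,\bSigma)$, roughly $p^2+p$. The proof will have three steps.

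\textbf{Step 1: a martingale identity.} For any fixed $\lambda\in\Theta$, let
\[
M_t(\lambda):=\exp\bigl(\ell_t(\lambda)-\ell_t(\theta^\star)\bigr).
\]
Because $a_s$ is $\cF_{s-1}$-measurable and the observations are conditionally independent, $M_t(\lambda)$ is a nonnegative martingale with mean $1$. In the Gaussian family we will also use a tilted version. For $\kappa\in(0,1)$ consider
\[
\exp\Bigl(\kappa\bigl[\ell_t(\lambda)-\ell_t(\theta^\star)\bigr]+\kappa_B\sum_a N_t(a)\KL(P_a^{\lambda}\|P_a^{\theta^\star})\Bigr).
\]
We would show this is a supermartingale for a suitable $\kappa_B>0$ depending only on the constants of Assumption~\ref{ass:parameter_class}. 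This is a one-step computation: the cumulant of $\log(p^\lambda_a/p^{\theta^\star}_a)$ under $P_a^{\theta^\star}$ is controlled by a R\'enyi divergence, and R\'enyi divergences of order $\kappa$ between Gaussians with eigenvalues in $[\sigma_{\min}^2,\cdot]$ and means bounded by $C_{\max}$, $A_{\max}$ are comparable to the KL divergence with explicit constants. This comparability is where $\kappa_B$ comes from.

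\textbf{Step 2: plug in the MLE via a cover.} The MLE satisfies $\ell_t(\widehat\theta_t)\ge\ell_t(\theta^\star)$, so on the event of interest the tilted statistic at $\lambda=\widehat\theta_t$ is at least $e^{\kappa_B x}$. Since $\widehat\theta_t$ is random, we discretize. We build a $\zeta$-net of each of the finitely many graph components of $\Theta$, of cardinality $\cN(\zeta)$. On a compact set, $\ell_t$ and the weighted KL sum are Lipschitz in $\theta$. The Lipschitz constant grows with $t$ and with the data norm; the data-norm part is handled by a Gaussian tail event that contributes the additive ``$+1$'' term. Choosing the resolution $\zeta_t(x)$ polynomially small in $t$ and $x$ makes the discretization error $O(1)$ in the exponent, which accounts for the factor $e^q$.

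\textbf{Step 3: union bound.} A union bound with Ville's/Markov's inequality over the net yields $\cN(\zeta_t(x))e^q e^{-\kappa_B x}$. Adding the probability of the bad data event gives $f_t(x)$. Since $\cN(\zeta)\lesssim(\mathrm{diam}/\zeta)^q\cdot(\#\text{DAGs})$ and $\zeta_t(x)^{-1}$ is polynomial in $t$, $x$ and $\log(t+1)$, the stated poly form follows.

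\textbf{Main obstacle.} The hardest step will be the uniform Lipschitz control in Step 2. It must handle the log-determinant and quadratic terms, which are unbounded in $\bX$, and the bad-event probability must be absorbed at level $e^{-\kappa_B x}$ rather than at a constant. We will first try a truncation of $\max_s\|\bX_s\|$ at a level of order $\sqrt{x+\log t}$, using Gaussian maxima bounds, so that its failure probability is itself at most $e^{-\kappa_B x}$.
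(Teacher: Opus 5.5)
Your proposal is correct and is essentially the paper's proof. With $\kappa=1/2$, your Step~1 is the paper's mean-one Bhattacharyya martingale $\exp\{\tfrac12 L_t(\theta)+\sum_{s\le t}\mathsf B(P_{a_s}^{\theta^\star},P_{a_s}^{\theta})\}$, combined with the uniform comparison $\mathsf B(P_a^{\theta^\star},P_a^{\theta})\ge\kappa_B\KL(P_a^{\theta}\|P_a^{\theta^\star})$. After that the paper uses exactly your $\zeta$-net, truncation of $\max_{s\le t}\|\bX_s\|_2$ at $R(z)$ with $z=\kappa_Bx+2\log(t+1)$ (the $+1$ term), and a resolution $\zeta_t(x)$ making the exponent's discretization error at most $q$ (the $e^q$ factor). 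The only differences are minor: the paper covers the image $\{(\boldsymbol\mu_a(\theta),\bV_a(\theta))\}_{a\in\cA}$ under $d_{\mathsf G}$ rather than $(\bA,\bSigma)$ per DAG, so its $q=\sum_{a\in\cA}(d_a+s_a)$ differs from your $\approx p^2+p$; and the uniform covariance floor must be $v_-=\sigma_{\min}^2/(1+pA_{\max})^2$ rather than $\sigma_{\min}^2$, since $(\bI-\bA)^{-1}\bSigma(\bI-\bA)^{-\mathsf T}$ can have eigenvalues below $\min_j\sigma_j^2$.
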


Lemma~\ref{lem:kl_concentration} is a key technical contribution of this
work. Existing fixed-confidence methods typically establish stopping
validity through likelihood-ratio or generalized likelihood-ratio
concentration
\citep{garivier2016optimal,degenne2019nonasymptotic}. In contrast, our result
directly controls the accumulated KL divergence involving a joint estimator
of the shared causal parameters. Establishing this KL concentration requires
a new argument tailored to online causal estimation.

Using the tail function in Eq.~(\ref{eq:concentration_function}),
\textsc{FOCUS} stops whenever
\begin{equation}
    \boxed{
        \frac{\pi^2t^2}{6}\,f_t(d_t)<\delta
    }
    \label{eq:focus_stopping_rule}
\end{equation}
and returns $\widehat\theta_t$. The complete procedure is summarized in Algorithm~\ref{alg:focus}.

\begin{algorithm}[t]
\caption{\textsc{FOCUS}: Fixed-Confidence Online Causal Learning}
\label{alg:focus}
\begin{algorithmic}[1]
\Require Accuracy $\epsilon$, confidence $\delta$, intervention intervals,
and parameter class $\Theta$
\State Construct $\cA$ according to Eq.~(\ref{eq:finite_intervention_set})
\State Initialize $N_0(a)\gets0$ and $\alpha_{1,a}\gets1/K$ for every
$a\in\cA$
\State Initialize the remaining AdaHedge variables with $\Delta_0\gets0$
and $\eta_1\gets+\infty$
\For{$t=1,2,\ldots$}
    \State Select intervention $a_t$ according to
    Eq.~(\ref{eq:intervention_tracking})
    \State Implement $a_t$ and observe
    $\bX_t\sim P_{a_t}^{\theta^\star}$
    \ForAll{$a\in\cA$}
        \State $N_t(a)\gets
        N_{t-1}(a)+\mathbf{1}\{a_t=a\}$
    \EndFor
    \State Compute the joint MLE $\widehat\theta_t$ according to
    Eq.~(\ref{eq:joint_mle})
    \State Compute $d_t$ according to Eq.~(\ref{eq:stopping_statistic})
    \If{$\frac{\pi^2t^2}{6}f_t(d_t)<\delta$}
        \State \Return $\widehat\theta_t$
    \EndIf
    \State Compute $\lambda_t$ according to
    Eq.~(\ref{eq:alternative_best_response})
    \State Compute $\{r_{t,a}\}_{a\in\cA}$ according to
    Eq.~(\ref{eq:intervention_gain})
    \State Update $\boldsymbol{\alpha}_{t+1}$ and the AdaHedge variables
    according to
    Eqs.~(\ref{eq:adahedge_allocation})--(\ref{eq:adahedge_learning_rate})
\EndFor
\end{algorithmic}
\end{algorithm}

\begin{remark}[Oracle-based formulation]
\label{rem:oracles}
The theoretical version of \textsc{FOCUS} assumes access to two optimization
oracles: a joint-MLE oracle for Eq.~(\ref{eq:joint_mle}) and an
alternative-model best-response oracle for
Eqs.~(\ref{eq:alternative_best_response}) and
(\ref{eq:stopping_statistic}). Such oracle-based formulations are standard
in related fixed-confidence pure-exploration and online causal-learning
problems
\citep{degenne2019nonasymptotic,elahi2024adaptive}.
Appendix~\ref{app:oracles} provides concrete finite procedures that can replace
both oracle calls.
\end{remark}
\section{Theoretical Guarantees}
\label{sec:theory}

We now establish the fixed-confidence correctness and sample-complexity
guarantees of \textsc{FOCUS}.

\begin{theorem}[Fixed-confidence correctness]
\label{thm:focus_correctness}
Suppose Assumption~\ref{ass:parameter_class} holds and the exact joint-MLE and
alternative-model best-response procedures are used. Then, for every
$\theta^\star\in\Theta$, $0<\epsilon\leq\beta_{\min}/2$, and
$\delta\in(0,1)$, Algorithm~\ref{alg:focus} satisfies
\begin{equation}
    \Pp_{\theta^\star}\!\left(
        \tau_\delta<\infty,\ 
        \widehat D_{\tau_\delta}=D^\star,\ 
        \|\widehat{\bA}_{\tau_\delta}-\bA^\star\|_{\max}<\epsilon
    \right)
    \geq 1-\delta.
    \label{eq:focus_correctness}
\end{equation}
Therefore, \textsc{FOCUS} is $(\epsilon,\delta)$-correct in the sense of
Definition~\ref{def:correctness}.
\end{theorem}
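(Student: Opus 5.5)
Theorem~\ref{thm:focus_correctness} has two parts: the stopping time is almost surely finite, and the output is correct except on a low-probability event. I will prove them separately, first the error bound and then finiteness.

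\emph{Error event.} Let $E$ be the event that the algorithm stops at some finite $t$ and returns $\widehat\theta_t$ with $\widehat D_t\neq D^\star$ or $\|\widehat{\bA}_t-\bA^\star\|_{\max}\geq\epsilon$.

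\begin{itemize}
\item On this event, the true parameter is itself an alternative: the definition of $\Alt_\epsilon$ in Eq.~(\ref{eq:alternative_set}) gives $\theta^\star\in\Alt_\epsilon(\widehat\theta_t)$.
\item Since $d_t$ is an infimum over $\Alt_\epsilon(\widehat\theta_t)$, it is at most the value at $\theta^\star$:
\[
d_t\leq Z_t:=\sum_{a\in\cA}N_t(a)\KL\bigl(P_a^{\widehat\theta_t}\|P_a^{\theta^\star}\bigr).
\]
\item The tail function $f_t$ in Eq.~(\ref{eq:concentration_function}) is nonincreasing in $x$. I will verify this from the appendix expressions; if $\zeta_t$ makes the product non-monotone, I replace $f_t$ by its nonincreasing envelope $\sup_{y\geq x}f_t(y)$, which keeps the tail bound valid. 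Hence $f_t(Z_t)\leq f_t(d_t)<6\delta/(\pi^2t^2)$.
\end{itemize}

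Taking a union bound over deterministic $t\geq1$ gives
\[
\Pp(E)\leq\sum_{t\ge1}\Pp_{\theta^\star}\bigl(f_t(Z_t)<6\delta/(\pi^2t^2)\bigr).
\]
To bound each term, let $x_t:=\inf\{x: f_t(x)<6\delta/(\pi^2t^2)\}$. By monotonicity, $f_t(Z_t)<6\delta/(\pi^2t^2)$ forces $Z_t\geq x_t$. Lemma~\ref{lem:kl_concentration} applied at the deterministic time $t$, together with right-continuity of $f_t$ (or a limiting argument), gives $\Pp(Z_t\geq x_t)\leq 6\delta/(\pi^2t^2)$. Summing and using $\sum_t t^{-2}=\pi^2/6$ yields $\Pp(E)\leq\delta$.

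\emph{Finiteness of $\tau_\delta$.} It remains to show $\Pp(\tau_\delta=\infty)=0$, so that the good event has probability at least $1-\delta$.

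\begin{itemize}
\item Forced exploration in Eq.~(\ref{eq:intervention_tracking}) guarantees $N_t(a)\geq\sqrt t-1$ for every $a\in\cA$.
\item Consistency of the joint MLE: under compactness of $\Theta$ and identifiability from all actions, $\widehat\theta_t\to\theta^\star$ almost surely. Identifiability holds because interventions on every node recover $D^\star$, $\bA^\star$ and $\bSigma^\star$. For the graph component, the separation $\beta_{\min}$ forces $\widehat D_t=D^\star$ eventually.
\item Consequently, for large $t$ the estimate has the correct graph and $\|\widehat{\bA}_t-\bA^\star\|_{\max}$ is small.
\item Every $\lambda\in\Alt_\epsilon(\widehat\theta_t)$ is then separated from $\widehat\theta_t$ by a fixed amount: either its graph differs (hence some edge differs by at least $\beta_{\min}$), or some weight differs by at least $\epsilon$. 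Identifiability and compactness give a uniform positive lower bound $c>0$ on $\sum_a \KL(P_a^{\widehat\theta_t}\|P_a^\lambda)$. This uses continuity of KL and closedness of the alternative sets near $\theta^\star$.
\item Therefore $d_t\geq c(\sqrt t-1)$, which grows polynomially.
\item Since $f_t(x)=\mathrm{poly}(t,x,\log(t+1))e^{-\kappa_B x}$, the quantity $t^2f_t(d_t)\to0$, so the stopping condition eventually holds.
\end{itemize}

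\emph{Main obstacle.} The hard step is the finiteness argument, specifically the uniform separation constant $c$ for all alternatives of a \emph{moving} estimate $\widehat\theta_t$. I will obtain it by a compactness/contradiction argument. Suppose there were sequences $\theta_n\to\theta^\star$ and $\lambda_n\in\Alt_\epsilon(\theta_n)$ with summed KL tending to $0$. Passing to a convergent subsequence $\lambda_n\to\lambda$ in compact $\Theta$ gives $P_a^\lambda=P_a^{\theta^\star}$ for all $a\in\cA$, hence $\lambda=\theta^\star$ by identifiability. But $\lambda_n$ differs from $\theta_n$ by at least $\epsilon$ in some weight, or by at least $\beta_{\min}$ on a differing edge, and this survives the limit, which is a contradiction. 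The MLE consistency under adaptive sampling I would take from a standard uniform law of large numbers over the compact class.
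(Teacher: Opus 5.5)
Your bound on the probability of a wrong stop follows the paper's argument. On a wrong stop, $\theta^\star\in\Alt_\epsilon(\widehat\theta_t)$, so $d_t\le K_t^\star$; Lemma~\ref{lem:kl_concentration} at each fixed $t$ and a union bound with weights $6\delta/(\pi^2t^2)$ then finish. You only make explicit the monotonicity of $f_t$ that the paper uses silently as ``nestedness''.

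That monotonicity does hold. Since $\cN\ge1$, $f_t(x)<1$ forces $\kappa_Bx>q$. On that range one checks $\frac{d}{dx}\log\bigl(e^q\cN(\zeta_t(x))+1\bigr)\le q\kappa_B/z_t(x)<\kappa_B$. Your envelope fallback would not be legitimate, because the algorithm stops on $f_t$ itself, not on its envelope; but it is not needed.

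The real difference is termination. The paper imports $\Pp_{\theta^\star}(\tau_\delta<\infty)=1$ from Theorem~\ref{thm:upper_bound}. That proof runs through finite expected localization, AdaHedge regret, cumulative tracking and the mixed-alternative minimax to show that $d_T$ grows linearly. You use only forced exploration and a separation constant, so $d_t\gtrsim\sqrt t$, which already beats the polynomial prefactor of $f_t$. Your route is more elementary and decouples correctness from the sample-complexity machinery, but it gives no $\log(1/\delta)$ rate. The paper's route is heavier but delivers that rate as a by-product.

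Two refinements to your version. First, the MLE consistency you invoke is not a ``standard ULLN'' fact under adaptive sampling, where $N_t(a)/t$ may vanish. If you keep it, derive it as in Lemma~\ref{lem:finite_expected_localization}: Lemma~\ref{lem:kl_concentration} plus forced exploration, then Borel--Cantelli.

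Better, drop consistency altogether. Your compactness/contradiction argument works verbatim with $\theta_n$ ranging over all of $\Theta$: pass to a subsequence on which both graphs are constant, and the closed support constraints preserve the limiting graphs. This gives $c_\epsilon:=\inf_{\theta\in\Theta}\inf_{\lambda\in\Alt_\epsilon(\theta)}\sum_{a\in\cA}\KL(P_a^\theta\|P_a^\lambda)>0$. Hence $d_t\ge c_\epsilon\min_aN_t(a)\ge c_\epsilon c_{\mathrm{fe}}\sqrt t$ pathwise for $t\ge t_{\mathrm{fe}}$. Take this bound from Lemma~\ref{lem:cumulative_tracking} rather than $N_t(a)\ge\sqrt t-1$, which fails for small $t$ once $K\ge5$. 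It follows that $\tau_\delta$ is bounded by a deterministic time of order $\log^2(1/\delta)$.

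Both routes rest on the same identifiability of $\theta$ from $\{P_a^\theta\}_{a\in\cA}$. This holds, for example, whenever each intervention interval has a nonzero endpoint, since the interventional means then reveal $(\bI-\bA)^{-1}$.
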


Theorem~\ref{thm:focus_correctness} guarantees that \textsc{FOCUS}
simultaneously identifies the true causal graph and estimates every edge
weight within the prescribed accuracy. This guarantee holds despite the
adaptive choice of interventions and requires no historical observational
data: all information used for estimation and stopping is collected online.
Although $\bSigma^\star$ is not part of the reported accuracy criterion, it is
jointly estimated to evaluate the likelihoods, KL divergences, and stopping
statistic.

\begin{theorem}[Expected stopping-time upper bound]
\label{thm:upper_bound}
Suppose Assumption~\ref{ass:parameter_class} holds and the exact joint-MLE and
alternative-model best-response procedures are used. Then
Algorithm~\ref{alg:focus} terminates almost surely,
\begin{equation}
    \Pp_{\theta^\star}(\tau_\delta<\infty)=1,
    \label{eq:almost_sure_termination}
\end{equation}
and its expected stopping time satisfies
\begin{equation}
    \limsup_{\delta\downarrow0}
    \frac{\bbE_{\theta^\star}[\tau_\delta]}
         {\log(1/\delta)}
    \leq
    \frac{1}{\kappa_B C_\epsilon(\theta^\star)}.
    \label{eq:expected_time_upper_bound}
\end{equation}
\end{theorem}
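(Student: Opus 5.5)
We follow the standard track-and-stop template: consistency, then convergence of the allocation, then a deterministic inversion of the stopping threshold. The guiding idea is to show that on a high-probability ``good'' event, after some random but integrable time $T_0$, the stopping statistic grows linearly, $d_t\geq t\,(C_\epsilon(\theta^\star)-\xi)$, for any fixed $\xi>0$.

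\textbf{Step 1 (consistency of the MLE).} Forced exploration in Eq.~(\ref{eq:intervention_tracking}) gives $N_t(a)\geq\sqrt t-1$ for every $a\in\cA$. Combining this with Lemma~\ref{lem:kl_concentration} at $x=\sqrt t\,\gamma$ and a union over $t$, the tail $f_t(\sqrt t\gamma)=\operatorname{poly}(t)e^{-\kappa_B\sqrt t\gamma}$ is summable. Hence for each action $\KL(P_a^{\widehat\theta_t}\|P_a^{\theta^\star})\leq \operatorname{poly}\cdot t^{-1/2}$ for all $t$ beyond some $T_1$ with $\bbE[T_1]<\infty$. The passive action together with all interventions identifies $\theta$: the interventional means give $\bA$ column-wise, and the covariances give $\bSigma$. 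By compactness of $\Theta$, this yields $\widehat\theta_t\to\theta^\star$. Since $\epsilon\leq\beta_{\min}/2$ and edges are separated by $\beta_{\min}$, we get $\widehat D_t=D^\star$ eventually and $\|\widehat{\bA}_t-\bA^\star\|_{\max}$ small.

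\textbf{Step 2 (continuity and allocation convergence).} Next we argue that $(\theta,\alpha)\mapsto\inf_{\lambda\in\Alt_\epsilon(\theta)}\sum_a\alpha_a\KL(P_a^\theta\|P_a^\lambda)$ is continuous at $\theta^\star$. Gaussian KL is continuous and bounded on compact $\Theta$, and $\Alt_\epsilon$ varies continuously in the Hausdorff sense once $D$ is fixed; a boundary case must be handled via the strict inequality in the definition. The gains $r_{t,a}$ are uniformly bounded by some $L$ (Assumption~\ref{ass:parameter_class}), so the AdaHedge regret is $O(L\sqrt{t\log K})$. As in \citet{degenne2019nonasymptotic}, this gives, for $t\geq T_1$,
\[
\inf_{\lambda\in\Alt_\epsilon(\widehat\theta_t)}\sum_{s\leq t}\sum_a\alpha_{s,a}\KL(P_a^{\widehat\theta_t}\|P_a^\lambda)\geq t\,C_\epsilon(\theta^\star)-o(t).
\]
To obtain this, we use concavity of the infimum, the regret bound against the best fixed $\alpha^\star$, and continuity to replace $\widehat\theta_s$ by $\widehat\theta_t$ at an $o(t)$ cost for $s\geq T_1$. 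Cumulative tracking with forced exploration gives $|N_t(a)-\sum_{s\leq t}\alpha_{s,a}|\leq K\sqrt t+K$. Therefore $d_t\geq t(C_\epsilon(\theta^\star)-\xi)$ for $t\geq T_2(\xi)$, with $\bbE T_2<\infty$.

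\textbf{Step 3 (inversion).} On $\{t\geq T_2\}$, the stopping condition $\frac{\pi^2t^2}{6}\operatorname{poly}(t,d_t)e^{-\kappa_Bd_t}<\delta$ holds once $\kappa_B t(C_\epsilon-\xi)\geq\log(1/\delta)+O(\log t)$. This gives
\[
\tau_\delta\leq\max\Bigl\{T_2,\ \frac{\log(1/\delta)(1+o(1))}{\kappa_B(C_\epsilon-\xi)}\Bigr\},
\]
and in particular almost sure termination, Eq.~(\ref{eq:almost_sure_termination}). Taking expectations, dividing by $\log(1/\delta)$, and letting $\delta\downarrow0$ and then $\xi\downarrow0$ gives Eq.~(\ref{eq:expected_time_upper_bound}).

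The main obstacle is Step 2: establishing continuity of the inner infimum over the set-valued map $\theta\mapsto\Alt_\epsilon(\theta)$, together with a quantitative rate that makes the time $T_2$ integrable rather than merely finite almost surely. I would first try to prove Lipschitz-type bounds on the Gaussian KL over the compact $\Theta$. Combined with the polynomial-exponential tails of Step 1, these bounds would make $\Pp(T_2>t)$ summable.
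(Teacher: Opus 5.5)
Your architecture matches the paper's. First, an integrable last-exit localization time, obtained from forced exploration, Lemma~\ref{lem:kl_concentration} at level $x\asymp\sqrt t$, and identifiability; the paper packages identifiability as the separation constant $\Delta_\Theta(\xi;\theta^\star)>0$. Second, a pathwise linear lower bound on $d_t$. Third, a deterministic inversion of the stopping threshold. (In Step~1, the choice $x=\gamma\sqrt t$ only yields $\KL(P_a^{\widehat\theta_t}\|P_a^{\theta^\star})\lesssim\gamma$, not a $t^{-1/2}$ rate, but that is all you need.)

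The real difference is the game step. Write $F(\theta,\boldsymbol\alpha):=\inf_{\lambda\in\Alt_\epsilon(\theta)}\sum_a\alpha_a\KL(P_a^\theta\|P_a^\lambda)$. You compare AdaHedge with the fixed optimal allocation $\boldsymbol\alpha^\star$ and use $F(\theta^\star,\boldsymbol\alpha^\star)=C_\epsilon(\theta^\star)$ directly. The paper instead uses regret against the best pure action and pushes all alternatives into the fixed set $\Alt_{\epsilon-\xi}(\theta^\star)\supseteq\Alt_\epsilon(\widehat\theta_T)$. It then lower-bounds $\max_a\sum_t k_a(\theta^\star,\lambda_t^\circ)$ via the mixed-alternative minimax representation (Lemma~\ref{lem:mixed_alternative_representation}) and recovers $C_\epsilon$ from $C_{\epsilon-\xi}\to C_\epsilon$. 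Your route dispenses with the duality lemma and needs continuity only of optimal values. The paper's modulus $\widetilde u_\epsilon(\xi)$, by contrast, concerns the best responses themselves. In exchange, the paper's inclusion trick makes the final comparison with the game value rest on a one-sided property that always holds.

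On the obstacle you single out, integrability needs no rate and no Lipschitz bounds. Fix the localization radius $\xi$. Then every continuity error costs at most a constant $\omega(\xi)$ per round, the rounds before $T_0=\tau_{\mathrm{loc}}(\xi)$ cost $O(T_0)$, and the regret and tracking errors are deterministic $o(T)$. This gives $d_T\geq c_\xi T-(c_\xi+B_{\mathrm{KL}})T_0-q_T$ as in Eq.~(\ref{eq:simplified_linear_growth}). Hence your $T_2$ is at most a constant multiple of $\tau_{\mathrm{loc}}(\xi)$ plus a constant, and it inherits integrability from Lemma~\ref{lem:finite_expected_localization}; let $\xi\downarrow0$ only at the very end.

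The continuity claim itself, however, fails exactly at the boundary case you allude to. Suppose some edge has $A^\star_{jk}+\epsilon=A_{\max}$ (more generally, $A^\star_{jk}\pm\epsilon\in\{\pm\beta_{\min},\pm A_{\max}\}$). The same-graph alternative with $A_{jk}=A_{\max}$ lies in $\Alt_\epsilon(\theta^\star)$, but it has no nearby point in $\Alt_\epsilon(\theta)$ once $A^\theta_{jk}>A^\star_{jk}$. So $\theta\mapsto\Alt_\epsilon(\theta)$ is only upper hemicontinuous at $\theta^\star$, and $F(\cdot,\boldsymbol\alpha)$ is only lower semicontinuous there. The semicontinuity is uniform in $\boldsymbol\alpha$, because $F$ is $B_{\mathrm{KL}}$-Lipschitz in $\boldsymbol\alpha$.

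Lower semicontinuity suffices to bound $F(\widehat\theta_s,\boldsymbol\alpha^\star)$ from below. Your replacement of $\widehat\theta_s$ by $\widehat\theta_t$, however, requires $F(\widehat\theta_t,\boldsymbol\alpha_s)\geq F(\widehat\theta_s,\boldsymbol\alpha_s)-o(1)$, which is upper semicontinuity as well. A strict inequality cannot supply this, because the sets used by the algorithm are defined with $\geq$. The paper needs the same two-sided comparison and absorbs it into the assertion $\widetilde u_\epsilon(\xi)\to0$. To make your Step~2 rigorous, you must either exclude these boundary instances or supply an additional argument. Away from them the correspondence is continuous at $\theta^\star$, and your argument goes through.
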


Theorem~\ref{thm:upper_bound} shows that the expected number of online samples
grows at most logarithmically with the inverse confidence level. The
information complexity $C_\epsilon(\theta^\star)$ captures the intrinsic
distinguishability of the true model: a larger value means that informative
interventions separate $\theta^\star$ from its closest alternatives more
quickly. The factor $\kappa_B$ is introduced by the direct KL concentration
inequality in Lemma~\ref{lem:kl_concentration}. Together, AdaHedge, forced
exploration, and cumulative tracking ensure that the stopping statistic
accumulates information at a linear rate, leading to the upper bound in
Eq.~(\ref{eq:expected_time_upper_bound}).

\begin{remark}[Comparison with the lower bound]
\label{rem:order_optimality}
Combining Theorem~\ref{thm:lower_bound} with
Theorem~\ref{thm:upper_bound} yields
\begin{equation}
    \frac{1}{C_{2\epsilon}(\theta^\star)}
    \leq
    \liminf_{\delta\downarrow0}
    \frac{\bbE_{\theta^\star}[\tau_\delta]}
         {\log(1/\delta)}
    \leq
    \limsup_{\delta\downarrow0}
    \frac{\bbE_{\theta^\star}[\tau_\delta]}
         {\log(1/\delta)}
    \leq
    \frac{1}{\kappa_B C_\epsilon(\theta^\star)}.
    \label{eq:lower_upper_comparison}
\end{equation}
Thus, \textsc{FOCUS} is $(\epsilon,\delta)$-correct and order-optimal in the
confidence level: for fixed $\theta^\star$ and $\epsilon$, both its expected
stopping time and the information-theoretic lower bound scale as
$\Theta(\log(1/\delta))$. The remaining gap is the instance-dependent
multiplicative factor
\[
    \frac{C_{2\epsilon}(\theta^\star)}
         {\kappa_B C_\epsilon(\theta^\star)},
\]
which is independent of $\delta$.
\end{remark}

\section{Experiments}
\label{sec:experiments}

We evaluate whether \textsc{FOCUS} can efficiently recover both the causal
structure and the associated edge weights through sequentially selected
interventions. We further examine whether its empirical stopping time exhibits
the logarithmic dependence on the confidence level predicted by
Theorem~\ref{thm:upper_bound}.

\paragraph{Baselines.}
We compare \textsc{FOCUS} with the following three methods.
\textsc{RANDOM} uses the same joint estimator and intervention set as
\textsc{FOCUS}, but selects interventions uniformly at random, thereby isolating
the benefit of the proposed adaptive allocation strategy.
\textsc{AIT-DCDI} combines active intervention targeting (AIT)
\citep{scherrer2021learning} with a linear-Gaussian implementation of
differentiable causal discovery from interventional data (DCDI)
\citep{brouillard2020differentiable}.
Finally, \textsc{CBED} is a causal Bayesian experimental-design method that
selects informative intervention targets and values using expected information
gain \citep{tigas2022interventions}. We adapt all methods to the same collection
of observational and single-node perfect interventions and give them the same
online sample budget. For a fair comparison of parameter estimation, the edge
weights of each estimated graph are fitted using all samples collected by the
corresponding method.

\paragraph{Simulation settings.}
For each Monte Carlo repetition, we first sample a random topological ordering
of $p$ nodes and generate an Erd\H{o}s--R\'enyi DAG by independently including
each admissible directed edge with probability $\rho$. Conditional on the
resulting graph, the nonzero edge weights are independently sampled from
$\operatorname{Unif}([-\!A_{\max},-\beta_{\min}]
\cup[\beta_{\min},A_{\max}])$, and the diagonal entries of
$\bSigma^\star$ are independently sampled from
$\operatorname{Unif}([\sigma_{\min}^2,\sigma_{\max}^2])$. Throughout the
experiments, we set $\beta_{\min}=0.15$, $A_{\max}=1.5$,
$\sigma_{\min}^2=0.8$, and $\sigma_{\max}^2=1.2$. Samples are then drawn from
the linear-Gaussian SEM $\bX=\bA^\star\bX+\boldsymbol{\varepsilon}$, where
$\boldsymbol{\varepsilon}\sim\mathcal N(\boldsymbol{0},\bSigma^\star)$.
At each online round, an algorithm selects either the observational action or
one of the two endpoint interventions on a single node, as defined in
Eq.~(\ref{eq:finite_intervention_set}), and receives one new sample. The
intervention interval is $[-2,2]$ for every node, so the two endpoint
interventions set the selected variable to either $-2$ or $2$. For FOCUS, we use the
target estimation accuracy $\epsilon=0.07\leq \beta_{min}/2$. Within each repetition, all four
methods are evaluated on the same underlying DAG, coefficients, noise
variances, and sample budget.

We consider five graph settings: $(p,\rho)\in
    \{(5,0.3),(6,0.3),(7,0.3),(6,0.4),(6,0.5)\}$.The representative setting $(p,\rho)=(6,0.3)$ is reported in the main text. For
each setting, we conduct $1000$ independent Monte Carlo repetitions, regenerating
the DAG and its parameters in every repetition. The curves report the sample
means, and the shaded regions show the corresponding 95\% confidence intervals. The remaining experimental results are provided in
Appendix~\ref{app:experiments} due to space constraints.

\begin{figure*}[t]
    \centering
    \begin{subfigure}[t]{0.32\textwidth}
        \centering
        \includegraphics[width=\linewidth]{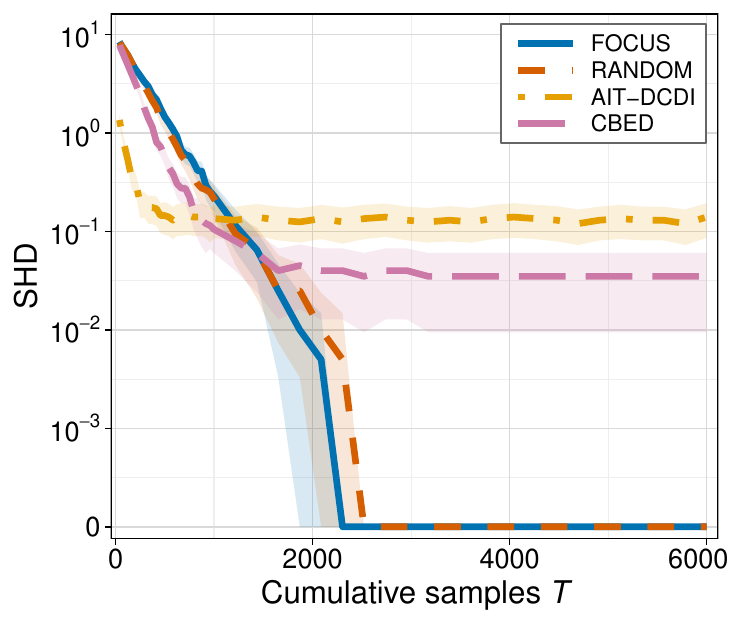}
        \caption{Structure error}
        \label{fig:main_experiment_shd}
    \end{subfigure}
    \hfill
    \begin{subfigure}[t]{0.32\textwidth}
        \centering
        \includegraphics[width=\linewidth]{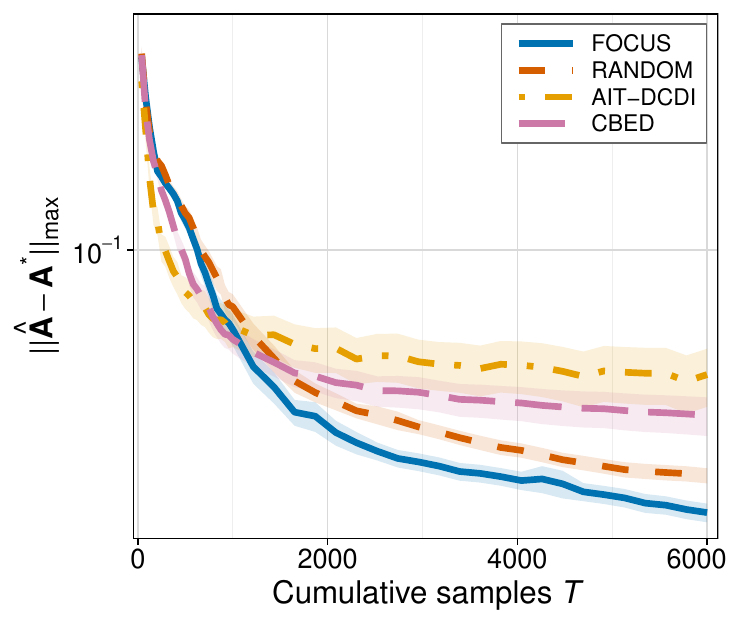}
        \caption{Edge-weight error}
        \label{fig:main_experiment_weight}
    \end{subfigure}
    \hfill
    \begin{subfigure}[t]{0.32\textwidth}
        \centering
        \includegraphics[width=\linewidth]{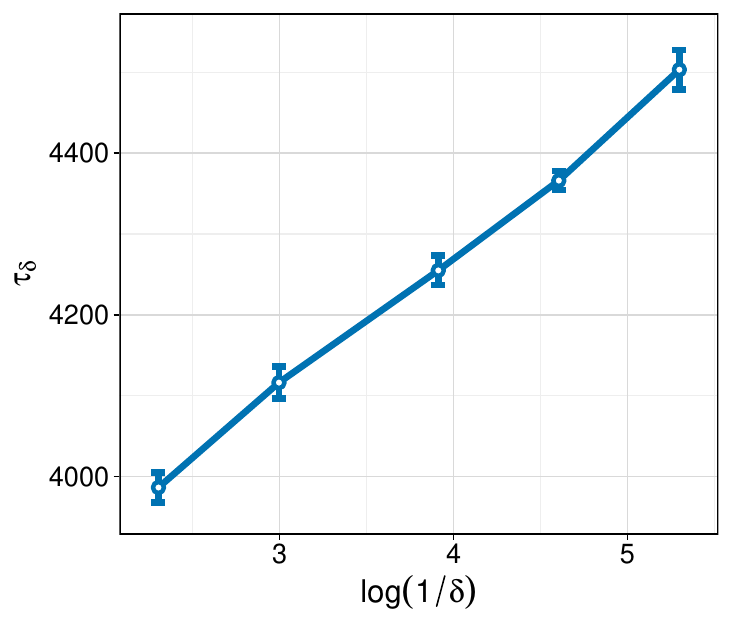}
        \caption{Stopping-time scaling}
        \label{fig:main_experiment_stopping}
    \end{subfigure}

    \caption{
        Performance for $p=6$ and $\rho=0.3$. Panels (a)--(c) show SHD, maximum
edge-weight estimation error, and the stopping time of \textsc{FOCUS} versus
$\log(1/\delta)$, respectively. Shaded bands and error bars denote 95\%
confidence intervals.
    }
    \label{fig:main_experiment}
\end{figure*}

\paragraph{Evaluation metrics.}
Each experiment contains three panels. The first reports the structural Hamming
distance $\operatorname{SHD}(\widehat D_n,D^\star)$ against the number $n$ of collected samples, where a smaller value indicates
more accurate causal-structure recovery. The second reports the maximum
edge-weight error $\|\widehat{\bA}_n-\bA^\star\|_{\max}
    :=
    \max_{i,j}
    \bigl|
        \widehat A_{n,ij}-A_{ij}^\star
    \bigr|$, which evaluates parameter recovery in addition to graph identification.
Both metrics are reported for all four methods under the same sample budget.
The third panel concerns the fixed-confidence behavior of \textsc{FOCUS} and
plots its empirical expected stopping time
$\tau_\delta$ against $\log(1/\delta)$ for
confidence levels $\delta \in \{0.1,0.05,0.02,0.01,0.005\}$.

\paragraph{Results.}
Figure~\ref{fig:main_experiment} presents the representative setting with
$p=6$ and $\rho=0.3$. As shown in
Figures~\ref{fig:main_experiment_shd} and
\ref{fig:main_experiment_weight}, although \textsc{FOCUS} improves slightly
more slowly during the initial rounds, it soon overtakes the competing methods
and achieves a clear advantage in both structural and edge-weight recovery.
Its improvement over \textsc{RANDOM} highlights the benefit of learning a
nonuniform intervention allocation rather than distributing samples uniformly
across actions, while its advantage over \textsc{AIT-DCDI} and \textsc{CBED}
shows the value of designing interventions for joint graph and edge-weight
recovery. Comparing these two figures also shows that SHD often reaches
zero before $\|\widehat{\bA}_n-\bA^\star\|_{\max}$ stabilizes, indicating that
graph recovery alone may stop before the edge weights are accurately estimated
and motivating the joint target of \textsc{FOCUS}. Finally,
Figure~\ref{fig:main_experiment_stopping} shows an approximately linear
relationship between $\tau_\delta$ and $\log(1/\delta)$, consistent with
Theorem~\ref{thm:upper_bound}. Together, the three panels indicate that, when
\textsc{FOCUS} terminates and returns its estimate, it already substantially
outperforms the other methods under comparable sample budgets.

\section{Conclusion}
\label{sec:conclusion}

We studied fully online fixed-confidence recovery of causal structure and edge
weights without any historical observational data. Starting from zero samples,
\textsc{FOCUS} adaptively selects interventions and determines when sufficient
evidence has been collected. It jointly updates the causal model and
intervention allocation using only sequentially collected data. We established
an instance-dependent lower bound, proved that \textsc{FOCUS} is
$(\epsilon,\delta)$-correct, and showed that it achieves the optimal
$\Theta(\log(1/\delta))$ dependence up to an instance-dependent constant.
Experiments demonstrate improved structural and parameter recovery and confirm
the predicted stopping-time trend. They also show that edge-weight estimation
remains meaningful after the SHD reaches zero. Future work includes extending the framework to nonlinear models and latent
confounding, and more general intervention mechanisms under practical
experimental constraints.

\bibliography{ref}
\bibliographystyle{iclr2027_conference}

\clearpage
\newpage
\appendix

\startcontents[appendices]

\section*{Appendix Contents}
\printcontents[appendices]{}{1}{\setcounter{tocdepth}{1}}

\paragraph{Additional notation.}
For a vector $\bx$, $\|\bx\|_2$ denotes its Euclidean norm.
For a matrix $\bM$, $\|\bM\|_F$ and $\|\bM\|_{op}$ denote its
Frobenius and operator norms, respectively.
For a DAG $D$, $\pa_D(j)$ denotes the parent set of node $j$.

\section{Proof of Proposition~\ref{prop:endpoint_reduction}}
\label{app:endpoint_reduction}

\begin{proof}
Fix $\theta,\lambda\in\Theta$ and an intervention target $j\in[p]$. Let
$R=[p]\setminus\{j\}$ and define
\[
    \bg_j^\theta
    :=
    (\bI-\bA_{R,R}^\theta)^{-1}\bA_{R,j}^\theta,
    \qquad
    \bV_j^\theta
    :=
    (\bI-\bA_{R,R}^\theta)^{-1}
    \bSigma_{R,R}^\theta
    (\bI-\bA_{R,R}^\theta)^{-\mathsf T}.
\]
Because $\bA^\theta$ corresponds to a DAG, $\bI-\bA_{R,R}^\theta$ is
invertible. Under the intervention $\operatorname{do}(X_j=u)$, the
non-degenerate coordinates therefore follow
\[
    P_{j,u}^\theta
    =
    \mathcal N(\bg_j^\theta u,\bV_j^\theta).
\]
The covariance does not depend on $u$, while the mean is linear in $u$.

For two Gaussian distributions, the KL divergence gives
\begin{align}
    \KL(P_{j,u}^\theta\|P_{j,u}^\lambda)
    &=
    c_j(\theta,\lambda)
    +
    \frac{u^2}{2}
    (\bg_j^\lambda-\bg_j^\theta)^{\mathsf T}
    (\bV_j^\lambda)^{-1}
    (\bg_j^\lambda-\bg_j^\theta),
    \label{eq:endpoint_kl_quadratic}
\end{align}
where
\[
    c_j(\theta,\lambda)
    :=
    \frac{1}{2}
    \left[
        \tr\!\left\{
            (\bV_j^\lambda)^{-1}\bV_j^\theta
        \right\}
        -(p-1)
        +
        \log\frac{\det(\bV_j^\lambda)}
                  {\det(\bV_j^\theta)}
    \right]
\]
is independent of $u$. The coefficient of $u^2$ in
Eq.~(\ref{eq:endpoint_kl_quadratic}) is nonnegative because
$\bV_j^\lambda$ is positive definite. Hence
\[
    u\longmapsto
    \KL(P_{j,u}^\theta\|P_{j,u}^\lambda)
\]
is convex on $[U_{j,1},U_{j,2}]$ for every pair
$\theta,\lambda\in\Theta$.

Suppose first that $U_{j,1}<U_{j,2}$. Every
$u\in[U_{j,1},U_{j,2}]$ can be written as
\[
    u
    =
    \rho_j(u)U_{j,1}
    +
    \bigl(1-\rho_j(u)\bigr)U_{j,2},
    \qquad
    \rho_j(u)
    :=
    \frac{U_{j,2}-u}{U_{j,2}-U_{j,1}}.
\]
By convexity,
\begin{align}
    \KL(P_{j,u}^\theta\|P_{j,u}^\lambda)
    \leq{}&
    \rho_j(u)
    \KL(P_{j,U_{j,1}}^\theta\|P_{j,U_{j,1}}^\lambda)
    \nonumber\\
    &+
    \bigl(1-\rho_j(u)\bigr)
    \KL(P_{j,U_{j,2}}^\theta\|P_{j,U_{j,2}}^\lambda).
    \label{eq:endpoint_convexity}
\end{align}
If $U_{j,1}=U_{j,2}$, there is only one admissible intervention value and
the conclusion is immediate.

Now take an arbitrary probability measure
$w\in\mathcal P(\cA_\infty)$. We construct a new measure
$\widetilde w$ supported on $\cA$ by retaining the mass of the passive
action and transferring the mass of every interior intervention to its two
endpoints. Specifically, let $w_j$ be the restriction of $w$ to interventions
on node $j$, identified with a measure on $[U_{j,1},U_{j,2}]$, and set
\begin{align*}
    \widetilde w(\{0\})
    &:=
    w(\{0\}),\\
    \widetilde w(\{(j,U_{j,1})\})
    &:=
    \int_{U_{j,1}}^{U_{j,2}}
        \rho_j(u)\,w_j(\mathrm du),\\
    \widetilde w(\{(j,U_{j,2})\})
    &:=
    \int_{U_{j,1}}^{U_{j,2}}
        \bigl(1-\rho_j(u)\bigr)\,w_j(\mathrm du).
\end{align*}
The transferred masses are nonnegative and preserve the total mass assigned
to interventions on each node. Thus, $\widetilde w$ is a probability measure
supported on the finite set $\cA$.

Integrating Eq.~(\ref{eq:endpoint_convexity}) with respect to $w_j$ and
summing over all intervention targets shows that, for every
$\lambda\in\Theta$,
\begin{equation}
    \int_{\cA_\infty}
        \KL(P_a^\theta\|P_a^\lambda)\,w(\mathrm da)
    \leq
    \sum_{a\in\cA}
        \widetilde w(a)
        \KL(P_a^\theta\|P_a^\lambda).
    \label{eq:endpoint_design_domination}
\end{equation}
Importantly, the construction of $\widetilde w$ depends only on $w$ and the
intervention intervals, not on the alternative parameter $\lambda$.
Therefore, taking the infimum over
$\lambda\in\Alt_r(\theta)$ preserves the inequality:
\[
    \inf_{\lambda\in\Alt_r(\theta)}
    \int_{\cA_\infty}
        \KL(P_a^\theta\|P_a^\lambda)\,w(\mathrm da)
    \leq
    \inf_{\lambda\in\Alt_r(\theta)}
    \sum_{a\in\cA}
        \widetilde w(a)
        \KL(P_a^\theta\|P_a^\lambda).
\]
Hence every design over the continuous action space is weakly dominated by
an endpoint-supported design. Since $\cA\subseteq\cA_\infty$, the reverse
inequality between the corresponding optimal values is immediate, and thus
\[
    C_r(\theta)
    =
    \sup_{\boldsymbol{\alpha}\in\Delta(\cA)}
    \inf_{\lambda\in\Alt_r(\theta)}
    \sum_{a\in\cA}
        \alpha_a\KL(P_a^\theta\|P_a^\lambda).
\]

Finally, $\Delta(\cA)$ is compact, $\Alt_r(\theta)$ is compact, and the
action-wise KL divergences are continuous in their parameters. Therefore,
the displayed max--min objective attains its supremum on
$\Delta(\cA)$. An optimal design supported only on the passive action and the
two endpoint interventions for each node consequently exists.
\end{proof}

\section{Proof of Theorem~\ref{thm:lower_bound}}
\label{app:lower_bound}

We first state the change-of-measure inequality used in the proof. Its
finite-action version is Lemma~1 of
\citet{kaufmann2016complexity}. The following formulation extends it to a
general measurable action space by replacing the expected action counts with
an expected occupation measure.

\begin{lemma}[Change of measure for adaptive interventions]
\label{lem:change_of_measure}
Fix two parameters $\theta,\lambda\in\Theta$ and consider any adaptive
intervention policy. Let $\tau$ be an $(\cF_t)$-stopping time such that
$\bbE_\theta[\tau]<\infty$, and let $E\in\cF_\tau$. Define the expected
occupation measure under $\theta$ by
\[
    \Gamma_\tau^\theta(B)
    :=
    \bbE_\theta\!\left[
        \sum_{t=1}^{\tau}\mathbf{1}\{a_t\in B\}
    \right],
    \qquad
    B\subseteq\cA_\infty.
\]
Then
\begin{equation}
    \int_{\cA_\infty}
        \KL(P_a^\theta\|P_a^\lambda)\,
        \Gamma_\tau^\theta(\mathrm da)
    \geq
    \kl\!\left(
        \Pp_\theta(E),\Pp_\lambda(E)
    \right).
    \label{eq:change_of_measure}
\end{equation}
For a finite action space, the left-hand side reduces to
\[
    \sum_{a\in\cA}
        \bbE_\theta[N_\tau(a)]
        \KL(P_a^\theta\|P_a^\lambda).
\]
\end{lemma}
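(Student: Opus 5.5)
We follow the standard proof of Lemma~1 of \citet{kaufmann2016complexity}: a log-likelihood-ratio identity, Wald's identity, and the data-processing inequality. The general action space only requires replacing action counts by the occupation measure $\Gamma_\tau^\theta$. Throughout, write $p_a^\theta$ for the density of $P_a^\theta$ with respect to Lebesgue measure on the non-degenerate coordinates, and note that the policy kernels $\pi_t(\cdot\mid\cF_{t-1})$ are the same under $\theta$ and $\lambda$.

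\textbf{Step 1 (likelihood ratio).} Define
\[
L_t:=\sum_{s=1}^t\log\frac{p_{a_s}^\theta(\bX_s)}{p_{a_s}^\lambda(\bX_s)} .
\]
The policy terms cancel in the ratio of joint densities of $(a_1,\bX_1,\ldots,a_t,\bX_t)$. Hence $L_\tau$ is the log Radon--Nikodym derivative of $\Pp_\theta$ with respect to $\Pp_\lambda$ restricted to $\cF_\tau$ on $\{\tau<\infty\}$.

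We need these laws to be mutually absolutely continuous. This holds because, for each $a$, the two Gaussians are non-degenerate on the same coordinates: the intervened coordinate is fixed at $u$ under both parameters.

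\textbf{Step 2 (Wald's identity).} Set $g(a):=\KL(P_a^\theta\|P_a^\lambda)$. Then $M_t:=L_t-\sum_{s\le t}g(a_s)$ is an $\Pp_\theta$-martingale, since conditionally on $\cF_{s-1}$ and $a_s$, the increment has mean $g(a_s)$.

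To apply optional stopping with $\bbE_\theta[\tau]<\infty$, we need
\[
\sup_{a,s}\bbE_\theta\bigl[\,|\log(p_{a}^\theta/p_{a}^\lambda)(\bX_s)|\;\big|\;\cF_{s-1},a_s=a\bigr]<\infty .
\]
This follows from Assumption~\ref{ass:parameter_class}: the means are bounded via $C_{\max}$ and $A_{\max}$, and the covariances have eigenvalues bounded above and away from zero on the compact $\Theta$. Consequently the log-ratio is a quadratic form in $\bX_s$ with uniformly bounded coefficients, so its conditional absolute moment is uniformly bounded.

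Optional stopping then gives
\[
\bbE_\theta[L_\tau]=\bbE_\theta\Bigl[\sum_{t\le\tau}g(a_t)\Bigr]=\int g\,\mathrm d\Gamma_\tau^\theta .
\]
The last equality is the definition of $\Gamma_\tau^\theta$ extended from indicators to nonnegative measurable $g$ by monotone class. Measurability of $a\mapsto g(a)$ holds because $g$ is continuous in $u$ for each target $j$.

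\textbf{Step 3 (data processing).} $\bbE_\theta[L_\tau]$ equals $\KL(\Pp_\theta|_{\cF_\tau}\|\Pp_\lambda|_{\cF_\tau})$. Apply the data-processing inequality to the measurable map $\mathbf 1_E$. This yields $\kl(\Pp_\theta(E),\Pp_\lambda(E))$ as a lower bound, which is \eqref{eq:change_of_measure}.

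The finite-action reduction is immediate: $\Gamma_\tau^\theta(\{a\})=\bbE_\theta[N_\tau(a)]$.

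\textbf{Main obstacle.} The subtle step is that $\tau$ may be infinite with positive probability under $\lambda$, even though it is finite $\Pp_\theta$-a.s. We handle this as in Garivier--Kaufmann. Work on the trace $\sigma$-field $\cF_\tau$ and apply data processing to $\Pp_\theta|_{\cF_\tau}$ versus $\Pp_\lambda|_{\cF_\tau}$. Alternatively, apply Steps~1--3 to the bounded stopping time $\tau\wedge T$ and the event $E\cap\{\tau\le T\}$, then let $T\to\infty$. The left side is monotone in $T$, and $\kl$ is continuous in its arguments, since $\Pp_\theta(\tau\le T)\to1$. We would adopt the truncation route because it avoids measure-theoretic subtleties with infinite horizons.
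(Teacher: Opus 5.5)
Your argument follows the paper's proof: the log-likelihood ratio with the policy terms cancelling, Wald's identity giving $\bbE_\theta[L_\tau]=\int \KL(P_a^\theta\|P_a^\lambda)\,\Gamma_\tau^\theta(\mathrm da)$, then data processing through $\mathbf 1_E$. Your integrability condition for optional stopping (uniformly bounded conditional absolute moments of the log-ratio) is the hypothesis Wald's identity actually needs. It is sharper than the paper's appeal to bounded KL alone.

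The truncation route you say you would adopt does not close as written, however. When you let $T\to\infty$ in $\kl(\Pp_\theta(E\cap\{\tau\le T\}),\Pp_\lambda(E\cap\{\tau\le T\}))$, the fact $\Pp_\theta(\tau\le T)\to1$ only handles the first argument. The second argument converges to $\Pp_\lambda(E\cap\{\tau<\infty\})$, not to $\Pp_\lambda(E)$. Since $\kl(x,\cdot)$ is increasing on $[x,1]$, you cannot pass to $\kl(\Pp_\theta(E),\Pp_\lambda(E))$ when $\Pp_\lambda(E\cap\{\tau=\infty\})>0$. For example, if $\Pp_\lambda(E\cap\{\tau<\infty\})=\Pp_\theta(E)$, your limit yields only the trivial bound $0$.

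The fix is your first option, which is what the paper does. On $\cF_\tau$ you need only $\Pp_\theta\ll\Pp_\lambda$, not mutual absolute continuity. For $A\in\cF_\tau$, the fact that $\tau<\infty$ $\Pp_\theta$-a.s. gives $\Pp_\theta(A)=\sum_n\bbE_\lambda[e^{L_n}\mathbf 1_{A\cap\{\tau=n\}}]=\bbE_\lambda[e^{L_\tau}\mathbf 1_{A\cap\{\tau<\infty\}}]$. Hence $\KL(\Pp_\theta|_{\cF_\tau}\|\Pp_\lambda|_{\cF_\tau})=\bbE_\theta[L_\tau]$, and data processing yields the inequality for every $E\in\cF_\tau$, including events that charge $\{\tau=\infty\}$ under $\lambda$. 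Your truncation argument does suffice for events $E\subseteq\{\tau<\infty\}$, which is the only case used in the proof of Theorem~\ref{thm:lower_bound}.
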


\begin{proof}
Let $p_a^\theta$ and $p_a^\lambda$ denote the densities of the observations
under action $a$. The log-likelihood ratio of the history up to time $t$ is
\[
    L_t(\theta,\lambda)
    :=
    \sum_{s=1}^t
    \log
    \frac{p_{a_s}^\theta(\bX_s)}
         {p_{a_s}^\lambda(\bX_s)}.
\]
Although the actions are selected adaptively, the intervention policy is the
same under $\theta$ and $\lambda$. Conditional on the past, the probability
with which the policy selects $a_s$ therefore appears in both likelihoods and
cancels from their ratio. Consequently, $L_\tau(\theta,\lambda)$ is the
log-likelihood ratio between the stopped histories under $\theta$ and
$\lambda$.

Taking conditional expectations under $\theta$ gives
\begin{align}
    \bbE_\theta[L_\tau(\theta,\lambda)]
    &=
    \bbE_\theta\!\left[
        \sum_{s=1}^{\tau}
        \bbE_\theta\!\left[
            \left.
            \log
            \frac{p_{a_s}^\theta(\bX_s)}
                 {p_{a_s}^\lambda(\bX_s)}
            \right|\cF_{s-1},a_s
        \right]
    \right]
    \nonumber\\
    &=
    \bbE_\theta\!\left[
        \sum_{s=1}^{\tau}
        \KL(P_{a_s}^\theta\|P_{a_s}^\lambda)
    \right]
    \nonumber\\
    &=
    \int_{\cA_\infty}
        \KL(P_a^\theta\|P_a^\lambda)\,
        \Gamma_\tau^\theta(\mathrm da).
    \label{eq:expected_stopped_likelihood_ratio}
\end{align}
The interchange is valid because $\bbE_\theta[\tau]<\infty$ and the
action-wise KL divergences are uniformly bounded over the compact parameter
and intervention spaces.

The KL divergence between the distributions of the complete stopped histories
equals $\bbE_\theta[L_\tau(\theta,\lambda)]$. Mapping a stopped history to the
binary variable $\mathbf{1}\{E\}$ and applying the data-processing inequality
for KL divergence yields
\[
    \bbE_\theta[L_\tau(\theta,\lambda)]
    \geq
    \kl\!\left(
        \Pp_\theta(E),\Pp_\lambda(E)
    \right).
\]
Combining this inequality with
Eq.~(\ref{eq:expected_stopped_likelihood_ratio}) proves the result.
\end{proof}

We now apply Lemma~\ref{lem:change_of_measure} to the fixed-confidence causal
learning problem.

\begin{proof}[Proof of Theorem~\ref{thm:lower_bound}]
If $\bbE_{\theta^\star}[\tau_\delta]=\infty$, the claimed lower bound holds
trivially. We may therefore assume
$\bbE_{\theta^\star}[\tau_\delta]<\infty$, which also implies that
$\tau_\delta<\infty$ almost surely under $\theta^\star$.

Define the event that the algorithm returns an $\epsilon$-accurate estimate
of $\theta^\star$ by
\begin{equation}
    E_{\theta^\star}
    :=
    \left\{
        \tau_\delta<\infty,\ 
        \widehat D_{\tau_\delta}=D^\star,\ 
        \|\widehat{\bA}_{\tau_\delta}-\bA^\star\|_{\max}<\epsilon
    \right\}.
    \label{eq:true_success_event}
\end{equation}
Since the algorithm is $(\epsilon,\delta)$-correct uniformly over $\Theta$,
\[
    \Pp_{\theta^\star}(E_{\theta^\star})\geq1-\delta.
\]

Fix an arbitrary
$\lambda=(D^\lambda,\bA^\lambda,\bSigma^\lambda)
\in\Alt_{2\epsilon}(\theta^\star)$.
Under $\lambda$, define its corresponding success event
\[
    E_\lambda
    :=
    \left\{
        \tau_\delta<\infty,\ 
        \widehat D_{\tau_\delta}=D^\lambda,\ 
        \|\widehat{\bA}_{\tau_\delta}-\bA^\lambda\|_{\max}<\epsilon
    \right\}.
\]
Uniform correctness also gives
$\Pp_\lambda(E_\lambda)\geq1-\delta$.

The events $E_{\theta^\star}$ and $E_\lambda$ are disjoint. Indeed, if
$D^\lambda\neq D^\star$, they require the algorithm to return two different
graphs. Otherwise, the definition of
$\Alt_{2\epsilon}(\theta^\star)$ gives
$\|\bA^\lambda-\bA^\star\|_{\max}\geq2\epsilon$. An estimate belonging to both
events would satisfy
\[
    \|\bA^\lambda-\bA^\star\|_{\max}
    \leq
    \|\bA^\lambda-\widehat{\bA}_{\tau_\delta}\|_{\max}
    +
    \|\widehat{\bA}_{\tau_\delta}-\bA^\star\|_{\max}
    <2\epsilon,
\]
which is a contradiction. Therefore,
\[
    \Pp_\lambda(E_{\theta^\star})
    \leq
    1-\Pp_\lambda(E_\lambda)
    \leq\delta.
\]
This disjointness argument explains the appearance of the radius
$2\epsilon$ in the lower bound.

Applying Lemma~\ref{lem:change_of_measure} with
$\theta=\theta^\star$, stopping time $\tau_\delta$, and event
$E_{\theta^\star}$ gives, for every
$\lambda\in\Alt_{2\epsilon}(\theta^\star)$,
\begin{equation}
    \int_{\cA_\infty}
        \KL(P_a^{\theta^\star}\|P_a^\lambda)\,
        \Gamma_{\tau_\delta}^{\theta^\star}(\mathrm da)
    \geq
    \kl(1-\delta,\delta).
    \label{eq:change_measure_lower_bound}
\end{equation}
Here we used that binary relative entropy is increasing in its first argument
and decreasing in its second argument whenever
the first argument exceeds the second, together with
$\delta<1/2$.

Normalize the expected occupation measure by defining
\[
    \overline w_\delta(B)
    :=
    \frac{
        \Gamma_{\tau_\delta}^{\theta^\star}(B)
    }{
        \bbE_{\theta^\star}[\tau_\delta]
    }.
\]
Because exactly one action is selected at every round,
$\Gamma_{\tau_\delta}^{\theta^\star}(\cA_\infty)
=\bbE_{\theta^\star}[\tau_\delta]$, so
$\overline w_\delta\in\mathcal P(\cA_\infty)$. Dividing
Eq.~(\ref{eq:change_measure_lower_bound}) by
$\bbE_{\theta^\star}[\tau_\delta]$ and taking the infimum over
$\lambda\in\Alt_{2\epsilon}(\theta^\star)$ yields
\begin{align}
    \kl(1-\delta,\delta)
    &\leq
    \bbE_{\theta^\star}[\tau_\delta]
    \inf_{\lambda\in\Alt_{2\epsilon}(\theta^\star)}
    \int_{\cA_\infty}
        \KL(P_a^{\theta^\star}\|P_a^\lambda)\,
        \overline w_\delta(\mathrm da)
    \nonumber\\
    &\leq
    \bbE_{\theta^\star}[\tau_\delta]\,
    C_{2\epsilon}(\theta^\star),
    \label{eq:information_complexity_lower_step}
\end{align}
where the final inequality follows from the definition of
$C_{2\epsilon}(\theta^\star)$ as the supremum over all intervention designs.
Rearranging gives
\[
    \bbE_{\theta^\star}[\tau_\delta]
    \geq
    \frac{\kl(1-\delta,\delta)}
         {C_{2\epsilon}(\theta^\star)}.
\]

Finally,
\[
    \kl(1-\delta,\delta)
    =
    (1-2\delta)
    \log\frac{1-\delta}{\delta},
\]
and hence
\[
    \lim_{\delta\downarrow0}
    \frac{\kl(1-\delta,\delta)}
         {\log(1/\delta)}
    =1.
\]
Dividing the finite-confidence lower bound by $\log(1/\delta)$ and taking the
limit inferior proves
\[
    \liminf_{\delta\downarrow0}
    \frac{\bbE_{\theta^\star}[\tau_\delta]}
         {\log(1/\delta)}
    \geq
    \frac{1}{C_{2\epsilon}(\theta^\star)}.
\]
\end{proof}

\section{Proof of Lemma~\ref{lem:kl_concentration}}
\label{app:concentration}

We establish several auxiliary results before proving
Lemma~\ref{lem:kl_concentration}. For each $a\in\cA$, let $J(a)$ denote the
random coordinates observed under action $a$: $J(0)=[p]$, while
$J(a)=[p]\setminus\{j\}$ if $a$ intervenes on node $j$. Write
$d_a:=|J(a)|$ and
\[
    P_a^\theta
    =
    \mathcal N(\boldsymbol\mu_a(\theta),\bV_a(\theta))
    \quad\text{on }\mathbb R^{d_a}.
\]

\subsection{Uniform bounds for the interventional Gaussian family}

Define the computable constants
\begin{equation}
    G_p
    :=
    \sum_{k=0}^{p-1}(pA_{\max})^k,
    \qquad
    B_\mu
    :=
    G_p\sqrt{p}\,A_{\max}C_{\max},
    \label{eq:uniform_gaussian_constants}
\end{equation}
and
\begin{equation}
    v_-
    :=
    \frac{\sigma_{\min}^2}{(1+pA_{\max})^2},
    \qquad
    v_+
    :=
    G_p^2\sigma_{\max}^2.
    \label{eq:uniform_covariance_constants}
\end{equation}

\begin{lemma}[Uniform Gaussian bounds]
\label{lem:uniform_gaussian_bounds}
Under Assumption~\ref{ass:parameter_class}, for every
$\theta\in\Theta$ and $a\in\cA$,
\begin{equation}
    \|\boldsymbol\mu_a(\theta)\|_2
    \leq B_\mu,
    \qquad
    v_-\bI_{d_a}
    \preceq
    \bV_a(\theta)
    \preceq
    v_+\bI_{d_a}.
    \label{eq:uniform_mean_covariance_bounds}
\end{equation}
Consequently, the action-wise KL divergences are uniformly bounded:
\[
    \sup_{a\in\cA}
    \sup_{\theta,\lambda\in\Theta}
    \KL(P_a^\theta\|P_a^\lambda)
    \leq D_{\max},
\]
where one valid computable choice is
\begin{equation}
    D_{\max}
    :=
    \frac{1}{2}
    \left[
        p\left(
            \frac{v_+}{v_-}
            -1
            +\log\frac{v_+}{v_-}
        \right)
        +
        \frac{4B_\mu^2}{v_-}
    \right].
    \label{eq:uniform_kl_bound}
\end{equation}
\end{lemma}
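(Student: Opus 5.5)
The plan is to bound the total-effect matrix $(\bI-\bA)^{-1}$ and its restricted analogue $(\bI-\bA_{R,R})^{-1}$ in operator norm. Then I read off the mean and covariance bounds from Eq.~(\ref{eq:interventional_distribution}), and finally plug these into the closed-form Gaussian KL divergence.

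\textbf{Step 1: Neumann series.} Since $D$ is a DAG, $\bA$ is nilpotent after a permutation to topological order, so $\bA^p=0$ and
\[
(\bI-\bA)^{-1}=\sum_{k=0}^{p-1}\bA^k .
\]
Every entry satisfies $|A_{jk}|\le A_{\max}$, so $\|\bA\|_{op}\le\|\bA\|_F\le pA_{\max}$. Hence $\|(\bI-\bA)^{-1}\|_{op}\le G_p$. The same bound holds for $\bA_{R,R}$, which is the weighted adjacency matrix of the induced sub-DAG, so $\|(\bI-\bA_{R,R})^{-1}\|_{op}\le G_p$.

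\textbf{Step 2: Mean bound.} The mean is $\bg_j u=(\bI-\bA_{R,R})^{-1}\bA_{R,j}u$. Its norm is bounded by $G_p\cdot\|\bA_{R,j}\|_2\cdot|u|\le G_p\sqrt p A_{\max}C_{\max}=B_\mu$. The passive action has zero mean.

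\textbf{Step 3: Covariance bounds.} Write $\bV=\bM\bSigma\bM^{\mathsf T}$ with $\bM$ the relevant inverse.
\begin{itemize}
\item Upper bound: $\lambda_{\max}(\bV)\le\sigma_{\max}^2\|\bM\|_{op}^2\le G_p^2\sigma_{\max}^2=v_+$.
\item Lower bound: $\lambda_{\min}(\bV)\ge\sigma_{\min}^2\,\sigma_{\min}(\bM)^2=\sigma_{\min}^2/\|\bM^{-1}\|_{op}^2$. Here $\bM^{-1}=\bI-\bA$ (or $\bI-\bA_{R,R}$), so $\|\bM^{-1}\|_{op}\le 1+pA_{\max}$, giving $v_-$.
\end{itemize}

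\textbf{Step 4: KL bound.} Use the formula
\[
\KL=\tfrac12\bigl[\tr(\bV_\lambda^{-1}\bV_\theta)-d+\log\det\bV_\lambda-\log\det\bV_\theta+(\boldsymbol\mu_\lambda-\boldsymbol\mu_\theta)^{\mathsf T}\bV_\lambda^{-1}(\boldsymbol\mu_\lambda-\boldsymbol\mu_\theta)\bigr],
\]
with $d\le p$, and bound the terms one at a time.
\begin{itemize}
\item Trace term: $\le d\,v_+/v_-$.
\item Log-determinant difference: $\le d\log(v_+/v_-)$.
\item Quadratic term: $\le \|\boldsymbol\mu_\lambda-\boldsymbol\mu_\theta\|_2^2/v_-\le(2B_\mu)^2/v_-$.
\end{itemize}
The eigenvalue bounds are uniform over $\Theta$, and the trace and log-determinant terms (together with $-d$) are nonnegative when $v_+\ge v_-$, so $d$ may be replaced by $p$. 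This gives $D_{\max}$.

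Nothing here is a serious obstacle. The only step needing care is the covariance lower bound: it must go through the inverse of $\bM$ rather than through $\bM$ itself, and one must check that the sub-DAG restriction preserves both norm bounds.
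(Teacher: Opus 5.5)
Your proposal is correct and follows the paper's proof step for step. The steps match: the nilpotent Neumann series with $\|\bA\|_{op}\le\|\bA\|_F\le pA_{\max}$ giving $G_p$, the mean bound via $\|\bA_{R,j}\|_2\le\sqrt p A_{\max}$, the covariance lower bound through $s_{\min}(\bM)=1/\|\bM^{-1}\|_{op}$ with $\|\bI-\bA\|_{op}\le 1+pA_{\max}$, and the term-by-term bound on the Gaussian KL formula. Your observation that $v_+\ge v_-$ makes the bound nondecreasing in $d$, so $d\le p$ may be replaced by $p$, explicitly justifies a step the paper leaves implicit.
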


\begin{proof}
Because $\bA$ represents a DAG, it becomes strictly triangular after a
simultaneous permutation of its rows and columns. Hence $\bA^p=0$ and
\[
    (\bI-\bA)^{-1}
    =
    \sum_{k=0}^{p-1}\bA^k.
\]
Since $\|\bA\|_{\mathrm{op}}\leq\|\bA\|_{\mathrm F}\leq pA_{\max}$,
we obtain
$\|(\bI-\bA)^{-1}\|_{\mathrm{op}}\leq G_p$. The same argument applies
to every principal submatrix $\bA_{R,R}$.

For an intervention $a=(j,U_{j,s})$, Eq.~(\ref{eq:interventional_distribution})
gives
\[
    \boldsymbol\mu_a(\theta)
    =
    (\bI-\bA_{R,R})^{-1}\bA_{R,j}U_{j,s}.
\]
Using $\|\bA_{R,j}\|_2\leq\sqrt p A_{\max}$ and
$|U_{j,s}|\leq C_{\max}$ proves the mean bound. The observational mean is zero
and therefore satisfies the same bound.

For the upper covariance bound, use
$\|\bSigma_{R,R}\|_{\mathrm{op}}\leq\sigma_{\max}^2$ to obtain
\[
    \lambda_{\max}(\bV_a(\theta))
    \leq
    \|(\bI-\bA_{R,R})^{-1}\|_{\mathrm{op}}^2
    \sigma_{\max}^2
    \leq v_+.
\]
For the lower bound,
\[
    \lambda_{\min}(\bV_a(\theta))
    \geq
    \sigma_{\min}^2
    s_{\min}\!\left(
        (\bI-\bA_{R,R})^{-1}
    \right)^2
    =
    \frac{\sigma_{\min}^2}
         {\|\bI-\bA_{R,R}\|_{\mathrm{op}}^2}
    \geq v_-.
\]
The observational covariance is handled identically.

Finally, for $P=\mathcal N(\boldsymbol\mu,\bV)$ and
$Q=\mathcal N(\boldsymbol\nu,\bW)$ in dimension $d\leq p$,
\[
    \KL(P\|Q)
    =
    \frac12
    \left[
        \tr(\bW^{-1}\bV)-d
        +\log\frac{\det\bW}{\det\bV}
        +(\boldsymbol\nu-\boldsymbol\mu)^{\mathsf T}
         \bW^{-1}
         (\boldsymbol\nu-\boldsymbol\mu)
    \right].
\]
The bounds in Eq.~(\ref{eq:uniform_mean_covariance_bounds}) imply
$\tr(\bW^{-1}\bV)\leq d\,v_+/v_-$,
$\log(\det\bW/\det\bV)\leq d\log(v_+/v_-)$, and
\[
    (\boldsymbol\nu-\boldsymbol\mu)^{\mathsf T}
    \bW^{-1}
    (\boldsymbol\nu-\boldsymbol\mu)
    \leq
    \frac{4B_\mu^2}{v_-}.
\]
This proves Eq.~(\ref{eq:uniform_kl_bound}).
\end{proof}

\subsection{A uniform Bhattacharyya--KL comparison}

For distributions $P$ and $Q$ with densities $p$ and $q$, define their
Bhattacharyya distance by
\[
    \mathsf B(P,Q)
    :=
    -\log\int\sqrt{pq}.
\]
For Gaussian distributions
$P=\mathcal N(\boldsymbol\mu_0,\bV_0)$ and
$Q=\mathcal N(\boldsymbol\mu_1,\bV_1)$, direct Gaussian integration gives
\begin{equation}
    \mathsf B(P,Q)
    =
    \frac18
    \boldsymbol\Delta^{\mathsf T}
    \overline{\bV}^{-1}
    \boldsymbol\Delta
    +
    \frac12
    \log
    \frac{\det\overline{\bV}}
         {\sqrt{\det\bV_0\det\bV_1}},
    \label{eq:gaussian_bhattacharyya}
\end{equation}
where
$\boldsymbol\Delta=\boldsymbol\mu_1-\boldsymbol\mu_0$ and
$\overline{\bV}=(\bV_0+\bV_1)/2$.

Let $\kappa_V:=v_+/v_-$ and define
\[
    g(r)
    :=
    \frac12(r-1-\log r),
    \qquad
    h(r)
    :=
    \frac12\log\frac{1+r}{2\sqrt r}.
\]
At $r=1$, define the continuous extension
$h(1)/g(1):=1/4$. Set
\begin{equation}
    \kappa_\mu
    :=
    \frac{v_-}{4v_+},
    \qquad
    \kappa_\Sigma
    :=
    \min_{r\in[\kappa_V^{-1},\kappa_V]}
        \frac{h(r)}{g(r)},
    \qquad
    \kappa_B
    :=
    \min\{\kappa_\mu,\kappa_\Sigma\}.
    \label{eq:kappa_b_definition}
\end{equation}
The minimization defining $\kappa_\Sigma$ is one-dimensional and deterministic.
Moreover, continuity and strict positivity away from $r=1$ imply
$\kappa_B>0$.

\begin{lemma}[Bhattacharyya distance dominates reverse KL]
\label{lem:bhattacharyya_reverse_kl}
For every $a\in\cA$ and $\theta,\lambda\in\Theta$,
\begin{equation}
    \mathsf B(P_a^\theta,P_a^\lambda)
    \geq
    \kappa_B
    \KL(P_a^\lambda\|P_a^\theta).
    \label{eq:bhattacharyya_reverse_kl}
\end{equation}
\end{lemma}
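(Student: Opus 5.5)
The plan is to compare the two divergences term by term, after splitting each into a mean part and a covariance part. Fix $a\in\cA$ and $\theta,\lambda\in\Theta$, and write $P_a^\theta=\mathcal N(\boldsymbol\mu_0,\bV_0)$ and $P_a^\lambda=\mathcal N(\boldsymbol\mu_1,\bV_1)$ on $\mathbb R^{d_a}$, with $\boldsymbol\Delta=\boldsymbol\mu_1-\boldsymbol\mu_0$. By the Gaussian KL formula in the proof of Lemma~\ref{lem:uniform_gaussian_bounds}, the reverse divergence splits as
\[
\KL(P_a^\lambda\|P_a^\theta)=\tfrac12\boldsymbol\Delta^{\mathsf T}\bV_0^{-1}\boldsymbol\Delta+\tfrac12\bigl[\tr(\bV_0^{-1}\bV_1)-d_a+\log\tfrac{\det\bV_0}{\det\bV_1}\bigr].
\]
Eq.~(\ref{eq:gaussian_bhattacharyya}) splits $\mathsf B$ in the same way. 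It then suffices to show that each part of $\mathsf B$ dominates the corresponding part of the KL divergence with constant $\kappa_\mu$ (mean part) and $\kappa_\Sigma$ (covariance part). Since both parts are nonnegative, the constant $\kappa_B=\min\{\kappa_\mu,\kappa_\Sigma\}$ then works for the sum.

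\emph{Mean part.} By Lemma~\ref{lem:uniform_gaussian_bounds}, $\overline{\bV}=(\bV_0+\bV_1)/2\preceq v_+\bI$ and $\bV_0\succeq v_-\bI$. Hence
\[
\tfrac18\boldsymbol\Delta^{\mathsf T}\overline{\bV}^{-1}\boldsymbol\Delta\ge \frac{\|\boldsymbol\Delta\|_2^2}{8v_+},
\qquad
\tfrac12\boldsymbol\Delta^{\mathsf T}\bV_0^{-1}\boldsymbol\Delta\le \frac{\|\boldsymbol\Delta\|_2^2}{2v_-}.
\]
The ratio of these two bounds is exactly $v_-/(4v_+)=\kappa_\mu$, which gives the mean comparison.

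\emph{Covariance part.} Let $r_1,\dots,r_{d_a}>0$ be the eigenvalues of $\bM:=\bV_0^{-1/2}\bV_1\bV_0^{-1/2}$. The covariance part of the KL divergence is then $\sum_i g(r_i)$. For the Bhattacharyya term,
\[
\frac{\det\overline{\bV}}{\sqrt{\det\bV_0\det\bV_1}}=\frac{\det\bigl((\bI+\bM)/2\bigr)}{\sqrt{\det\bM}}=\prod_i\frac{1+r_i}{2\sqrt{r_i}},
\]
so its covariance part equals $\sum_i h(r_i)$. By the Rayleigh quotient and Lemma~\ref{lem:uniform_gaussian_bounds}, each $r_i$ lies in $[v_-/v_+,\,v_+/v_-]=[\kappa_V^{-1},\kappa_V]$. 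By the definition of $\kappa_\Sigma$, this gives $h(r_i)\ge\kappa_\Sigma g(r_i)$ for each $i$, and summing over $i$ yields the covariance comparison.

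The main point to check is that $\kappa_\Sigma>0$, i.e.\ that the ratio $h/g$ is well defined and positive on the compact interval $[\kappa_V^{-1},\kappa_V]$. Away from $r=1$, both $g$ and $h$ are strictly positive: $g(r)>0$ since $r-1>\log r$, and $h(r)>0$ by AM--GM. At $r=1+x$, a second-order expansion gives $g\approx x^2/4$ and $h\approx x^2/16$, so $h/g\to1/4$. This matches the stated continuous extension and makes $h/g$ continuous and positive on the whole compact interval, so its minimum $\kappa_\Sigma$ is attained and positive.
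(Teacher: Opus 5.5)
Your proof is correct and follows the paper's argument. You split $\mathsf B$ and the reverse KL into mean and covariance parts. You bound the mean parts via $\overline{\bV}\preceq v_+\bI$ and $\bV_0\succeq v_-\bI$ to obtain $\kappa_\mu$, and you diagonalize $\bV_0^{-1/2}\bV_1\bV_0^{-1/2}$ to get $\sum_i h(r_i)\geq\kappa_\Sigma\sum_i g(r_i)$. You also explicitly check the determinant factorization and the limit $h/g\to1/4$ at $r=1$, which the paper only asserts.
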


\begin{proof}
We compare the mean and covariance parts separately. From
$\overline{\bV}\preceq v_+\bI$,
\[
    \mathsf B_\mu(P,Q)
    \geq
    \frac{\|\boldsymbol\Delta\|_2^2}{8v_+}.
\]
Meanwhile,
\[
    \KL_\mu(Q\|P)
    =
    \frac12
    \boldsymbol\Delta^{\mathsf T}
    \bV_0^{-1}
    \boldsymbol\Delta
    \leq
    \frac{\|\boldsymbol\Delta\|_2^2}{2v_-}.
\]
Thus,
$\mathsf B_\mu(P,Q)\geq\kappa_\mu\KL_\mu(Q\|P)$.

For the covariance terms, let $r_1,\ldots,r_d$ be the eigenvalues of
$\bV_0^{-1/2}\bV_1\bV_0^{-1/2}$. The uniform covariance bounds imply
$r_i\in[\kappa_V^{-1},\kappa_V]$. Diagonalizing the two covariance expressions
gives
\[
    \KL_\Sigma(Q\|P)
    =
    \sum_{i=1}^d g(r_i),
    \qquad
    \mathsf B_\Sigma(P,Q)
    =
    \sum_{i=1}^d h(r_i).
\]
By the definition of $\kappa_\Sigma$,
$\mathsf B_\Sigma(P,Q)\geq
\kappa_\Sigma\KL_\Sigma(Q\|P)$. Adding the mean and covariance parts and
taking their smaller constant proves the lemma.
\end{proof}

\subsection{A fixed-parameter likelihood-ratio bound}

For a fixed candidate parameter $\theta\in\Theta$, define
\begin{align}
    L_t(\theta)
    &:=
    \ell_t(\theta)-\ell_t(\theta^\star),
    \label{eq:fixed_parameter_likelihood_ratio}\\
    D_t(\theta,\theta^\star)
    &:=
    \sum_{a\in\cA}
        N_t(a)\KL(P_a^\theta\|P_a^{\theta^\star}).
    \label{eq:fixed_parameter_kl}
\end{align}

\begin{lemma}[Fixed-parameter deviation]
\label{lem:fixed_parameter_deviation}
For every deterministic $t\geq1$, fixed $\theta\in\Theta$, and $b,y\geq0$,
\begin{equation}
    \Pp_{\theta^\star}\!\left(
        L_t(\theta)\geq-b,\,
        D_t(\theta,\theta^\star)\geq y
    \right)
    \leq
    \exp\!\left(
        \frac b2-\kappa_B y
    \right).
    \label{eq:fixed_parameter_deviation}
\end{equation}
\end{lemma}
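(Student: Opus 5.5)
The plan is a Chernoff bound using the half-power likelihood ratio, $\exp(L_t(\theta)/2)$. The key object is
\[
    M_t:=\exp\!\Bigl(\tfrac12 L_t(\theta)+\sum_{s=1}^t \mathsf B\bigl(P_{a_s}^\theta,P_{a_s}^{\theta^\star}\bigr)\Bigr).
\]
I will show that $(M_t)$ is a nonnegative martingale under $\Pp_{\theta^\star}$ with $\bbE_{\theta^\star}[M_t]=1$. Conditional on $\cF_{s-1}$, the action $a_s$ is determined by the policy. Since $\theta$ is fixed and not data-dependent, we then have
\[
    \bbE_{\theta^\star}\!\left[\sqrt{p_{a_s}^\theta(\bX_s)/p_{a_s}^{\theta^\star}(\bX_s)}\,\middle|\,\cF_{s-1}\right]
    =\int\sqrt{p_{a_s}^\theta p_{a_s}^{\theta^\star}}
    =\exp\bigl(-\mathsf B(P_{a_s}^\theta,P_{a_s}^{\theta^\star})\bigr).
\]
This is exactly what makes each factor of $M_t$ have conditional mean one. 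Iterating the tower property over $s=1,\dots,t$ gives $\bbE[M_t]=1$. Note that $\sum_{s\le t}\mathsf B(P_{a_s}^\theta,P_{a_s}^{\theta^\star})=\sum_a N_t(a)\,\mathsf B(P_a^\theta,P_a^{\theta^\star})$.

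Next I will lower-bound the accumulated Bhattacharyya sum by the accumulated KL. Apply Lemma~\ref{lem:bhattacharyya_reverse_kl} with the roles chosen so that the reverse KL it produces is $\KL(P_a^\theta\|P_a^{\theta^\star})$. Concretely, take its ``$\theta$'' to be $\theta^\star$ and its ``$\lambda$'' to be $\theta$, and use the symmetry of $\mathsf B$. This gives
\[
    \mathsf B(P_a^\theta,P_a^{\theta^\star})=\mathsf B(P_a^{\theta^\star},P_a^\theta)\ge\kappa_B\KL(P_a^\theta\|P_a^{\theta^\star}).
\]
Summing over $a$ with weights $N_t(a)$ yields $\sum_s \mathsf B\ge\kappa_B D_t(\theta,\theta^\star)$.

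On the event $\{L_t(\theta)\ge -b,\ D_t(\theta,\theta^\star)\ge y\}$ this gives $M_t\ge \exp(-b/2+\kappa_B y)$. Markov's inequality then bounds the probability of the event by $\exp(b/2-\kappa_B y)$.

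The main point needing care is the martingale step under adaptive actions. I must make sure that $a_s$ is $\cF_{s-1}$-measurable, possibly after including the policy's internal randomization in the filtration. I must also check that the conditional density of $\bX_s$ given the past is $p_{a_s}^{\theta^\star}$, so the identity above holds pathwise. Everything else is routine: the integrability is automatic because the variables are nonnegative, and the Bhattacharyya comparison is exactly the previously proved lemma.
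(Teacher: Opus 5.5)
Your proposal is correct and matches the paper's proof: the paper uses the same nonnegative mean-one martingale $\exp\{\tfrac12 L_t(\theta)+\sum_{s\le t}\mathsf B(P_{a_s}^{\theta^\star},P_{a_s}^{\theta})\}$ (the order of arguments in $\mathsf B$ does not matter by symmetry), applies Lemma~\ref{lem:bhattacharyya_reverse_kl} with $\theta^\star$ in the first slot to get $\sum_s\mathsf B\ge\kappa_B D_t(\theta,\theta^\star)$, and finishes with Markov's inequality. Your point that $\theta$ must be fixed and $a_s$ must be $\cF_{s-1}$-measurable is exactly what makes the conditional-mean-one step valid.
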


\begin{proof}
Define
\begin{equation}
    M_t^\theta
    :=
    \exp\!\left\{
        \frac12L_t(\theta)
        +
        \sum_{s=1}^t
        \mathsf B(P_{a_s}^{\theta^\star},P_{a_s}^{\theta})
    \right\}.
    \label{eq:bhattacharyya_martingale}
\end{equation}
Because $a_s$ is selected using only $\cF_{s-1}$, conditional expectation under
$\theta^\star$ gives
\begin{align*}
    &\bbE_{\theta^\star}\!\left[
        \left.
        \exp\!\left\{
            \frac12
            \log
            \frac{p_{a_s}^\theta(\bX_s)}
                 {p_{a_s}^{\theta^\star}(\bX_s)}
            +
            \mathsf B(P_{a_s}^{\theta^\star},P_{a_s}^{\theta})
        \right\}
        \right|\cF_{s-1}
    \right]\\
    &\qquad=
    \exp\!\left\{
        \mathsf B(P_{a_s}^{\theta^\star},P_{a_s}^{\theta})
    \right\}
    \int
        \sqrt{
            p_{a_s}^{\theta^\star}(x)
            p_{a_s}^{\theta}(x)
        }\,
        \mathrm dx
    =1.
\end{align*}
Hence $(M_t^\theta)_{t\geq0}$ is a nonnegative mean-one martingale.

On the event in Eq.~(\ref{eq:fixed_parameter_deviation}),
Lemma~\ref{lem:bhattacharyya_reverse_kl} implies
\[
    M_t^\theta
    \geq
    \exp\{-b/2+\kappa_B y\}.
\]
Markov's inequality and
$\bbE_{\theta^\star}[M_t^\theta]=1$ then give the desired bound.
\end{proof}

\subsection{Covering and Lipschitz bounds}

Define a metric on the induced collection of action distributions by
\begin{equation}
    d_{\mathsf G}(\theta,\lambda)
    :=
    \max_{a\in\cA}
    \left\{
        \|\boldsymbol\mu_a(\theta)
          -\boldsymbol\mu_a(\lambda)\|_2
        +
        \|\bV_a(\theta)-\bV_a(\lambda)\|_{\mathrm F}
    \right\}.
    \label{eq:gaussian_parameter_metric}
\end{equation}
Let $s_a:=d_a(d_a+1)/2$ and
\begin{equation}
    q
    :=
    \sum_{a\in\cA}(d_a+s_a)
    =
    p+\frac{p(p+1)}2
    +
    2p\left[
        (p-1)+\frac{p(p-1)}2
    \right].
    \label{eq:covering_dimension}
\end{equation}

A standard volumetric argument shows that a Euclidean ball of radius $R$ in
$\mathbb R^d$ has an $\xi$-net of cardinality at most
$(1+2R/\xi)^d$: take a maximal $\xi$-separated subset and compare the volumes
of the disjoint balls of radius $\xi/2$ around its elements with the enclosing
ball of radius $R+\xi/2$.

Applying this argument to every mean and covariance component, and retaining
one feasible parameter from each nonempty product cell, gives a
$\zeta$-net of the image of $\Theta$ under $d_{\mathsf G}$ with cardinality at
most
\begin{equation}
    \cN(\zeta)
    :=
    \prod_{a\in\cA}
    \left(
        1+\frac{8B_\mu}{\zeta}
    \right)^{d_a}
    \left(
        1+\frac{8\sqrt{d_a}\,v_+}{\zeta}
    \right)^{s_a},
    \qquad 0<\zeta\leq1.
    \label{eq:covering_number}
\end{equation}
The constant $8$ accounts for selecting feasible representatives from the
nonempty covering cells.

We also require a tail bound for the observed Gaussian vectors.

\begin{lemma}[Uniform Gaussian norm tail]
\label{lem:uniform_gaussian_norm_tail}
For $z>0$, define
\begin{equation}
    R(z)
    :=
    B_\mu
    +
    \sqrt{v_+}\bigl(\sqrt p+\sqrt{2z}\bigr).
    \label{eq:gaussian_truncation_radius}
\end{equation}
Then, for every deterministic $t\geq1$,
\begin{equation}
    \Pp_{\theta^\star}\!\left(
        \max_{1\leq s\leq t}\|\bX_s\|_2>R(z)
    \right)
    \leq
    te^{-z}.
    \label{eq:uniform_gaussian_norm_tail}
\end{equation}
\end{lemma}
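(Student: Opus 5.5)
Conditionally on the action $a_s$, which is $\cF_{s-1}$-measurable, the observation $\bX_s$ is Gaussian with mean $\boldsymbol\mu_{a_s}(\theta^\star)$ and covariance $\bV_{a_s}(\theta^\star)$. The plan is to reduce to a standard Gaussian: write
\[
    \bX_s=\boldsymbol\mu_{a_s}(\theta^\star)+\bV_{a_s}(\theta^\star)^{1/2}\bm\xi_s ,
\]
where, conditionally on $\cF_{s-1}$, $\bm\xi_s\sim\mathcal N(\bm 0,\bI_{d_{a_s}})$. Lemma~\ref{lem:uniform_gaussian_bounds} gives $\|\boldsymbol\mu_{a_s}(\theta^\star)\|_2\le B_\mu$ and $\|\bV_{a_s}(\theta^\star)^{1/2}\|_{\mathrm{op}}\le\sqrt{v_+}$. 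Hence
\[
    \|\bX_s\|_2\le B_\mu+\sqrt{v_+}\,\|\bm\xi_s\|_2 ,
\]
and the event $\|\bX_s\|_2>R(z)$ is contained in $\{\|\bm\xi_s\|_2>\sqrt p+\sqrt{2z}\}$.

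Next, bound the norm of the standard Gaussian. The map $\bm\xi\mapsto\|\bm\xi\|_2$ is $1$-Lipschitz. By Gaussian concentration (Borell--TIS / Tsirelson--Ibragimov--Sudakov), $\Pp(\|\bm\xi\|_2\ge\bbE\|\bm\xi\|_2+\sqrt{2z})\le e^{-z}$. By Jensen, $\bbE\|\bm\xi\|_2\le\sqrt{d_{a_s}}\le\sqrt p$. Therefore
\[
    \Pp_{\theta^\star}\bigl(\|\bm\xi_s\|_2>\sqrt p+\sqrt{2z}\,\big|\,\cF_{s-1}\bigr)\le e^{-z},
\]
uniformly over the conditioning. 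An alternative route, if one prefers to avoid the concentration theorem, is the Laurent--Massart chi-square bound $\Pp(\|\bm\xi\|_2^2\ge d+2\sqrt{dz}+2z)\le e^{-z}$ together with $d+2\sqrt{dz}+2z\le(\sqrt d+\sqrt{2z})^2$.

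Finally, take expectations to remove the conditioning, which gives $\Pp_{\theta^\star}(\|\bX_s\|_2>R(z))\le e^{-z}$ for each $s$. A union bound over $s=1,\dots,t$ then yields the factor $t$. Adaptivity causes no difficulty here, because each per-round bound holds conditionally on any history and $t$ is deterministic.

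The only step needing care is the Gaussian norm tail with the exact constant $\sqrt p+\sqrt{2z}$. This is handled by the Lipschitz concentration argument together with $\bbE\|\bm\xi\|_2\le\sqrt{d}$.
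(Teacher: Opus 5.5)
Your proposal is correct and follows the paper's proof. Both whiten as $\bX_s=\boldsymbol\mu_{a_s}(\theta^\star)+\bV_{a_s}(\theta^\star)^{1/2}\boldsymbol Z_s$, bound $\|\bX_s\|_2\le B_\mu+\sqrt{v_+}\|\boldsymbol Z_s\|_2$ via Lemma~\ref{lem:uniform_gaussian_bounds}, get a conditional per-round tail $e^{-z}$ using $d_{a_s}\le p$, and finish with a union bound over $s\le t$. The paper derives the norm tail from the Laurent--Massart chi-square bound, which is your stated alternative; your primary route through Lipschitz Gaussian concentration and $\bbE\|\boldsymbol\xi\|_2\le\sqrt{d}$ yields the same constant $\sqrt p+\sqrt{2z}$.
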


\begin{proof}
Lemma~1 of \citet{laurent2000adaptive} states that if
$Z\sim\chi_d^2$, then, for every $z>0$,
\[
    \Pp\!\left(
        Z\geq d+2\sqrt{dz}+2z
    \right)
    \leq e^{-z}.
\]
In particular, for $\boldsymbol Z\sim\mathcal N(\boldsymbol0,\bI_d)$,
\[
    \Pp\!\left(
        \|\boldsymbol Z\|_2
        >
        \sqrt d+\sqrt{2z}
    \right)
    \leq e^{-z}.
\]
Conditionally on $\cF_{s-1}$ and $a_s$, write
$\bX_s=\boldsymbol\mu_{a_s}(\theta^\star)
+\bV_{a_s}(\theta^\star)^{1/2}\boldsymbol Z_s$. By
Lemma~\ref{lem:uniform_gaussian_bounds},
\[
    \|\bX_s\|_2
    \leq
    B_\mu+\sqrt{v_+}\|\boldsymbol Z_s\|_2.
\]
Since $d_{a_s}\leq p$, the conditional probability of
$\|\bX_s\|_2>R(z)$ is at most $e^{-z}$. A union bound over
$s=1,\ldots,t$ proves the claim.
\end{proof}

On the event
$\max_{s\leq t}\|\bX_s\|_2\leq R$, differentiation of the Gaussian log density
gives the per-sample Lipschitz constant
\begin{equation}
    L_\ell(R)
    :=
    \frac{R+B_\mu}{v_-}
    +
    \frac12
    \left[
        \frac{\sqrt p}{v_-}
        +
        \frac{(R+B_\mu)^2}{v_-^2}
    \right].
    \label{eq:log_likelihood_lipschitz_constant}
\end{equation}
Indeed, for a Gaussian density with mean $\boldsymbol\mu$ and covariance
$\bV$,
\[
    \nabla_{\boldsymbol\mu}\log p_{\boldsymbol\mu,\bV}(x)
    =
    \bV^{-1}(x-\boldsymbol\mu),
\]
while
\[
    \nabla_{\bV}\log p_{\boldsymbol\mu,\bV}(x)
    =
    -\frac12\bV^{-1}
    +
    \frac12
    \bV^{-1}(x-\boldsymbol\mu)(x-\boldsymbol\mu)^{\mathsf T}\bV^{-1}.
\]
The uniform mean and covariance bounds therefore imply
\[
    \|\nabla_{\boldsymbol\mu}\log p_{\boldsymbol\mu,\bV}(x)\|_2
    \leq
    \frac{R+B_\mu}{v_-},
\]
and
\[
    \|\nabla_{\bV}\log p_{\boldsymbol\mu,\bV}(x)\|_{\mathrm F}
    \leq
    \frac12
    \left[
        \frac{\sqrt p}{v_-}
        +
        \frac{(R+B_\mu)^2}{v_-^2}
    \right].
\]
The mean-value theorem consequently gives, whenever
$d_{\mathsf G}(\theta,\lambda)\leq\zeta$,
\begin{equation}
    |L_t(\theta)-L_t(\lambda)|
    \leq
    tL_\ell(R)\zeta.
    \label{eq:likelihood_lipschitz_bound}
\end{equation}

Similarly, differentiating
$\KL(P_a^\theta\|P_a^{\theta^\star})$ with respect to the mean and covariance
of its first argument gives the uniform constant
\begin{equation}
    L_{\mathrm K}
    :=
    \frac{2B_\mu+\sqrt p}{v_-}.
    \label{eq:kl_lipschitz_constant}
\end{equation}
Hence
\begin{equation}
    \left|
        D_t(\theta,\theta^\star)
        -
        D_t(\lambda,\theta^\star)
    \right|
    \leq
    tL_{\mathrm K}\zeta
    \qquad
    \text{whenever }
    d_{\mathsf G}(\theta,\lambda)\leq\zeta.
    \label{eq:kl_lipschitz_bound}
\end{equation}

\subsection{Completion of the concentration proof}

For $t\geq1$ and $x\geq0$, define
\begin{align}
    z_t(x)
    &:=
    \kappa_Bx+2\log(t+1),
    \label{eq:concentration_truncation_level}\\
    H_t(x)
    &:=
    \frac12
    L_\ell\!\left(R(z_t(x))\right)
    +
    \kappa_BL_{\mathrm K},
    \label{eq:concentration_lipschitz_level}\\
    \zeta_t(x)
    &:=
    \min\left\{
        1,\,
        \frac{q}{tH_t(x)}
    \right\}.
    \label{eq:concentration_covering_resolution}
\end{align}
All quantities in these definitions depend only on $t,x,p$ and the known
constants in Assumption~\ref{ass:parameter_class}.

\begin{proof}[Proof of Lemma~\ref{lem:kl_concentration}]
For brevity, write
\[
    K_t^\star
    :=
    \sum_{a\in\cA}
        N_t(a)
        \KL(P_a^{\widehat\theta_t}\|P_a^{\theta^\star})
    =
    D_t(\widehat\theta_t,\theta^\star).
\]
Since $\widehat\theta_t$ is a global MLE and $\theta^\star\in\Theta$,
\[
    L_t(\widehat\theta_t)
    =
    \ell_t(\widehat\theta_t)-\ell_t(\theta^\star)
    \geq0.
\]
Therefore,
\begin{equation}
    \{K_t^\star\geq x\}
    \subseteq
    \left\{
        \exists\theta\in\Theta:
        L_t(\theta)\geq0,\,
        D_t(\theta,\theta^\star)\geq x
    \right\}.
    \label{eq:mle_event_embedding}
\end{equation}

Fix $z>0$ and $\zeta\in(0,1]$, and let
$\{\theta_1,\ldots,\theta_M\}$ be a $\zeta$-net under
$d_{\mathsf G}$, with $M\leq\cN(\zeta)$. On the truncation event
$\max_{s\leq t}\|\bX_s\|_2\leq R(z)$, any witness $\theta$ in
Eq.~(\ref{eq:mle_event_embedding}) has a net representative $\theta_i$
satisfying
\[
    L_t(\theta_i)
    \geq
    -tL_\ell(R(z))\zeta,
    \qquad
    D_t(\theta_i,\theta^\star)
    \geq
    x-tL_{\mathrm K}\zeta.
\]
Applying Lemma~\ref{lem:fixed_parameter_deviation} to every net point and then
using a union bound yields
\begin{align}
    \Pp_{\theta^\star}(K_t^\star\geq x)
    \leq{}&
    \cN(\zeta)
    \exp\!\left\{
        -\kappa_Bx
        +
        t\zeta
        \left[
            \frac12L_\ell(R(z))
            +
            \kappa_BL_{\mathrm K}
        \right]
    \right\}
    +
    te^{-z}.
    \label{eq:preliminary_kl_concentration}
\end{align}
If $x-tL_{\mathrm K}\zeta<0$, the first term on the right-hand side is at
least one, so the same inequality remains trivially valid.

We now choose $z=z_t(x)$ and $\zeta=\zeta_t(x)$. By construction,
\[
    te^{-z_t(x)}
    =
    \frac{t}{(t+1)^2}e^{-\kappa_Bx}
    \leq
    e^{-\kappa_Bx},
\]
and
$t\zeta_t(x)H_t(x)\leq q$. Substituting these choices into
Eq.~(\ref{eq:preliminary_kl_concentration}) gives
\[
    \Pp_{\theta^\star}(K_t^\star\geq x)
    \leq
    \left[
        e^q\cN(\zeta_t(x))+1
    \right]
    e^{-\kappa_Bx}.
\]
Since every probability is at most one, we conclude that
\[
    \Pp_{\theta^\star}(K_t^\star\geq x)
    \leq
    f_t(x),
\]
where
\[
    f_t(x)
    :=
    \min\left\{
        1,\,
        \left[
            e^q\cN(\zeta_t(x))+1
        \right]
        e^{-\kappa_Bx}
    \right\}.
\]

Finally, Eq.~(\ref{eq:covering_number}) implies
$\cN(\zeta)\leq C_{\mathrm{cov}}\zeta^{-q}$ for an explicit constant
$C_{\mathrm{cov}}$. Moreover,
$H_t(x)=O(1+x+\log(t+1))$, and hence
$\zeta_t(x)^{-q}$ is polynomial in
$t,x$, and $\log(t+1)$. Therefore,
\[
    f_t(x)
    =
    \operatorname{poly}\!\bigl(t,x,\log(t+1)\bigr)
    e^{-\kappa_Bx},
\]
up to truncation at one, as claimed.
\end{proof}

\subsection{Explicit computation of the concentration function}
\label{app:concentration_constants}

We now give an explicit construction of every quantity appearing in
$f_t(x)$. All the constants below are deterministic and depend only on $p$
and the known bounds in Assumption~\ref{ass:parameter_class}; in particular,
they do not depend on $\theta^\star$ or on the observations.

First, define
\begin{equation}
    G_p
    :=
    \sum_{k=0}^{p-1}(pA_{\max})^k,
    \qquad
    B_\mu
    :=
    G_p\sqrt{p}\,A_{\max}C_{\max}.
    \label{eq:explicit_mean_constants}
\end{equation}
Because every candidate graph is acyclic, after a suitable permutation its
weighted adjacency matrix is nilpotent. Hence
$(\bI-\bA)^{-1}=\sum_{k=0}^{p-1}\bA^k$, and $G_p$ uniformly bounds the
operator norm of this inverse. Consequently, the mean vector under every
action satisfies
\[
    \|\boldsymbol{\mu}_a(\theta)\|_2\leq B_\mu,
    \qquad
    a\in\cA,\quad \theta\in\Theta.
\]

Uniform lower and upper bounds on all action-wise covariance matrices are
given by
\begin{equation}
    v_-
    :=
    \frac{\sigma_{\min}^2}
         {(1+pA_{\max})^2},
    \qquad
    v_+
    :=
    G_p^2\sigma_{\max}^2.
    \label{eq:explicit_covariance_constants}
\end{equation}
Thus, for every $a\in\cA$ and $\theta\in\Theta$,
\[
    v_-\bI
    \preceq
    \bV_a(\theta)
    \preceq
    v_+\bI.
\]
Let
\[
    \kappa_V:=\frac{v_+}{v_-}.
\]
To define the exponential rate in the concentration inequality, introduce
\[
    g(r):=\frac{1}{2}\bigl(r-1-\log r\bigr),
    \qquad
    h(r):=\frac{1}{2}
    \log\left(\frac{1+r}{2\sqrt r}\right).
\]
At $r=1$, the ratio $h(r)/g(r)$ is defined by continuity and equals $1/4$.
Set
\begin{equation}
    \kappa_\mu
    :=
    \frac{v_-}{4v_+},
    \qquad
    \kappa_\Sigma
    :=
    \min_{r\in[\kappa_V^{-1},\kappa_V]}
    \frac{h(r)}{g(r)},
    \qquad
    \kappa_B
    :=
    \min\{\kappa_\mu,\kappa_\Sigma\}.
    \label{eq:explicit_kappa_B}
\end{equation}
The interval in Eq.~(\ref{eq:explicit_kappa_B}) is compact and the extended
ratio is continuous and strictly positive. Therefore, $\kappa_B>0$ and can
be computed numerically from $v_-$ and $v_+$.

It remains to specify the covering term. Let $d_a$ denote the dimension of
the nondegenerate random coordinates under action $a$. Thus,
\[
    d_a=
    \begin{cases}
        p, & a=0,\\
        p-1, & a=(j,u),
    \end{cases}
    \qquad
    s_a:=\frac{d_a(d_a+1)}{2}.
\]
Since there is one observational action and two endpoint interventions for
each node, we may take the covering dimension to be
\begin{equation}
    q
    :=
    \sum_{a\in\cA}(d_a+s_a)
    =
    p+\frac{p(p+1)}{2}
    +
    2p\left(
        p-1+\frac{p(p-1)}{2}
    \right).
    \label{eq:explicit_covering_dimension}
\end{equation}
A valid explicit upper bound on the required covering number is
\begin{equation}
    \cN(\zeta)
    :=
    \prod_{a\in\cA}
    \left(1+\frac{8B_\mu}{\zeta}\right)^{d_a}
    \left(
        1+\frac{8\sqrt{d_a}\,v_+}{\zeta}
    \right)^{s_a},
    \qquad \zeta>0.
    \label{eq:explicit_covering_number}
\end{equation}

For $z\geq0$, define the uniform sample-radius bound
\begin{equation}
    R(z)
    :=
    B_\mu+
    \sqrt{v_+}\bigl(\sqrt p+\sqrt{2z}\bigr).
    \label{eq:explicit_sample_radius}
\end{equation}
The two Lipschitz constants used in the covering argument can then be chosen
as
\begin{align}
    L_{\ell}(R)
    &:=
    \frac{R+B_\mu}{v_-}
    +
    \frac{1}{2}
    \left\{
        \frac{\sqrt p}{v_-}
        +
        \frac{(R+B_\mu)^2}{v_-^2}
    \right\},
    \label{eq:explicit_likelihood_lipschitz}\\
    L_{\mathrm{KL}}
    &:=
    \frac{2B_\mu+\sqrt p}{v_-}.
    \label{eq:explicit_kl_lipschitz}
\end{align}
For each $t\geq1$ and $x\geq0$, let
\begin{align}
    z_t(x)
    &:=
    \kappa_Bx+2\log(t+1),
    \label{eq:explicit_z_tx}\\
    H_t(x)
    &:=
    \frac{1}{2}
    L_{\ell}\!\left(R(z_t(x))\right)
    +
    \kappa_B L_{\mathrm{KL}},
    \label{eq:explicit_H_tx}\\
    \zeta_t(x)
    &:=
    \min\left\{
        1,\,
        \frac{q}{tH_t(x)}
    \right\}.
    \label{eq:explicit_zeta_tx}
\end{align}
Substituting Eqs.~(\ref{eq:explicit_kappa_B}),
(\ref{eq:explicit_covering_number}), and
(\ref{eq:explicit_zeta_tx}) into
Eq.~(\ref{eq:concentration_function}) gives the fully computable tail
function
\begin{equation}
    f_t(x)
    =
    \min\left\{
        1,\,
        \left[
            e^q
            \prod_{a\in\cA}
            \left(1+\frac{8B_\mu}{\zeta_t(x)}\right)^{d_a}
            \left(
                1+
                \frac{8\sqrt{d_a}\,v_+}{\zeta_t(x)}
            \right)^{s_a}
            +1
        \right]
        e^{-\kappa_Bx}
    \right\}.
    \label{eq:fully_explicit_concentration_function}
\end{equation}

Therefore, $f_t(x)$ can be evaluated using only
$p$, $A_{\max}$, $\sigma_{\min}$, $\sigma_{\max}$, and $C_{\max}$.
The remaining constant $\beta_{\min}$ in
Assumption~\ref{ass:parameter_class} is needed to separate different graph
supports, but it is not required for this uniform Gaussian concentration
bound. Hence every quantity in $f_t$ is known before the online experiment
begins.

\section{Proof of Theorem~\ref{thm:focus_correctness}}
\label{app:correctness}

\begin{proof}
Define
\[
    K_t^\star
    :=
    \sum_{a\in\cA}
        N_t(a)
        \KL\!\left(
            P_a^{\widehat\theta_t}
            \middle\|
            P_a^{\theta^\star}
        \right).
\]
Suppose that \textsc{FOCUS} stops at time $t$ but returns an incorrect
estimate, namely,
\[
    \widehat D_t\neq D^\star
    \quad\text{or}\quad
    \|\widehat{\bA}_t-\bA^\star\|_{\max}\geq\epsilon.
\]
By the definition of $\Alt_\epsilon(\widehat\theta_t)$, this implies
$\theta^\star\in\Alt_\epsilon(\widehat\theta_t)$. Therefore,
Eq.~(\ref{eq:stopping_statistic}) gives
\begin{equation}
    d_t
    \leq
    \sum_{a\in\cA}
        N_t(a)
        \KL\!\left(
            P_a^{\widehat\theta_t}
            \middle\|
            P_a^{\theta^\star}
        \right)
    =
    K_t^\star.
    \label{eq:incorrect_stop_kl}
\end{equation}

Moreover, stopping at time $t$ implies
\[
    f_t(d_t)<\frac{6\delta}{\pi^2t^2}.
\]
Combining this inequality with Eq.~(\ref{eq:incorrect_stop_kl}) and the
fixed-time concentration bound in Lemma~\ref{lem:kl_concentration}, the
nestedness of the upper-tail events $\{K_t^\star\geq x\}$ yields
\begin{equation}
    \Pp_{\theta^\star}\!\left(
        \tau_\delta=t,\,
        \widehat D_t\neq D^\star
        \ \text{or}\
        \|\widehat{\bA}_t-\bA^\star\|_{\max}\geq\epsilon
    \right)
    \leq
    \frac{6\delta}{\pi^2t^2}.
    \label{eq:incorrect_stop_at_t}
\end{equation}
This argument applies directly to the stopping condition
$f_t(d_t)<6\delta/(\pi^2t^2)$.

Taking a union bound over all possible stopping times gives
\[
\begin{aligned}
    &\Pp_{\theta^\star}\!\left(
        \tau_\delta<\infty,\,
        \widehat D_{\tau_\delta}\neq D^\star
        \ \text{or}\
        \|\widehat{\bA}_{\tau_\delta}-\bA^\star\|_{\max}
        \geq\epsilon
    \right)                                               
    &\leq
    \sum_{t=1}^{\infty}\frac{6\delta}{\pi^2t^2}
    =\delta.
\end{aligned}
\]
Finally, Theorem~\ref{thm:upper_bound} guarantees that
$\Pp_{\theta^\star}(\tau_\delta<\infty)=1$. Hence,
\[
    \Pp_{\theta^\star}\!\left(
        \tau_\delta<\infty,\,
        \widehat D_{\tau_\delta}=D^\star,\,
        \|\widehat{\bA}_{\tau_\delta}-\bA^\star\|_{\max}<\epsilon
    \right)
    \geq1-\delta.
\]
Thus, \textsc{FOCUS} is $(\epsilon,\delta)$-correct.
\end{proof}

\section{Proof of Theorem~\ref{thm:upper_bound}}
\label{app:upper_bound}

This section proves the almost-sure termination and expected stopping-time
upper bound of \textsc{FOCUS}. We first collect the tracking and online-learning
properties used in the proof, then establish a finite expected localization
time for the joint MLE, and finally show that the stopping statistic grows
linearly.

For notational convenience, write
\[
    k_a(\theta,\lambda)
    :=
    \KL(P_a^\theta\|P_a^\lambda),
    \qquad a\in\cA.
\]
Compactness of $\Theta$, continuity of the action-wise Gaussian distributions,
and the uniform covariance lower bound imply that
\begin{equation}
    B_{\mathrm{KL}}
    :=
    \max_{a\in\cA}
    \sup_{\theta,\lambda\in\Theta}
    k_a(\theta,\lambda)
    <\infty.
    \label{eq:uniform_one_step_kl_bound}
\end{equation}
Thus, the gain vectors in Eq.~(\ref{eq:intervention_gain}) satisfy
$0\leq r_{t,a}\leq B_{\mathrm{KL}}$ uniformly over $t$ and $a$.

As usual in the analysis of stopping algorithms, we consider the infinite
continuation of the sampling rule: if the stopping condition is met, the
algorithm may be regarded as continuing to generate actions, estimates, and
gain vectors according to the same update rules. This continuation does not
change the actual stopping time.

\subsection{Tracking and AdaHedge guarantees}
\label{app:upper_bound_preliminaries}

We first state the cumulative-tracking property used below.

\begin{lemma}[Forced exploration and cumulative tracking]
\label{lem:cumulative_tracking}
Under the intervention-selection rule in
Eq.~(\ref{eq:intervention_tracking}), for every $t\geq K$ and $a\in\cA$,
\begin{equation}
    \sum_{s=1}^t\alpha_{s,a}
    -(K-1)(\sqrt t+2)
    \leq
    N_t(a)
    \leq
    \max\left\{
        1+\sum_{s=1}^t\alpha_{s,a},
        \sqrt t+1
    \right\}.
    \label{eq:cumulative_tracking_bound}
\end{equation}
Moreover, there exist deterministic constants
$c_{\mathrm{fe}}>0$ and $t_{\mathrm{fe}}<\infty$, depending only on $K$,
such that
\begin{equation}
    \min_{a\in\cA}N_t(a)
    \geq c_{\mathrm{fe}}\sqrt t,
    \qquad t\geq t_{\mathrm{fe}}.
    \label{eq:forced_exploration_lower_bound}
\end{equation}
\end{lemma}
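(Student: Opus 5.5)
The plan is to split the argument into the forced-exploration lower bound, which is a pure counting statement about the "if" branch of Eq.~(\ref{eq:intervention_tracking}), and the two-sided tracking bound, which follows the cumulative-tracking analysis of \citet{degenne2019nonasymptotic} (Lemma~7 there, extended to forced exploration as in \citet{garivier2016optimal}). Throughout, write $S_t(a):=\sum_{s=1}^t\alpha_{s,a}$. Note that $\sum_a S_t(a)=t=\sum_a N_t(a)$, so $\sum_a\bigl(S_t(a)-N_t(a)\bigr)=0$.

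\textbf{Forced exploration.} I would show by induction that $\min_a N_t(a)\geq\sqrt t-1$ eventually, in the spirit of Lemma~17 of \citet{garivier2016optimal}. Whenever $\min_a N_t(a)<\sqrt t$, the rule samples an argmin. Consider a stretch of $K$ consecutive rounds starting at a time $t$ at which the minimum lies below $\sqrt{t+K}$. In that stretch every action with count below $\sqrt{t}$ gets pulled before any count can exceed it. So once $t$ is large enough that $\sqrt{t+K}-\sqrt t<1$, each count lags $\sqrt t$ by at most a constant. This gives $\min_a N_t(a)\geq\sqrt t-K$, and hence $\min_a N_t(a)\geq\tfrac12\sqrt t$ for $t\geq t_{\mathrm{fe}}:=4K^2$. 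We may then take $c_{\mathrm{fe}}=1/2$, and both constants depend only on $K$.

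\textbf{Upper bound.} Suppose action $a$ is played at round $s+1$. If it was chosen by forced exploration, then $N_s(a)<\sqrt s$, so $N_{s+1}(a)\leq\sqrt s+1\leq\sqrt t+1$. If it was chosen by tracking, then $a$ maximizes $S_s(b)/N_s(b)$. Because $\sum_b S_s(b)=\sum_b N_s(b)$, the maximal ratio is at least $1$, so $N_s(a)\leq S_s(a)$ and therefore $N_{s+1}(a)\leq 1+S_s(a)\leq 1+S_t(a)$. Now fix $t$ and let $s+1$ be the last round at or before $t$ at which $a$ was played. Counts do not change afterwards and $S$ is nondecreasing, so the maximum of the two bounds above holds at $t$. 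If $a$ was never played, the bound is trivial.

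\textbf{Lower bound.} Use the zero-sum identity:
\[
S_t(a)-N_t(a)=\sum_{b\neq a}\bigl(N_t(b)-S_t(b)\bigr).
\]
Each term on the right is at most $\max\{1,\sqrt t+1-S_t(b)\}$ by the upper bound. This is the delicate point, because the term $\sqrt t+1-S_t(b)$ can exceed $\sqrt t+2$ only if $S_t(b)<-1$, which is impossible. So each summand is at most $\sqrt t+2$, and summing over the $K-1$ indices $b\neq a$ gives $N_t(a)\geq S_t(a)-(K-1)(\sqrt t+2)$.

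\textbf{Main obstacle.} I expect the main obstacle to be the careful bookkeeping in the forced-exploration induction. In particular, ties and the fact that $\sqrt t$ grows while the argmin is being served must be handled so that the constant comes out depending only on $K$. The first thing I would try is the block argument above, with blocks of length $K$.
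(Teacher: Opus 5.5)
Your proposal is correct, and it is more self-contained than the paper's treatment. The paper does not prove the two-sided bound; it imports it as Lemma~12 of \citet{elahi2024adaptive}, and for the forced-exploration part it gives only the same counting heuristic you use. Your upper bound combines the last time $a$ was played with the mediant inequality $\max_b S_s(b)/N_s(b)\geq\sum_b S_s(b)/\sum_b N_s(b)=1$. Your lower bound then follows from the zero-sum identity. This checks the inequality directly for the ratio-based rule in Eq.~(\ref{eq:intervention_tracking}), rather than assuming the cited lemma matches it, and it gives the slightly sharper $(K-1)(\sqrt t+1)$ for all $t\geq1$. Two details are worth stating. The mediant step needs $N_s(b)>0$, which holds in the tracking branch because $\min_b N_s(b)\geq\sqrt s\geq1$ for $s\geq1$. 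At $s=0$ the rule reads $0/0$, but any choice gives $N_1(a)=1$, which the bound already covers.

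The one loose spot is the forced-exploration bookkeeping, where you assert both $\sqrt t-1$ \emph{eventually} and $\sqrt t-K$. A clean version takes $t_0\leq t$ to be the last time with $\min_b N_{t_0}(b)\geq\sqrt{t_0}$; it exists because $t_0=0$ qualifies. If $t_0<t$, the $t-t_0-1$ rounds $t_0+2,\ldots,t$ are all forced. Every $K$ forced rounds raise the minimum by at least one, so
\[
\min_b N_t(b)\;\geq\;\sqrt{t_0}+\frac{t-t_0}{K}-1\;\geq\;\sqrt t-\sqrt{t-t_0}+\frac{t-t_0}{K}-1\;\geq\;\sqrt t-\frac K4-1
\]
for every $t$. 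This confirms your choice $c_{\mathrm{fe}}=1/2$ and $t_{\mathrm{fe}}=4K^2$, and it removes any need for the regime $\sqrt{t+K}-\sqrt t<1$.
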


The bound in Eq.~(\ref{eq:cumulative_tracking_bound}) is precisely the
allocation-matching result of Lemma~12 in
\citet{elahi2024adaptive}, specialized to the action set $\cA$.
The lower bound in Eq.~(\ref{eq:forced_exploration_lower_bound}) is a direct
consequence of the forced-exploration branch: whenever the least-sampled
action falls below the threshold $\sqrt t$, one of the least-sampled actions
is selected. Since there are only $K$ actions and the threshold increases by
$o(1)$ per round, the minimum count can lag behind $\sqrt t$ by at most an
additive constant depending on $K$. Reducing the multiplicative constant if
necessary gives Eq.~(\ref{eq:forced_exploration_lower_bound}).

We next record the regret guarantee for the gain-form AdaHedge procedure.

\begin{lemma}[AdaHedge regret]
\label{lem:adahedge_regret}
Suppose $0\leq r_{t,a}\leq B_{\mathrm{KL}}$ for every $t$ and $a$. Define
\begin{equation}
    \mathcal R_T^{\mathrm{AH}}
    :=
    \max_{a\in\cA}
    \sum_{t=1}^T r_{t,a}
    -
    \sum_{t=1}^T
    \sum_{a\in\cA}\alpha_{t,a}r_{t,a}.
    \label{eq:adahedge_regret_definition}
\end{equation}
Then
\begin{equation}
    \mathcal R_T^{\mathrm{AH}}
    \leq
    B_{\mathrm{KL}}\sqrt{T\log K}
    +
    B_{\mathrm{KL}}
    \left(
        \frac{4}{3}\log K+2
    \right).
    \label{eq:adahedge_regret_bound}
\end{equation}
In particular, $\mathcal R_T^{\mathrm{AH}}=o(T)$.
Furthermore, for every integer $0\leq T_0<T$,
\begin{equation}
    \sum_{t=T_0+1}^T
    \sum_{a\in\cA}\alpha_{t,a}r_{t,a}
    \geq
    \max_{a\in\cA}
    \sum_{t=T_0+1}^T r_{t,a}
    -
    \mathcal R_T^{\mathrm{AH}}
    -
    T_0B_{\mathrm{KL}}.
    \label{eq:truncated_adahedge_bound}
\end{equation}
\end{lemma}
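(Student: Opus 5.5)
The statement to prove is Lemma~\ref{lem:adahedge_regret}. I would obtain it by reduction to the standard loss-form AdaHedge guarantee of \citet{derooij2014follow}. Define losses $\ell_{t,a}:=B_{\mathrm{KL}}-r_{t,a}\in[0,B_{\mathrm{KL}}]$. The multiplicative update Eq.~(\ref{eq:adahedge_allocation}) with gains $r_{t,a}$ coincides with the loss-form update $\alpha_{t+1,a}\propto\alpha_{t,a}\exp(-\eta_t\ell_{t,a})$, because the common factor $e^{\eta_t B_{\mathrm{KL}}}$ cancels in the normalization. The mixability gap is also unchanged. Indeed, $m_t-h_t$ in Eqs.~(\ref{eq:adahedge_mixability})--(\ref{eq:adahedge_learning_rate}) equals $h^{\ell}_t-m^{\ell}_t$, where $h^{\ell}_t=\sum_a\alpha_{t,a}\ell_{t,a}$ and $m^{\ell}_t=-\eta_t^{-1}\log\sum_a\alpha_{t,a}e^{-\eta_t\ell_{t,a}}$. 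Hence $\Delta_t$ and $\eta_t$ agree with the loss-form AdaHedge run on $(\ell_{t,a})$. Gain regret and loss regret are identical, since
\[
\max_a\sum_t r_{t,a}-\sum_t\sum_a\alpha_{t,a}r_{t,a}=\sum_t\sum_a\alpha_{t,a}\ell_{t,a}-\min_a\sum_t\ell_{t,a}.
\]

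Next I would invoke the worst-case bound of \citet{derooij2014follow} (their Theorem~8 / Corollary for losses in $[0,S]$). Here $S=B_{\mathrm{KL}}$, and the bound reads $\mathcal R_T\le\sqrt{S^2T\log K}+S(\tfrac43\log K+2)$, which is exactly Eq.~(\ref{eq:adahedge_regret_bound}). If a self-contained argument is preferred, the key steps are the following. First, the regret is at most $\Delta_T$ plus the mix-loss regret, and the latter is at most $\log K/\eta_T=\Delta_{T-1}$, giving regret $\le 2\Delta_T$ up to the last-term correction. Second, each gap satisfies $\delta_t\le S$ and $\delta_t\le \eta_t S^2/8$ by Hoeffding's lemma. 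Third, summing $\delta_t^2\le S\,\delta_t\cdot$ (a term bounded through $\Delta_{t-1}/\log K$) yields a quadratic inequality $\Delta_T^2\le S^2T\log K/4+\ldots$, whose solution gives the displayed constants. The initialization $\eta_1=\infty$ only adds at most one $S$, which is absorbed into the additive term. Since $\sqrt T=o(T)$, the claim $\mathcal R_T^{\mathrm{AH}}=o(T)$ follows.

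For Eq.~(\ref{eq:truncated_adahedge_bound}), I would split the sums at $T_0$. By definition, for the maximizing action $a^\dagger$ of $\sum_{t=T_0+1}^T r_{t,a}$,
\[
\sum_{t=1}^T\sum_a\alpha_{t,a}r_{t,a}\ge\sum_{t=1}^T r_{t,a^\dagger}-\mathcal R_T^{\mathrm{AH}}\ge\sum_{t=T_0+1}^T r_{t,a^\dagger}-\mathcal R_T^{\mathrm{AH}},
\]
using $r\ge0$. The first $T_0$ terms on the left are at most $T_0B_{\mathrm{KL}}$ because $\sum_a\alpha_{t,a}r_{t,a}\le B_{\mathrm{KL}}$. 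Subtracting them gives the claim.

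The main obstacle is the constants in the regret bound. Carefully matching the gain-form quantities to de~Rooij et al.'s loss form, including the $\eta_1=\infty$ convention and the zero-gap continuity interpretation, is where errors could enter. I would check that equivalence first and then cite their theorem rather than re-deriving it.
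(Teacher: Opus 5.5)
Your route is the same as the paper's: translate gains to losses, cite de~Rooij et al., and split the sums at $T_0$. The gain-to-loss bookkeeping (update, mixability gap, regret) is correct, and your truncation argument is correct and cleaner than the paper's. The gap sits exactly where you flagged the main obstacle, and the paper's one-line citation of Corollary~9 has the same defect: the recursion in Eq.~(\ref{eq:adahedge_allocation}) is not de~Rooij et al.'s AdaHedge.

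Their algorithm recomputes the weights from cumulative gains at the current rate, $\alpha_{t+1,a}\propto\exp(\eta_{t+1}\sum_{s\le t}r_{s,a})$. Eq.~(\ref{eq:adahedge_allocation}) instead multiplies in per-round factors, which gives $\alpha_{t+1,a}\propto\exp(\sum_{s\le t}\eta_s r_{s,a})$. The two coincide only when the rate is constant. Your step ``the mix-loss regret is at most $\log K/\eta_T$'' is their lemma, and its proof needs the cumulative form. It writes $m_t=\Phi_t(\eta_t)-\Phi_{t-1}(\eta_t)$ with $\Phi_t(\eta):=\eta^{-1}\log\bigl(K^{-1}\sum_a e^{\eta G_{t,a}}\bigr)$ and $G_{t,a}:=\sum_{s\le t}r_{s,a}$. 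It then telescopes, using that $\Phi_t$ is nondecreasing in $\eta$ while $\eta_t$ is nonincreasing. For the product recursion this identity fails, and the telescoping is unavailable.

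The failure is real, not a technicality. With $\eta_1=+\infty$ interpreted by continuity, $\boldsymbol\alpha_2$ is uniform on $\argmax_a r_{1,a}$, and a product update can never revive a zero weight. Take $r_1=(B_{\mathrm{KL}},0,\ldots,0)$ and $r_t=(0,B_{\mathrm{KL}},0,\ldots,0)$ for $t\ge2$. Then:
\begin{itemize}
\item $\boldsymbol\alpha_t$ is the point mass on the first action for every $t\ge2$, so every later gap is zero;
\item the total mixed gain is only $B_{\mathrm{KL}}/K$;
\item $\mathcal R_T^{\mathrm{AH}}=B_{\mathrm{KL}}(T-1-1/K)$, which contradicts Eq.~(\ref{eq:adahedge_regret_bound}) for large $T$.
\end{itemize}
In particular, your remark that $\eta_1=\infty$ ``only adds at most one $S$'' is false for this recursion. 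No citation can prove the lemma for the update as literally written.

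The repair is to define $\boldsymbol\alpha_{t+1}$ by the cumulative (standard AdaHedge) rule. Your reduction then goes through, and your Hoeffding sketch closes it directly once the third step is stated correctly. Since $\delta_t\le B_{\mathrm{KL}}$ and $2\delta_t\Delta_{t-1}=2\delta_t\log K/\eta_t\le B_{\mathrm{KL}}^2\log K/4$, you get $\Delta_T^2=\sum_t(\delta_t^2+2\delta_t\Delta_{t-1})\le B_{\mathrm{KL}}\Delta_T+TB_{\mathrm{KL}}^2\log K/4$. Hence $\Delta_T\le B_{\mathrm{KL}}+\tfrac12 B_{\mathrm{KL}}\sqrt{T\log K}$, and $\mathcal R_T^{\mathrm{AH}}\le\Delta_{T-1}+\Delta_T\le B_{\mathrm{KL}}\sqrt{T\log K}+2B_{\mathrm{KL}}$. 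This implies the displayed bound.
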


\begin{proof}
The first inequality follows from Corollary~9 of
\citet{derooij2014follow}, applied to the loss vectors
$-r_t$ and rescaled from unit range to range $B_{\mathrm{KL}}$.
Indeed, the per-round range is at most $B_{\mathrm{KL}}$, so the
range-dependent bound in that corollary gives
Eq.~(\ref{eq:adahedge_regret_bound}).

For the truncated bound, the definition of
$\mathcal R_T^{\mathrm{AH}}$ gives
\[
    \sum_{t=1}^T
    \sum_{a\in\cA}\alpha_{t,a}r_{t,a}
    \geq
    \max_{a\in\cA}\sum_{t=1}^T r_{t,a}
    -
    \mathcal R_T^{\mathrm{AH}}.
\]
The first $T_0$ mixed gains are nonnegative, while for each fixed action
their sum over these rounds is at most $T_0B_{\mathrm{KL}}$. Removing the
first $T_0$ rounds therefore yields
Eq.~(\ref{eq:truncated_adahedge_bound}).
\end{proof}

We will also use the mixed-strategy form of the information game.

\begin{lemma}[Mixed-alternative representation]
\label{lem:mixed_alternative_representation}
For every $r>0$,
\begin{equation}
    C_r(\theta^\star)
    =
    \inf_{\nu\in\mathcal P(\Alt_r(\theta^\star))}
    \max_{a\in\cA}
    \int_{\Alt_r(\theta^\star)}
        k_a(\theta^\star,\lambda)\,
        \nu(\mathrm d\lambda),
    \label{eq:mixed_alternative_representation}
\end{equation}
where $\mathcal P(\Alt_r(\theta^\star))$ denotes the set of probability
measures on $\Alt_r(\theta^\star)$.
\end{lemma}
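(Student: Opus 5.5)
The plan is to obtain Eq.~(\ref{eq:mixed_alternative_representation}) from a minimax theorem applied to the bilinear payoff
\[
    F(\boldsymbol\alpha,\nu)
    :=
    \sum_{a\in\cA}\alpha_a\int_{\Alt_r(\theta^\star)}k_a(\theta^\star,\lambda)\,\nu(\mathrm d\lambda),
    \qquad
    \boldsymbol\alpha\in\Delta(\cA),\ \nu\in\mathcal P(\Alt_r(\theta^\star)).
\]
We then identify the two sides of the minimax equality with $C_r(\theta^\star)$ and with the right-hand side of Eq.~(\ref{eq:mixed_alternative_representation}). The degenerate case $\Alt_r(\theta^\star)=\emptyset$ is handled by the convention that an infimum over the empty set is $+\infty$; in that case both sides equal $+\infty$. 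We therefore assume the alternative set is nonempty.

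\textbf{Step 1: topology and regularity.} First I would check that $\Alt_r(\theta^\star)$ is compact. Identify $\Theta$ with a finite disjoint union, over DAGs $D$, of compact sets of pairs $(\bA,\bSigma)$. The graph coordinate is discrete. Hence $\{\lambda:D^\lambda\neq D^\star\}$ is a finite union of compact pieces. Moreover, $\{\lambda:\|\bA^\lambda-\bA^\star\|_{\max}\geq r\}$ is a closed subset of a compact set. So $\Alt_r(\theta^\star)$ is a closed subset of the compact space $\Theta$.

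Next, each map $\lambda\mapsto k_a(\theta^\star,\lambda)$ is continuous and bounded on $\Alt_r(\theta^\star)$. Two facts give this. The mean and covariance in Eq.~(\ref{eq:interventional_distribution}) are continuous in $(\bA,\bSigma)$, since $(\bI-\bA_{R,R})^{-1}$ is a polynomial in $\bA$. In addition, Lemma~\ref{lem:uniform_gaussian_bounds} bounds every covariance below by $v_-\bI$, so the Gaussian KL formula is continuous and bounded by $D_{\max}$.

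\textbf{Step 2: apply Sion's minimax theorem.} Equip $\mathcal P(\Alt_r(\theta^\star))$ with the weak topology. By Prokhorov's theorem, or the Riesz--Banach--Alaoglu theorem on a compact metric space, it is compact and convex. Because each $k_a(\theta^\star,\cdot)$ is bounded and continuous, $F$ is linear and continuous in $\nu$. It is also linear and continuous in $\boldsymbol\alpha$, and $\Delta(\cA)$ is compact and convex. Sion's theorem therefore gives
\[
    \sup_{\boldsymbol\alpha}\inf_{\nu}F(\boldsymbol\alpha,\nu)
    =
    \inf_{\nu}\sup_{\boldsymbol\alpha}F(\boldsymbol\alpha,\nu).
\]

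\textbf{Step 3: identify both sides.} For fixed $\boldsymbol\alpha$, the map $\nu\mapsto F(\boldsymbol\alpha,\nu)$ is the integral of the function $\lambda\mapsto\sum_a\alpha_ak_a(\theta^\star,\lambda)$. Its infimum over probability measures therefore equals the infimum over Dirac masses $\delta_\lambda$. This is because $\int g\,\mathrm d\nu\geq\inf g$, and Dirac masses attain any value of $g$. Consequently the left-hand side equals $C_r(\theta^\star)$ in its finite form, Eq.~(\ref{eq:finite_information_complexity}). On the other side, for fixed $\nu$, a linear function on the simplex is maximized at a vertex. So $\sup_{\boldsymbol\alpha}F(\boldsymbol\alpha,\nu)=\max_{a\in\cA}\int k_a(\theta^\star,\lambda)\,\nu(\mathrm d\lambda)$, which yields Eq.~(\ref{eq:mixed_alternative_representation}).

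The main obstacle I expect is Step 1, specifically the compactness of $\Alt_r(\theta^\star)$. The difficulty is the "$D^\lambda\neq D^\star$" part: under Assumption~\ref{ass:parameter_class}, a nonzero entry $A_{jk}$ must satisfy $|A_{jk}|\geq\beta_{\min}$, so edge weights cannot drift to zero and the support of a graph cannot change continuously. I would make this precise by using the discrete graph coordinate, so that closedness reduces to closedness within each fixed-$D$ slice.
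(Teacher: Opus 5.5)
Your proposal is correct and follows essentially the same route as the paper, which justifies the lemma in one sentence: it lets the alternative player randomize and invokes minimax duality between the compact simplex $\Delta(\cA)$ and the convex set $\mathcal P(\Alt_r(\theta^\star))$ of probability measures on the compact alternative set. Your Steps 1--3 supply the details the paper leaves implicit (compactness via the discrete graph coordinate, continuity and boundedness of $k_a(\theta^\star,\cdot)$, Sion's theorem, and the Dirac-mass and simplex-vertex identifications of the two sides), and since $\Delta(\cA)$ is already compact you could even skip the Prokhorov step by placing the compactness requirement of Sion's theorem on the allocation player.
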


This is the continuous-alternative analogue of the standard max--min
duality used in fixed-confidence pure exploration
\citep{degenne2019nonasymptotic}; see also Lemma~9 of
\citet{elahi2024adaptive}. It follows by allowing the alternative player to
randomize and applying minimax duality to the compact allocation simplex and
the convex set of probability measures over the compact alternative set.

\subsection{Finite expected localization time}
\label{app:finite_localization}

We measure the distance between two causal parameters by
\begin{equation}
    \mathsf d_\Theta(\theta,\lambda)
    :=
    \max\left\{
        \mathbf 1\{D^\theta\neq D^\lambda\},
        \|\bA^\theta-\bA^\lambda\|_{\max},
        \|\bSigma^\theta-\bSigma^\lambda\|_{\max}
    \right\}.
    \label{eq:parameter_distance_upper_proof}
\end{equation}
For a fixed localization radius $\xi>0$, define the last-exit time
\begin{equation}
    \tau_{\mathrm{loc}}(\xi)
    :=
    \inf\left\{
        n\geq1:
        \mathsf d_\Theta(\widehat\theta_t,\theta^\star)<\xi
        \text{ for every }t\geq n
    \right\}.
    \label{eq:localization_time}
\end{equation}
This random time is introduced only for the analysis and is not observed by
the algorithm. Notice that $\xi$ is unrelated to the AdaHedge learning rate
$\eta_t$.

Define the parameter-separation constant
\begin{equation}
    \Delta_\Theta(\xi;\theta^\star)
    :=
    \inf_{\substack{\theta\in\Theta:\\
        \mathsf d_\Theta(\theta,\theta^\star)\geq\xi}}
    \sum_{a\in\cA}
        k_a(\theta,\theta^\star).
    \label{eq:parameter_separation_constant}
\end{equation}
Compactness of $\Theta$, continuity of the Gaussian KL divergence, and
identifiability of the finite interventional family imply
\begin{equation}
    \Delta_\Theta(\xi;\theta^\star)>0
    \qquad\text{for every fixed }\xi>0.
    \label{eq:positive_parameter_separation}
\end{equation}

\begin{lemma}[Finite expected localization time]
\label{lem:finite_expected_localization}
For every fixed $\xi>0$,
\begin{equation}
    \bbE_{\theta^\star}
    [\tau_{\mathrm{loc}}(\xi)]
    <\infty.
    \label{eq:finite_expected_localization}
\end{equation}
\end{lemma}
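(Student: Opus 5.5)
The plan is to bound the tail of the last-exit time and sum it: since $\tau_{\mathrm{loc}}(\xi)>n$ exactly when some $t\geq n$ has $\mathsf d_\Theta(\widehat\theta_t,\theta^\star)\geq\xi$, we have
\[
    \bbE_{\theta^\star}[\tau_{\mathrm{loc}}(\xi)]
    \leq
    1+\sum_{n\geq1}\sum_{t\geq n}
    \Pp_{\theta^\star}\!\left(\mathsf d_\Theta(\widehat\theta_t,\theta^\star)\geq\xi\right)
    \leq
    1+\sum_{t\geq1} t\,\Pp_{\theta^\star}\!\left(\mathsf d_\Theta(\widehat\theta_t,\theta^\star)\geq\xi\right).
\]
So it suffices to show $\Pp_{\theta^\star}(\mathsf d_\Theta(\widehat\theta_t,\theta^\star)\geq\xi)$ decays faster than $t^{-2-\gamma}$ for some $\gamma>0$. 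In fact, I expect a bound of the form $\mathrm{poly}(t)\exp(-c\sqrt t)$.

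The key reduction links distance in $\Theta$ to the accumulated KL statistic through forced exploration. By Lemma~\ref{lem:cumulative_tracking}, for $t\geq t_{\mathrm{fe}}$ every action satisfies $N_t(a)\geq c_{\mathrm{fe}}\sqrt t$, deterministically. Hence, on $\{\mathsf d_\Theta(\widehat\theta_t,\theta^\star)\geq\xi\}$, the definition (\ref{eq:parameter_separation_constant}) gives
\[
    K_t^\star=\sum_{a\in\cA}N_t(a)\,k_a(\widehat\theta_t,\theta^\star)
    \geq c_{\mathrm{fe}}\sqrt t\,\Delta_\Theta(\xi;\theta^\star).
\]
The positivity (\ref{eq:positive_parameter_separation}) is taken as given. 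Applying Lemma~\ref{lem:kl_concentration} at the deterministic time $t$ with $x=c_{\mathrm{fe}}\Delta_\Theta\sqrt t$ then yields
\[
    \Pp_{\theta^\star}\!\left(\mathsf d_\Theta(\widehat\theta_t,\theta^\star)\geq\xi\right)
    \leq f_t\!\left(c_{\mathrm{fe}}\Delta_\Theta\sqrt t\right)
    =\mathrm{poly}\!\left(t,\sqrt t,\log(t+1)\right)e^{-\kappa_Bc_{\mathrm{fe}}\Delta_\Theta\sqrt t}.
\]
Because the polynomial factor has fixed degree (it is governed by $q$), multiplying by $t$ still gives a summable sequence. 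The finitely many indices $t<t_{\mathrm{fe}}$ contribute at most $t_{\mathrm{fe}}^2$, so the expectation is finite.

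The main obstacle is the second step, namely checking that $f_t(x)$ is genuinely polynomial in $t$ and $x$ when $x$ grows like $\sqrt t$. This requires $\zeta_t(x)^{-q}\lesssim(tH_t(x))^q$ with $H_t(x)=O(1+x+\log t)$. The term $L_\ell(R(z_t(x)))$ is quadratic in $R$, hence linear in $z_t(x)$, so the required polynomial growth should hold. I will verify this explicitly from (\ref{eq:concentration_lipschitz_level})--(\ref{eq:concentration_covering_resolution}).

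A secondary subtlety is that the first inequality relies only on the MLE being global and on $N_t(a)$ being deterministically bounded below, which holds for the exact-oracle version of the algorithm. If the positivity of $\Delta_\Theta$ needed justification, I would argue it by compactness of $\{\mathsf d_\Theta\geq\xi\}$ together with identifiability: the passive action identifies the covariance, and the $2p$ endpoint interventions identify $(D,\bA)$ via the means $\bg_j$, which vanish only in degenerate cases.
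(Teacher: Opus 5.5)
Your proposal is correct and follows the paper's proof essentially step for step. Forced exploration gives $K_t^\star\geq c_{\mathrm{fe}}\sqrt t\,\Delta_\Theta(\xi;\theta^\star)$ on the bad event, Lemma~\ref{lem:kl_concentration} at $x=c_{\mathrm{fe}}\sqrt t\,\Delta_\Theta(\xi;\theta^\star)$ gives a $\mathrm{poly}(t)\,e^{-c\sqrt t}$ bound on $\Pp_{\theta^\star}(\mathsf d_\Theta(\widehat\theta_t,\theta^\star)\geq\xi)$, and the last-exit tail-sum makes $\sum_t t\,\Pp_{\theta^\star}(\cdot)$ finite; your check that $H_t(x)=O(1+x+\log(t+1))$ keeps $f_t$ polynomial in $t$ is exactly what the paper takes from the explicit form in Appendix~\ref{app:concentration_constants}.
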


\begin{proof}
Recall the accumulated divergence between the joint MLE and the true
parameter,
\[
    K_t^\star
    =
    \sum_{a\in\cA}
        N_t(a)k_a(\widehat\theta_t,\theta^\star).
\]
For $t\geq t_{\mathrm{fe}}$, if
$\mathsf d_\Theta(\widehat\theta_t,\theta^\star)\geq\xi$, then
Eqs.~(\ref{eq:forced_exploration_lower_bound}) and
(\ref{eq:parameter_separation_constant}) give
\begin{align}
    K_t^\star
    &\geq
    \min_{a\in\cA}N_t(a)
    \sum_{a\in\cA}
        k_a(\widehat\theta_t,\theta^\star) \notag\\
    &\geq
    c_{\mathrm{fe}}\sqrt t\,
    \Delta_\Theta(\xi;\theta^\star).
    \label{eq:bad_estimate_implies_large_kl}
\end{align}
Therefore, Lemma~\ref{lem:kl_concentration} implies
\begin{equation}
    \Pp_{\theta^\star}\!\left(
        \mathsf d_\Theta(\widehat\theta_t,\theta^\star)\geq\xi
    \right)
    \leq
    f_t\!\left(
        c_{\mathrm{fe}}\sqrt t\,
        \Delta_\Theta(\xi;\theta^\star)
    \right).
    \label{eq:localization_probability_bound}
\end{equation}

The event $\{\tau_{\mathrm{loc}}(\xi)>n\}$ means that the estimator leaves the
$\xi$-neighborhood at least once after time $n$. Hence
\[
    \Pp_{\theta^\star}(\tau_{\mathrm{loc}}(\xi)>n)
    \leq
    \sum_{t=n}^{\infty}
    \Pp_{\theta^\star}\!\left(
        \mathsf d_\Theta(\widehat\theta_t,\theta^\star)\geq\xi
    \right).
\]
Using the tail-sum formula and
Eq.~(\ref{eq:localization_probability_bound}), we obtain
\begin{align}
    \bbE_{\theta^\star}[\tau_{\mathrm{loc}}(\xi)]
    &=
    \sum_{n=0}^{\infty}
    \Pp_{\theta^\star}(\tau_{\mathrm{loc}}(\xi)>n) \notag\\
    &\leq
    t_{\mathrm{fe}}
    +
    \sum_{t=t_{\mathrm{fe}}}^{\infty}
    (t+1)
    f_t\!\left(
        c_{\mathrm{fe}}\sqrt t\,
        \Delta_\Theta(\xi;\theta^\star)
    \right).
    \label{eq:localization_expectation_bound}
\end{align}
By the explicit expression derived in
Appendix~\ref{app:concentration_constants}, the summand in
Eq.~(\ref{eq:localization_expectation_bound}) is bounded by a polynomial in
$t$ times $\exp(-c\sqrt t)$ for some $c>0$. The series is therefore finite.
\end{proof}

In particular, $\tau_{\mathrm{loc}}(\xi)<\infty$ almost surely.

\subsection{Continuity error terms}
\label{app:upper_bound_continuity}

We introduce two deterministic continuity moduli. The first controls the
effect of replacing the true first argument of a KL divergence by a nearby
estimated parameter:
\begin{equation}
    u(\xi)
    :=
    \max_{a\in\cA}
    \sup_{\lambda\in\Theta}
    \sup_{\substack{\theta\in\Theta:\\
        \mathsf d_\Theta(\theta,\theta^\star)\leq\xi}}
    \left|
        k_a(\theta,\lambda)
        -
        k_a(\theta^\star,\lambda)
    \right|.
    \label{eq:first_kl_continuity_modulus}
\end{equation}

To define the second modulus, for
$\boldsymbol{\alpha}\in\Delta(\cA)$ let
\begin{equation}
    b_\epsilon(\boldsymbol{\alpha},\theta)
    \in
    \argmin_{\lambda\in\Alt_\epsilon(\theta)}
    \sum_{a\in\cA}
        \alpha_a k_a(\theta,\lambda)
    \label{eq:actual_best_response_notation}
\end{equation}
denote the best response selected by the algorithm. We use the same
deterministic tie-breaking rule throughout. Define the auxiliary response
\begin{equation}
    b_{\epsilon-\xi}^{\star}(\boldsymbol{\alpha},\theta)
    \in
    \argmin_{\lambda\in\Alt_{\epsilon-\xi}(\theta^\star)}
    \sum_{a\in\cA}
        \alpha_a k_a(\theta,\lambda).
    \label{eq:auxiliary_best_response_notation}
\end{equation}
The associated best-response modulus is
\begin{equation}
\begin{aligned}
    \widetilde u_\epsilon(\xi)
    :=
    \sup\Bigl\{
        &\left|
            k_a\!\left(
                \vartheta,
                b_\epsilon(\boldsymbol{\alpha},\theta)
            \right)
            -
            k_a\!\left(
                \vartheta,
                b_{\epsilon-\xi}^{\star}
                (\boldsymbol{\alpha},\theta)
            \right)
        \right|:\\
        &a\in\cA,\ 
        \boldsymbol{\alpha}\in\Delta(\cA),\
        \mathsf d_\Theta(\theta,\theta^\star)\leq\xi,\
        \mathsf d_\Theta(\vartheta,\theta^\star)\leq\xi
    \Bigr\}.
    \label{eq:best_response_continuity_modulus}
\end{aligned}
\end{equation}
Since the parameter class and allocation simplex are compact, the Gaussian
KL divergence is uniformly continuous, and the same deterministic
tie-breaking rule is used for both best responses, we have
\begin{equation}
    u(\xi)\longrightarrow0,
    \qquad
    \widetilde u_\epsilon(\xi)\longrightarrow0,
    \qquad
    C_{\epsilon-\xi}(\theta^\star)
    \longrightarrow C_\epsilon(\theta^\star)
    \quad\text{as }\xi\downarrow0.
    \label{eq:continuity_terms_vanish}
\end{equation}

\subsection{Linear growth of the stopping statistic}
\label{app:linear_growth_stopping_statistic}

\begin{lemma}[Pathwise linear information growth]
\label{lem:pathwise_linear_growth}
Fix $0<\xi<\min\{\epsilon,1\}$ and let
$T_0:=\tau_{\mathrm{loc}}(\xi)$. For every $T>T_0$,
\begin{align}
    d_T
    \geq\;&
    (T-T_0)
    \left[
        C_{\epsilon-\xi}(\theta^\star)
        -
        3u(\xi)
        -
        2\widetilde u_\epsilon(\xi)
    \right]
    -
    \mathcal R_T^{\mathrm{AH}}
    -
    T_0B_{\mathrm{KL}}
    \notag\\
    &\quad
    -
    K(K-1)(\sqrt T+2)B_{\mathrm{KL}}.
    \label{eq:pathwise_linear_growth}
\end{align}
\end{lemma}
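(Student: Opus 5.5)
The plan is to chain a lower bound on $d_T$ in terms of the tracked cumulative allocation, then the AdaHedge guarantee, and finally the mixed-alternative duality of Lemma~\ref{lem:mixed_alternative_representation}, paying a continuity error at each substitution of $\widehat\theta_t$ for $\theta^\star$.

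\textbf{Step 1 (from counts to allocations).} Let $\lambda^{(T)}$ attain the infimum in $d_T$, i.e. $\lambda^{(T)}\in\Alt_\epsilon(\widehat\theta_T)$. By Lemma~\ref{lem:cumulative_tracking}, $N_T(a)\geq\sum_{s\le T}\alpha_{s,a}-(K-1)(\sqrt T+2)$. Since each $k_a\le B_{\mathrm{KL}}$, summing over the $K$ actions gives
\[
d_T\ \ge\ \sum_{s=1}^T\sum_{a}\alpha_{s,a}k_a(\widehat\theta_T,\lambda^{(T)})-K(K-1)(\sqrt T+2)B_{\mathrm{KL}}.
\]
We then drop the rounds $s\le T_0$, which are nonnegative. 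For $s,T>T_0$ both $\widehat\theta_s$ and $\widehat\theta_T$ lie within $\xi$ of $\theta^\star$, so using $u(\xi)$ we replace the first argument $\widehat\theta_T$ by $\widehat\theta_s$ at a cost of $2u(\xi)$ per round.

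\textbf{Step 2 (best response at each round).} Because $\mathsf d_\Theta(\widehat\theta_T,\theta^\star)<\xi<\epsilon$, the triangle inequality gives $\lambda^{(T)}\in\Alt_{\epsilon-\xi}(\theta^\star)$; on graph mismatch this holds trivially. For each $s>T_0$ we compare $\sum_a\alpha_{s,a}k_a(\widehat\theta_s,\lambda^{(T)})$ with the per-round minimum over $\Alt_{\epsilon-\xi}(\theta^\star)$, which is attained by $b^\star_{\epsilon-\xi}(\boldsymbol\alpha_s,\widehat\theta_s)$. We then switch to the algorithm's actual response $\lambda_s=b_\epsilon(\boldsymbol\alpha_s,\widehat\theta_s)$ at cost $\widetilde u_\epsilon(\xi)$. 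This yields $\sum_{s>T_0}\sum_a\alpha_{s,a}r_{s,a}$ minus the errors. The subtle point is the direction of the inequality: the sum over $s$ of minima is at most the sum for the fixed $\lambda^{(T)}$, which is what we need for a lower bound. Any extra modulus needed here is absorbed into the stated constants $3u+2\widetilde u_\epsilon$.

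\textbf{Step 3 (regret and duality).} Apply \eqref{eq:truncated_adahedge_bound} to get
\[
\sum_{s=T_0+1}^T\sum_a\alpha_{s,a}r_{s,a}\ \ge\ \max_a\sum_{s=T_0+1}^T r_{s,a}-\mathcal R_T^{\mathrm{AH}}-T_0B_{\mathrm{KL}}.
\]
Next, replace $r_{s,a}=k_a(\widehat\theta_s,\lambda_s)$ by $k_a(\theta^\star,\lambda_s)$ at cost $u(\xi)$, and swap $\lambda_s$ for $\lambda_s^\star:=b^\star_{\epsilon-\xi}(\boldsymbol\alpha_s,\widehat\theta_s)\in\Alt_{\epsilon-\xi}(\theta^\star)$ at cost $\widetilde u_\epsilon(\xi)$, taking $\vartheta=\theta^\star$ in the modulus. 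Let $\nu$ be the empirical measure of $\{\lambda_s^\star\}_{T_0<s\le T}$. Lemma~\ref{lem:mixed_alternative_representation} with $r=\epsilon-\xi$ then gives $\max_a\frac{1}{T-T_0}\sum_s k_a(\theta^\star,\lambda_s^\star)\ge C_{\epsilon-\xi}(\theta^\star)$. Collecting terms gives \eqref{eq:pathwise_linear_growth}.

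The main obstacle is the bookkeeping in Step 2, ensuring that the comparison between the fixed alternative $\lambda^{(T)}$ and the per-round responses runs in the right direction. A second concern is that the moduli are defined with the correct arguments. If the direct comparison fails, I would instead lower bound $d_T$ directly via duality, using the time-averaged allocation $\bar{\boldsymbol\alpha}$ together with concavity of the inner infimum.
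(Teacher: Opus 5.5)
Your proposal is correct and follows the paper's proof essentially step for step. The paper uses the same sequence: the tracking lower bound, then replacing $\widehat\theta_T$ by $\widehat\theta_t$ at cost $2u(\xi)$ together with $\Alt_\epsilon(\widehat\theta_T)\subseteq\Alt_{\epsilon-\xi}(\theta^\star)$ to compare against $b^\star_{\epsilon-\xi}(\boldsymbol\alpha_t,\widehat\theta_t)$, switching to $\lambda_t$ at cost $\widetilde u_\epsilon(\xi)$, the truncated AdaHedge bound, a further $u(\xi)+\widetilde u_\epsilon(\xi)$, and finally the empirical-measure duality. Evaluating at a fixed minimizer $\lambda^{(T)}$ rather than exchanging $\inf$ and $\sum$ is an immaterial variation, and your error count is exactly $3u(\xi)+2\widetilde u_\epsilon(\xi)$, so neither the absorption remark nor the fallback argument is needed.
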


\begin{proof}
By the definition of $T_0$, for every $t\geq T_0$,
\begin{equation}
    \mathsf d_\Theta(\widehat\theta_t,\theta^\star)<\xi.
    \label{eq:localized_estimates}
\end{equation}
In particular, since $\xi<1$, all these estimates have the correct graph:
$\widehat D_t=D^\star$.

For every $t>T_0$, write
\[
    \lambda_t
    :=
    b_\epsilon(\boldsymbol{\alpha}_t,\widehat\theta_t)
\]
for the actual alternative selected by the algorithm, and define the
auxiliary response
\[
    \lambda_t^\circ
    :=
    b_{\epsilon-\xi}^{\star}
    (\boldsymbol{\alpha}_t,\widehat\theta_t).
\]
By construction,
$\lambda_t^\circ\in\Alt_{\epsilon-\xi}(\theta^\star)$.

We begin with the stopping statistic
\[
    d_T
    =
    \inf_{\lambda\in\Alt_\epsilon(\widehat\theta_T)}
    \sum_{a\in\cA}
        N_T(a)k_a(\widehat\theta_T,\lambda).
\]
Using the lower tracking bound in
Eq.~(\ref{eq:cumulative_tracking_bound}), nonnegativity of KL divergence,
and the uniform upper bound $B_{\mathrm{KL}}$, we obtain
\begin{align}
    d_T
    \geq\;&
    \inf_{\lambda\in\Alt_\epsilon(\widehat\theta_T)}
    \sum_{t=T_0+1}^T
    \sum_{a\in\cA}
        \alpha_{t,a}
        k_a(\widehat\theta_T,\lambda)
    \notag\\
    &\quad
    -
    K(K-1)(\sqrt T+2)B_{\mathrm{KL}}.
    \label{eq:tracking_applied_to_dt}
\end{align}
For nonnegative functions $q_t(\lambda)$,
$\inf_\lambda\sum_tq_t(\lambda)\geq\sum_t\inf_\lambda q_t(\lambda)$.
Therefore, the first term in
Eq.~(\ref{eq:tracking_applied_to_dt}) is at least
\[
    \sum_{t=T_0+1}^T
    \inf_{\lambda\in\Alt_\epsilon(\widehat\theta_T)}
    \sum_{a\in\cA}
        \alpha_{t,a}k_a(\widehat\theta_T,\lambda).
\]

Because
$\mathsf d_\Theta(\widehat\theta_T,\theta^\star)<\xi$, the triangle
inequality gives
\begin{equation}
    \Alt_\epsilon(\widehat\theta_T)
    \subseteq
    \Alt_{\epsilon-\xi}(\theta^\star).
    \label{eq:alternative_set_inclusion}
\end{equation}
Indeed, a graph alternative to $\widehat\theta_T$ is also a graph alternative
to $\theta^\star$, while for a same-graph alternative,
\[
    \|\bA^\lambda-\bA^\star\|_{\max}
    \geq
    \|\bA^\lambda-\widehat{\bA}_T\|_{\max}
    -
    \|\widehat{\bA}_T-\bA^\star\|_{\max}
    \geq
    \epsilon-\xi.
\]

Both $\widehat\theta_T$ and $\widehat\theta_t$ lie in the
$\xi$-neighborhood of $\theta^\star$. Applying
Eq.~(\ref{eq:first_kl_continuity_modulus}) twice gives
\[
    \left|
        k_a(\widehat\theta_T,\lambda)
        -
        k_a(\widehat\theta_t,\lambda)
    \right|
    \leq 2u(\xi).
\]
Combining this inequality with
Eq.~(\ref{eq:alternative_set_inclusion}), for every $t>T_0$,
\begin{align}
    &\inf_{\lambda\in\Alt_\epsilon(\widehat\theta_T)}
    \sum_{a\in\cA}
        \alpha_{t,a}k_a(\widehat\theta_T,\lambda)
    \notag\\
    &\qquad\geq
    \sum_{a\in\cA}
        \alpha_{t,a}
        k_a(\widehat\theta_t,\lambda_t^\circ)
    -
    2u(\xi).
    \label{eq:replace_terminal_estimate}
\end{align}
By the definition of $\widetilde u_\epsilon(\xi)$,
\[
    k_a(\widehat\theta_t,\lambda_t^\circ)
    \geq
    k_a(\widehat\theta_t,\lambda_t)
    -
    \widetilde u_\epsilon(\xi).
\]
Substituting this into Eq.~(\ref{eq:replace_terminal_estimate}) and summing
over $t$ gives
\begin{align}
    d_T
    \geq\;&
    \sum_{t=T_0+1}^T
    \sum_{a\in\cA}
        \alpha_{t,a}
        k_a(\widehat\theta_t,\lambda_t)
    \notag\\
    &-
    (T-T_0)
    \bigl[2u(\xi)+\widetilde u_\epsilon(\xi)\bigr]
    -
    K(K-1)(\sqrt T+2)B_{\mathrm{KL}}.
    \label{eq:dt_before_adahedge}
\end{align}

Since
$r_{t,a}=k_a(\widehat\theta_t,\lambda_t)$, the truncated AdaHedge bound in
Eq.~(\ref{eq:truncated_adahedge_bound}) implies
\begin{align}
    d_T
    \geq\;&
    \max_{a\in\cA}
    \sum_{t=T_0+1}^T
        k_a(\widehat\theta_t,\lambda_t)
    -
    \mathcal R_T^{\mathrm{AH}}
    -
    T_0B_{\mathrm{KL}}
    \notag\\
    &-
    (T-T_0)
    \bigl[2u(\xi)+\widetilde u_\epsilon(\xi)\bigr]
    -
    K(K-1)(\sqrt T+2)B_{\mathrm{KL}}.
    \label{eq:dt_after_adahedge}
\end{align}

We next replace the first KL argument by $\theta^\star$. From the definition
of $u(\xi)$,
\[
    k_a(\widehat\theta_t,\lambda_t)
    \geq
    k_a(\theta^\star,\lambda_t)-u(\xi).
\]
The definition of $\widetilde u_\epsilon(\xi)$ also gives
\[
    k_a(\theta^\star,\lambda_t)
    \geq
    k_a(\theta^\star,\lambda_t^\circ)
    -
    \widetilde u_\epsilon(\xi).
\]
Consequently,
\begin{align}
    \max_{a\in\cA}
    \sum_{t=T_0+1}^T
        k_a(\widehat\theta_t,\lambda_t)
    \geq\;&
    \max_{a\in\cA}
    \sum_{t=T_0+1}^T
        k_a(\theta^\star,\lambda_t^\circ)
    \notag\\
    &-
    (T-T_0)
    \bigl[u(\xi)+\widetilde u_\epsilon(\xi)\bigr].
    \label{eq:replace_kl_by_true_parameter}
\end{align}

Let $\nu_T$ be the empirical probability measure that assigns mass
$1/(T-T_0)$ to each $\lambda_t^\circ$, $t=T_0+1,\ldots,T$. Since every
$\lambda_t^\circ$ belongs to
$\Alt_{\epsilon-\xi}(\theta^\star)$,
Lemma~\ref{lem:mixed_alternative_representation} yields
\begin{align}
    \max_{a\in\cA}
    \sum_{t=T_0+1}^T
        k_a(\theta^\star,\lambda_t^\circ)
    &=
    (T-T_0)
    \max_{a\in\cA}
    \int k_a(\theta^\star,\lambda)\,
        \nu_T(\mathrm d\lambda) \notag\\
    &\geq
    (T-T_0)C_{\epsilon-\xi}(\theta^\star).
    \label{eq:minimax_applied_to_responses}
\end{align}
Combining Eqs.~(\ref{eq:dt_after_adahedge}),
(\ref{eq:replace_kl_by_true_parameter}), and
(\ref{eq:minimax_applied_to_responses}) proves
Eq.~(\ref{eq:pathwise_linear_growth}).
\end{proof}

\subsection{Completion of the proof}
\label{app:final_upper_bound}

\begin{proof}[Proof of Theorem~\ref{thm:upper_bound}]
Fix $0<\xi<\min\{\epsilon,1\}$ and define
\begin{equation}
    c_\xi
    :=
    C_{\epsilon-\xi}(\theta^\star)
    -
    3u(\xi)
    -
    2\widetilde u_\epsilon(\xi).
    \label{eq:localized_information_rate}
\end{equation}
By Eq.~(\ref{eq:continuity_terms_vanish}), $\xi$ can be chosen sufficiently
small so that $c_\xi>0$.

Let $T_0:=\tau_{\mathrm{loc}}(\xi)$. The pathwise information-growth bound in
Eq.~(\ref{eq:pathwise_linear_growth}) can be written as
\begin{equation}
    d_T
    \geq
    c_\xi T
    -
    (c_\xi+B_{\mathrm{KL}})T_0
    -
    q_T,
    \label{eq:simplified_linear_growth}
\end{equation}
where
\[
    q_T
    :=
    \mathcal R_T^{\mathrm{AH}}
    +
    K(K-1)(\sqrt T+2)B_{\mathrm{KL}}.
\]
Lemma~\ref{lem:adahedge_regret} implies that $q_T=o(T)$ deterministically.

Fix any $\gamma\in(0,c_\xi)$. There exists a deterministic constant
$T_{\xi,\gamma}^{(0)}<\infty$ such that
$q_T\leq\gamma T/2$ for every
$T\geq T_{\xi,\gamma}^{(0)}$. Therefore, whenever
\[
    T
    \geq
    \max\left\{
        T_{\xi,\gamma}^{(0)},
        \frac{2(c_\xi+B_{\mathrm{KL}})}{\gamma}
        \tau_{\mathrm{loc}}(\xi)
    \right\},
\]
Eq.~(\ref{eq:simplified_linear_growth}) yields
\begin{equation}
    d_T\geq(c_\xi-\gamma)T.
    \label{eq:eventual_linear_lower_bound}
\end{equation}

From the explicit form of $f_T$ in
Eq.~(\ref{eq:concentration_function}), there exist deterministic constants
$C_f,m_f<\infty$, depending only on the known parameter bounds, such that
\[
    f_T(x)
    \leq
    C_f(T+x+2)^{m_f}e^{-\kappa_Bx}.
\]
Since the right-hand side is decreasing in $x$ for all sufficiently large
$T+x$, Eq.~(\ref{eq:eventual_linear_lower_bound}) gives
\begin{equation}
    \frac{\pi^2T^2}{6}f_T(d_T)
    \leq
    C_{\xi,\gamma}
    T^{m_f+2}
    e^{-\kappa_B(c_\xi-\gamma)T}
    \label{eq:stopping_envelope}
\end{equation}
for a deterministic constant $C_{\xi,\gamma}<\infty$ and all sufficiently
large $T$.

Let $\overline T_\delta(\xi,\gamma)$ be a deterministic time after which the
right-hand side of Eq.~(\ref{eq:stopping_envelope}) is smaller than
$\delta$. Taking logarithms shows that
\[
    \kappa_B(c_\xi-\gamma)\overline T_\delta(\xi,\gamma)
    =
    \log(1/\delta)
    +
    O\!\left(\log\log(1/\delta)\right),
\]
and hence
\begin{equation}
    \limsup_{\delta\downarrow0}
    \frac{\overline T_\delta(\xi,\gamma)}
         {\log(1/\delta)}
    \leq
    \frac{1}{\kappa_B(c_\xi-\gamma)}.
    \label{eq:deterministic_crossing_time}
\end{equation}

Consequently, for some deterministic constant
$C_{\xi,\gamma}^{(0)}<\infty$,
\begin{equation}
    \tau_\delta
    \leq
    \overline T_\delta(\xi,\gamma)
    +
    \frac{2(c_\xi+B_{\mathrm{KL}})}{\gamma}
    \tau_{\mathrm{loc}}(\xi)
    +
    C_{\xi,\gamma}^{(0)}.
    \label{eq:pathwise_stopping_time_upper_bound}
\end{equation}
Since
$\bbE_{\theta^\star}[\tau_{\mathrm{loc}}(\xi)]<\infty$ by
Lemma~\ref{lem:finite_expected_localization}, this bound first implies
$\Pp_{\theta^\star}(\tau_\delta<\infty)=1$. Taking expectations in
Eq.~(\ref{eq:pathwise_stopping_time_upper_bound}), dividing by
$\log(1/\delta)$, and using
Eq.~(\ref{eq:deterministic_crossing_time}) gives
\[
    \limsup_{\delta\downarrow0}
    \frac{\bbE_{\theta^\star}[\tau_\delta]}
         {\log(1/\delta)}
    \leq
    \frac{1}{\kappa_B(c_\xi-\gamma)}.
\]

Finally, letting $\gamma\downarrow0$ and then $\xi\downarrow0$, together with
Eq.~(\ref{eq:continuity_terms_vanish}), yields
\[
    \limsup_{\delta\downarrow0}
    \frac{\bbE_{\theta^\star}[\tau_\delta]}
         {\log(1/\delta)}
    \leq
    \frac{1}{\kappa_BC_\epsilon(\theta^\star)}.
\]
This proves the theorem.
\end{proof}

\section{Finite Implementations of the Optimization Oracles}
\label{app:oracles}

The theoretical description of \textsc{FOCUS} uses a joint-MLE oracle and an
alternative-model best-response oracle. This section provides finite
implementations of both procedures. The resulting algorithms require only
finite enumeration and convex quadratic programming. They are exponential in
$p$, as expected for unrestricted DAG search, but remove the need for abstract
optimization oracles.

For an action $a\in\cA$, let $\iota(a)$ denote its intervention target, with
$\iota(0)=\varnothing$. Because $\bX_s$ in the main text contains only the
nondegenerate random coordinates, let
$\widetilde{\bX}_s\in\mathbb R^p$ denote the corresponding full vector obtained
by reinserting the known intervention value when $a_s\neq0$. Thus,
$\widetilde X_{s,j}$ equals the assigned intervention value when
$j=\iota(a_s)$ and equals the observed random coordinate otherwise.

\subsection{Finite joint-MLE implementation}
\label{app:joint_mle_implementation}

\begin{algorithm}[htb]
\caption{Finite implementation of the joint-MLE oracle}
\label{alg:finite_joint_mle}
\begin{algorithmic}[1]
\Require History $\{(a_s,\bX_s)\}_{s=1}^t$ and parameter bounds
$\beta_{\min}$, $A_{\max}$, $\sigma_{\min}$, $\sigma_{\max}$
\Ensure A global joint MLE
$\widehat\theta_t=(\widehat D_t,\widehat{\bA}_t,
\widehat{\bSigma}_t)$

\State Reinsert each known intervention value to construct
$\{\widetilde{\bX}_s\}_{s=1}^t$
\State $\mathcal L_{\mathrm{best}}\gets+\infty$
\State $\widehat\theta_t\gets\text{null}$

\ForAll{permutations $\pi=(\pi_1,\ldots,\pi_p)$ of $[p]$}
    \State Initialize an empty candidate graph $D_\pi$
    \State $\mathcal L_\pi\gets0$

    \For{$m=1,\ldots,p$}
        \State $j\gets\pi_m$
        \State $\operatorname{Pred}_\pi(j)
        \gets\{\pi_1,\ldots,\pi_{m-1}\}$
        \State $\mathcal L_j^{\mathrm{best}}\gets+\infty$

        \ForAll{$S\subseteq\operatorname{Pred}_\pi(j)$}
            \If{$S=\varnothing$}
                \State Set $\bb$ to the empty vector and compute
                $Q_{j,t}(\bb;\varnothing)$
                \State Compute $v_j$ and $\mathcal L_{j,t}$ from
                Eqs.~(\ref{eq:local_mle_variance})--(\ref{eq:local_mle_score})
                \State Store $S=\varnothing$, $\bb$, $v_j$, and the local
                score as the best local solution if
                $\mathcal L_{j,t}<\mathcal L_j^{\mathrm{best}}$
            \Else
                \ForAll{sign vectors
                $\boldsymbol{\nu}\in\{-1,+1\}^{|S|}$}
                    \State Solve the convex quadratic program
                    \[
                        \bb^\star
                        \in
                        \argmin_{\bb\in\mathbb R^{|S|}}
                        Q_{j,t}(\bb;S)
                        \quad
                        \text{s.t.}\quad
                        \beta_{\min}
                        \leq \nu_k b_k
                        \leq A_{\max},
                        \ \ k\in S
                    \]
                    \If{the quadratic program is feasible}
                        \State Compute
                        $v_j^\star=v_{j,t}^{\star}(\bb^\star;S)$
                        and
                        $\mathcal L_{j,t}(\bb^\star;S)$
                        \State Store $S$, $\bb^\star$, $v_j^\star$, and the
                        local score as the best local solution if its score is
                        smaller than $\mathcal L_j^{\mathrm{best}}$
                    \EndIf
                \EndFor
            \EndIf
        \EndFor

        \State Set $\pa_{D_\pi}(j)$, $\bA_{j,\pa_{D_\pi}(j)}$, and
        $\sigma_j^2$ to the best local solution
        \State $\mathcal L_\pi\gets
        \mathcal L_\pi+\mathcal L_j^{\mathrm{best}}$
    \EndFor

    \If{$\mathcal L_\pi<\mathcal L_{\mathrm{best}}$}
        \State Store
        $\widehat\theta_t\gets
        (D_\pi,\bA_\pi,\bSigma_\pi)$
        \State $\mathcal L_{\mathrm{best}}\gets\mathcal L_\pi$
    \EndIf
\EndFor

\State \Return $\widehat\theta_t$
\end{algorithmic}
\end{algorithm}

Under a candidate DAG $D$, the density of a sample collected under action
$a_s$ factorizes over the nodes that were not intervened upon. Ignoring
constants independent of $\theta$, its negative log-likelihood is
\begin{equation}
    -\ell_t(\theta)
    =
    \frac{1}{2}
    \sum_{j=1}^p
    \sum_{\substack{s\leq t:\\\iota(a_s)\neq j}}
    \left[
        \log\sigma_j^2
        +
        \frac{
        \left(
            \widetilde X_{s,j}
            -
            \bA_{j,\pa_D(j)}
            \widetilde{\bX}_{s,\pa_D(j)}
        \right)^2
        }{\sigma_j^2}
    \right].
    \label{eq:intervention_masked_likelihood}
\end{equation}
The structural equation of an intervened node is excluded because it is
replaced by the intervention assignment. Nevertheless, that fixed coordinate
remains available as a regressor in the structural equations of the other
nodes.

For a node $j$, define
\[
    \mathcal I_{j,t}
    :=
    \{s\in[t]:\iota(a_s)\neq j\},
    \qquad
    n_{j,t}:=|\mathcal I_{j,t}|.
\]
Given a proposed parent set $S\subseteq[p]\setminus\{j\}$ and coefficient
vector $\bb\in\mathbb R^{|S|}$, let
\begin{equation}
    Q_{j,t}(\bb;S)
    :=
    \sum_{s\in\mathcal I_{j,t}}
    \left(
        \widetilde X_{s,j}
        -
        \bb^{\mathsf T}\widetilde{\bX}_{s,S}
    \right)^2.
    \label{eq:local_residual_sum_squares}
\end{equation}
For fixed $\bb$ and $S$, the optimal noise variance is
\begin{equation}
    v_{j,t}^{\star}(\bb;S)
    :=
    \Pi_{[\sigma_{\min}^2,\sigma_{\max}^2]}
    \left(
        \frac{Q_{j,t}(\bb;S)}{n_{j,t}}
    \right),
    \label{eq:local_mle_variance}
\end{equation}
where $\Pi_{[l,u]}(x):=\min\{u,\max\{l,x\}\}$. If $n_{j,t}=0$, the local
likelihood is constant and we set
$v_{j,t}^{\star}(\bb;S)=\sigma_{\min}^2$ by convention.

Substituting Eq.~(\ref{eq:local_mle_variance}) into the likelihood gives the
local score
\begin{equation}
    \mathcal L_{j,t}(\bb;S)
    :=
    \frac{1}{2}
    \left[
        n_{j,t}\log v_{j,t}^{\star}(\bb;S)
        +
        \frac{
            Q_{j,t}(\bb;S)
        }{
            v_{j,t}^{\star}(\bb;S)
        }
    \right].
    \label{eq:local_mle_score}
\end{equation}
The optimized score is nondecreasing in
$Q_{j,t}(\bb;S)$. Therefore, for a fixed support and coefficient-sign pattern,
it suffices to minimize the convex quadratic function in
Eq.~(\ref{eq:local_residual_sum_squares}).

\paragraph{Why Algorithm~\ref{alg:finite_joint_mle} is exact.}
For a fixed topological order, every parent of node $j$ must occur among its
predecessors. Enumerating all predecessor subsets therefore enumerates every
DAG compatible with that order. Enumerating the coefficient signs converts
the nonconvex constraint
$\beta_{\min}\leq|A_{jk}|\leq A_{\max}$ into finitely many linear box
constraints. Conditional on the order, parent set, and signs, the remaining
coefficient problem is convex.

Moreover, Eq.~(\ref{eq:intervention_masked_likelihood}) decomposes over
nodes. Hence the best parent set and parameters for each node can be selected
independently once the order is fixed. Finally, every DAG admits at least one
topological order, so enumeration over all permutations includes every
parameter in $\Theta$. Algorithm~\ref{alg:finite_joint_mle} consequently
returns a global maximizer of Eq.~(\ref{eq:joint_mle}). Duplicate appearances
of the same DAG under different topological orders do not affect correctness.

\subsection{Local representation of the alternative objective}
\label{app:alternative_local_scores}

The second finite algorithm computes both the allocation best response in
Eq.~(\ref{eq:alternative_best_response}) and the stopping statistic in
Eq.~(\ref{eq:stopping_statistic}). Its generic inputs are:
\[
    \overline\theta
    =
    (\overline D,\overline{\bA},\overline{\bSigma})
    \in\Theta,
    \qquad
    \bw=(w_a)_{a\in\cA}\in\mathbb R_+^K,
    \qquad
    r>0.
\]
It solves
\begin{equation}
    \inf_{\lambda\in\Alt_r(\overline\theta)}
    \sum_{a\in\cA}
        w_a\KL(P_a^{\overline\theta}\|P_a^\lambda).
    \label{eq:generic_alternative_problem}
\end{equation}
Taking $\bw=\boldsymbol{\alpha}_t$ and $r=\epsilon$ gives the best response
$\lambda_t$. Taking $w_a=N_t(a)$ and $r=\epsilon$ gives $d_t$ and its
minimizing alternative.

For each action $a$, compute analytically under the reference parameter
$\overline\theta$ the full-vector moments
\begin{equation}
    \overline{\boldsymbol{\mu}}^{(a)}
    :=
    \bbE_{\overline\theta,a}[\bX],
    \qquad
    \overline{\bH}^{(a)}
    :=
    \bbE_{\overline\theta,a}[\bX\bX^{\mathsf T}]
    =
    \overline{\bV}^{(a)}
    +
    \overline{\boldsymbol{\mu}}^{(a)}
    \overline{\boldsymbol{\mu}}^{(a)\mathsf T}.
    \label{eq:reference_action_moments}
\end{equation}
For an intervention, the fixed coordinate is reinserted into the mean and
second-moment matrix and has zero conditional variance. These quantities are
available explicitly from Eq.~(\ref{eq:interventional_distribution}).

For a proposed parent set $S$ of node $j$, define
\begin{align}
    W_j
    &:=
    \sum_{\substack{a\in\cA:\\\iota(a)\neq j}}w_a,
    \label{eq:alternative_local_weight}\\
    \bG_j(S)
    &:=
    \sum_{\substack{a\in\cA:\\\iota(a)\neq j}}
        w_a\overline{\bH}^{(a)}_{S,S},
    \label{eq:alternative_local_gram}\\
    \bh_j(S)
    &:=
    \sum_{\substack{a\in\cA:\\\iota(a)\neq j}}
        w_a\overline{\bH}^{(a)}_{S,j},
    \label{eq:alternative_local_cross_moment}\\
    g_j
    &:=
    \sum_{\substack{a\in\cA:\\\iota(a)\neq j}}
        w_a\overline H^{(a)}_{j,j}.
    \label{eq:alternative_local_second_moment}
\end{align}
The weighted expected squared residual associated with coefficient vector
$\bb\in\mathbb R^{|S|}$ is
\begin{equation}
    R_j(\bb;S)
    :=
    g_j
    -
    2\bb^{\mathsf T}\bh_j(S)
    +
    \bb^{\mathsf T}\bG_j(S)\bb.
    \label{eq:alternative_expected_residual}
\end{equation}
Since $\bG_j(S)$ is positive semidefinite, this is a convex quadratic
function of $\bb$.

For a candidate conditional variance $v$, the local contribution to the
objective in Eq.~(\ref{eq:generic_alternative_problem}) is
\begin{equation}
    \Phi_j(\bb,v;S)
    :=
    \frac{W_j}{2}
    \log\frac{v}{\overline\sigma_j^2}
    +
    \frac{R_j(\bb;S)}{2v}
    -
    \frac{W_j}{2}.
    \label{eq:alternative_local_kl_score}
\end{equation}
For a candidate parameter
$\lambda=(D^\lambda,\bA^\lambda,\bSigma^\lambda)$, let
$S_j=\pa_{D^\lambda}(j)$,
$\bb_j=\bA^\lambda_{j,S_j}$, and
$v_j=(\sigma_j^\lambda)^2$. The weighted KL objective decomposes as
\begin{equation}
    \sum_{a\in\cA}
        w_a\KL(P_a^{\overline\theta}\|P_a^\lambda)
    =
    \sum_{j=1}^p
        \Phi_j(\bb_j,v_j;S_j).
    \label{eq:alternative_objective_decomposition}
\end{equation}
When $W_j>0$, its minimizing variance is
\begin{equation}
    v_j^\star(\bb;S)
    =
    \Pi_{[\sigma_{\min}^2,\sigma_{\max}^2]}
    \left(
        \frac{R_j(\bb;S)}{W_j}
    \right).
    \label{eq:alternative_optimal_variance}
\end{equation}
The optimized value of Eq.~(\ref{eq:alternative_local_kl_score}) is
nondecreasing in $R_j(\bb;S)$. Thus, for each support and sign pattern, the
coefficient vector is found by minimizing the convex quadratic function
$R_j(\bb;S)$ and the variance is subsequently obtained from
Eq.~(\ref{eq:alternative_optimal_variance}).

If $W_j=0$, the available weighted actions contain no likelihood contribution
from the structural equation of node $j$. We then set its local score to zero
and select any feasible coefficients and variance using a fixed deterministic
tie-breaking rule.

For later use, define
\begin{equation}
    \operatorname{LocalFit}(j,S,\mathcal C)
\end{equation}
as the following finite subroutine: enumerate all sign vectors
$\boldsymbol{\nu}\in\{-1,+1\}^{|S|}$, minimize $R_j(\bb;S)$ subject to
\[
    \beta_{\min}\leq \nu_kb_k\leq A_{\max},
    \qquad k\in S,
\]
together with any additional linear constraints in $\mathcal C$, discard
infeasible sign patterns, compute the optimal variance from
Eq.~(\ref{eq:alternative_optimal_variance}), and return the feasible
parameters having the smallest value of
Eq.~(\ref{eq:alternative_local_kl_score}). For $S=\varnothing$, the
coefficient vector is empty and $R_j(\bb;S)=g_j$.

Whenever the local problem is feasible, we write
\[
    \bigl(
        \bb_j^\star(S,\mathcal C),
        v_j^\star(S,\mathcal C),
        \Phi_j^\star(S,\mathcal C)
    \bigr)
    :=
    \operatorname{LocalFit}(j,S,\mathcal C)
\]
for its returned coefficients, variance, and optimized local score. When
$\mathcal C=\varnothing$, we abbreviate
$\Phi_j^\star(S,\varnothing)$ as $\Phi_j^\star(S)$.

\begin{algorithm}[hbt]
\caption{Finite implementation of the alternative-model oracle}
\label{alg:finite_alternative_oracle}
\begin{algorithmic}[1]
\Require Reference parameter
$\overline\theta=(\overline D,\overline{\bA},\overline{\bSigma})$,
weights $\bw=(w_a)_{a\in\cA}$, and radius $r$
\Ensure Global optimizer $(\lambda^\star,\mathcal V^\star)$ of
Eq.~(\ref{eq:generic_alternative_problem})

\State Compute
$\{\overline{\boldsymbol{\mu}}^{(a)},
\overline{\bH}^{(a)}\}_{a\in\cA}$
using Eq.~(\ref{eq:reference_action_moments})
\State Compute the local quantities in
Eqs.~(\ref{eq:alternative_local_weight})--%
(\ref{eq:alternative_expected_residual})
\State $(\lambda_A,\mathcal V_A)
\gets\Call{SameGraphAlternative}{\overline\theta,\bw,r}$
(Algorithm~\ref{alg:same_graph_alternative})
\State $(\lambda_B,\mathcal V_B)
\gets\Call{DifferentGraphAlternative}{\overline\theta,\bw}$
(Algorithm~\ref{alg:different_graph_alternative})

\If{$\mathcal V_A\leq\mathcal V_B$}
    \State $(\lambda^\star,\mathcal V^\star)
    \gets(\lambda_A,\mathcal V_A)$
\Else
    \State $(\lambda^\star,\mathcal V^\star)
    \gets(\lambda_B,\mathcal V_B)$
\EndIf

\State \Return $(\lambda^\star,\mathcal V^\star)$
\end{algorithmic}
\end{algorithm}

\begin{algorithm}[htb]
\caption{\textsc{SameGraphAlternative}: same-graph branch}
\label{alg:same_graph_alternative}
\begin{algorithmic}[1]
\Require Reference parameter $\overline\theta$, weights $\bw$, and radius $r$
\Ensure Best same-graph alternative $(\lambda_A,\mathcal V_A)$

\State $\mathcal V_A\gets+\infty$ and $\lambda_A\gets\text{null}$

\For{$j=1,\ldots,p$}
    \State $\overline S_j\gets\pa_{\overline D}(j)$
    \ForAll{$k\in\overline S_j$ and $d\in\{-1,+1\}$}
        \State Define
        $\mathcal C_{j,k,d}
        :=\{d(b_k-\overline A_{jk})\geq r\}$
        \State Call $\operatorname{LocalFit}
        (j,\overline S_j,\mathcal C_{j,k,d})$
        \If{the local problem is feasible}
            \State Let $(\bb^\star,v^\star,
            \Phi_j^\star(\overline S_j,\mathcal C_{j,k,d}))$
            be the returned local fit
            \State Construct $\lambda$ using the returned row and variance
            for node $j$
            \State Keep all other coefficient rows and noise variances equal
            to their reference values
            \State Let $\mathcal V$ be the returned local score
            \If{$\mathcal V<\mathcal V_A$}
                \State $(\lambda_A,\mathcal V_A)
                \gets(\lambda,\mathcal V)$
            \EndIf
        \EndIf
    \EndFor
\EndFor

\State \Return $(\lambda_A,\mathcal V_A)$
\end{algorithmic}
\end{algorithm}

\subsection{Finite alternative-model implementation}
\label{app:alternative_oracle_implementation}

The alternative set in Eq.~(\ref{eq:alternative_set}) is the union of two
branches:
\[
    D^\lambda=\overline D,\quad
    \|\bA^\lambda-\overline{\bA}\|_{\max}\geq r,
\]
and
\[
    D^\lambda\neq\overline D.
\]
Algorithms~\ref{alg:same_graph_alternative} and
\ref{alg:different_graph_alternative} solve the two branches separately,
and Algorithm~\ref{alg:finite_alternative_oracle} returns the better solution.

\paragraph{Explanation of Algorithm~\ref{alg:same_graph_alternative}.}
When $D^\lambda=\overline D$, membership in
$\Alt_r(\overline\theta)$ requires at least one existing coefficient to
differ from its reference value by at least $r$. This condition can be
written as the finite union
\[
    b_k-\overline A_{jk}\geq r
    \qquad\text{or}\qquad
    \overline A_{jk}-b_k\geq r
\]
over all existing edges $k\to j$. Each alternative is a linear constraint.

Only one structural row needs to be modified. Indeed, once node $j$ and edge
$k\to j$ satisfy the separation constraint, changing any other row is not
needed for feasibility. Keeping those rows equal to the reference parameter
gives zero local KL contribution, whereas optimizing them cannot reduce the
total KL below their reference value. The constrained problem for the
selected row is therefore sufficient to obtain the global optimum of the
same-graph branch.

The coefficient optimizer may change several entries in row $j$, even though
only one entry is required to deviate by at least $r$. This is intentional:
allowing all coefficients in that row to adjust can reduce the KL objective
while preserving the required separation.

\begin{algorithm}[htb]
\caption{\textsc{DifferentGraphAlternative}: different-graph branch}
\label{alg:different_graph_alternative}
\begin{algorithmic}[1]
\Require Reference parameter $\overline\theta$, weights $\bw$
\Ensure Best different-graph alternative $(\lambda_B,\mathcal V_B)$

\State $\mathcal V_B\gets+\infty$ and $\lambda_B\gets\text{null}$

\ForAll{permutations $\pi=(\pi_1,\ldots,\pi_p)$ of $[p]$}
    \State $F_0(0)\gets0$ and $F_0(1)\gets+\infty$
    \State Initialize backtracking records

    \For{$m=1,\ldots,p$}
        \State $j\gets\pi_m$ and
        $\operatorname{Pred}_\pi(j)
        \gets\{\pi_1,\ldots,\pi_{m-1}\}$
        \State $F_m(0),F_m(1)\gets+\infty$

        \ForAll{$q\in\{0,1\}$ such that $F_{m-1}(q)<+\infty$}
            \ForAll{$S\subseteq\operatorname{Pred}_\pi(j)$}
                \State Call $\operatorname{LocalFit}(j,S,\varnothing)$
                \If{the local problem is feasible}
                    \State Let
                    $(\bb^\star,v^\star,\Phi_j^\star(S))$
                    be the returned local fit
                    \State $q'\gets
                    \max\{q,\mathbf 1\{S\neq\pa_{\overline D}(j)\}\}$
                    \State $\mathcal V\gets
                    F_{m-1}(q)+\Phi_j^\star(S)$
                    \If{$\mathcal V<F_m(q')$}
                        \State $F_m(q')\gets\mathcal V$
                        \State Store $(S,\bb^\star,v^\star)$ and the predecessor state for $(m,q')$
                    \EndIf
                \EndIf
            \EndFor
        \EndFor
    \EndFor

    \If{$F_p(1)<\mathcal V_B$}
        \State Backtrack state $1$ to reconstruct $\lambda$
        \State $(\lambda_B,\mathcal V_B)
        \gets(\lambda,F_p(1))$
    \EndIf
\EndFor

\State \Return $(\lambda_B,\mathcal V_B)$
\end{algorithmic}
\end{algorithm}

\paragraph{Explanation of Algorithm~\ref{alg:different_graph_alternative}.}
For a fixed topological order, every candidate parent set must be a subset of
the node's predecessors. The local KL objective then decomposes across nodes,
but the global constraint $D^\lambda\neq\overline D$ couples the local
choices. Algorithm~\ref{alg:different_graph_alternative} handles this
constraint using a two-state dynamic program:
\[
    q=0:
    \text{all parent sets chosen so far agree with }\overline D,
\]
and
\[
    q=1:
    \text{at least one chosen parent set differs from }\overline D.
\]
Only state $q=1$ is accepted at the end. This prevents the reference graph
from being returned by the different-graph branch. Enumerating all
topological orders ensures that every candidate DAG is considered.

\paragraph{Using Algorithm~\ref{alg:finite_alternative_oracle} in
\textsc{FOCUS}.}
At round $t$, the allocation best response is obtained by calling
Algorithm~\ref{alg:finite_alternative_oracle} with
\[
    \overline\theta=\widehat\theta_t,
    \qquad
    w_a=\alpha_{t,a},
    \qquad
    r=\epsilon.
\]
The returned parameter is $\lambda_t$, and the action-wise gains are then
computed as
\[
    r_{t,a}
    =
    \KL(P_a^{\widehat\theta_t}\|P_a^{\lambda_t}).
\]

The stopping statistic is obtained from a second call with
\[
    \overline\theta=\widehat\theta_t,
    \qquad
    w_a=N_t(a),
    \qquad
    r=\epsilon.
\]
In this case, the returned objective value is exactly
\[
    \mathcal V^\star
    =
    \inf_{\lambda\in\Alt_\epsilon(\widehat\theta_t)}
    \sum_{a\in\cA}
        N_t(a)
        \KL(P_a^{\widehat\theta_t}\|P_a^\lambda)
    =
    d_t.
\]
Thus, the same finite procedure implements both alternative-model
optimization calls required by Algorithm~\ref{alg:focus}.

\paragraph{Computational cost.}
Both algorithms are finite but exponential in the number of nodes. The
joint-MLE procedure enumerates topological orders, parent sets, and
coefficient-sign patterns. The alternative-model procedure additionally
maintains the two graph-difference states. Every continuous subproblem is a
convex quadratic program with linear constraints and can therefore be solved
to global optimality.

These algorithms are intended to demonstrate that the oracle formulation
does not hide an ill-defined or noncomputable operation. For the small graphs
considered in our experiments, the finite procedures can be implemented
directly. For substantially larger graphs, restricted parent-set sizes,
order-based dynamic programming, or approximate continuous structure search
may be used to reduce the computational cost, although such replacements no
longer automatically inherit the exact-oracle guarantees of
Theorems~\ref{thm:focus_correctness} and~\ref{thm:upper_bound}.

\section{Additional Experimental Results}
\label{app:experiments}

Following the experimental protocol described in
Section~\ref{sec:experiments}, we consider five settings:
$(p,\rho)\in\{(5,0.3),(6,0.3),(7,0.3),(6,0.4),(6,0.5)\}$. The representative
setting $(p,\rho)=(6,0.3)$ is reported in the main text, while the remaining
four settings are shown in
Figures~\ref{fig:additional_p5_rho03}--\ref{fig:additional_p6_rho05}.
The additional results are consistent with the main-text experiment and lead
to the same conclusions.

\begin{figure*}[h]
    \centering
    \begin{subfigure}[t]{0.32\textwidth}
        \centering
        \includegraphics[width=\linewidth]{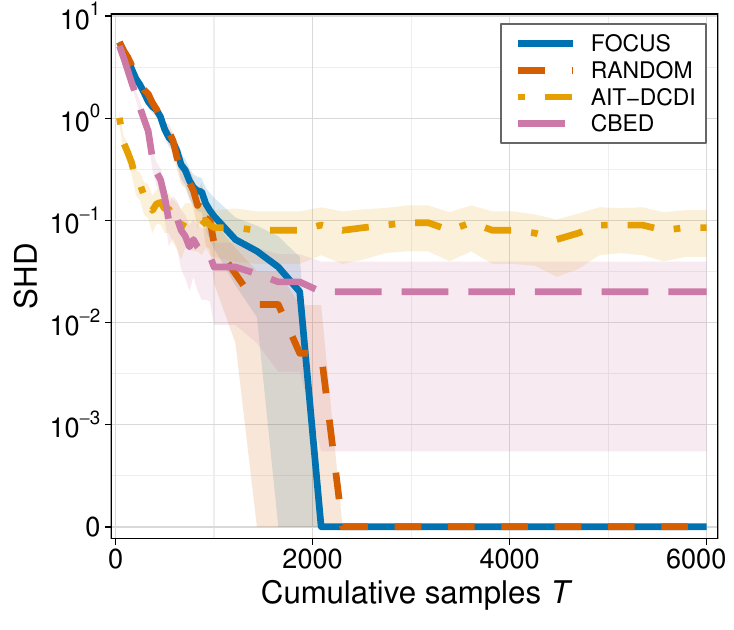}
        \caption{Structure error}
        
    \end{subfigure}
    \hfill
    \begin{subfigure}[t]{0.32\textwidth}
        \centering
        \includegraphics[width=\linewidth]{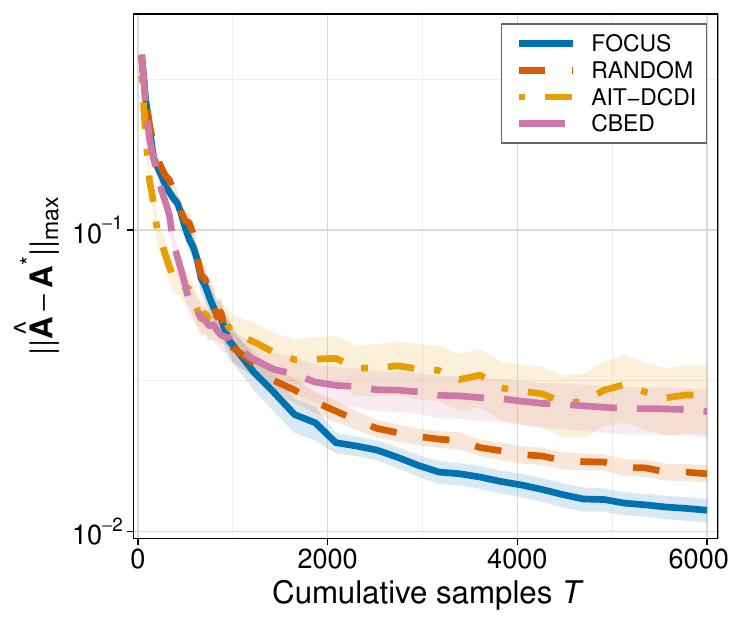}
        \caption{Edge-weight error}
        
    \end{subfigure}
    \hfill
    \begin{subfigure}[t]{0.32\textwidth}
        \centering
        \includegraphics[width=\linewidth]{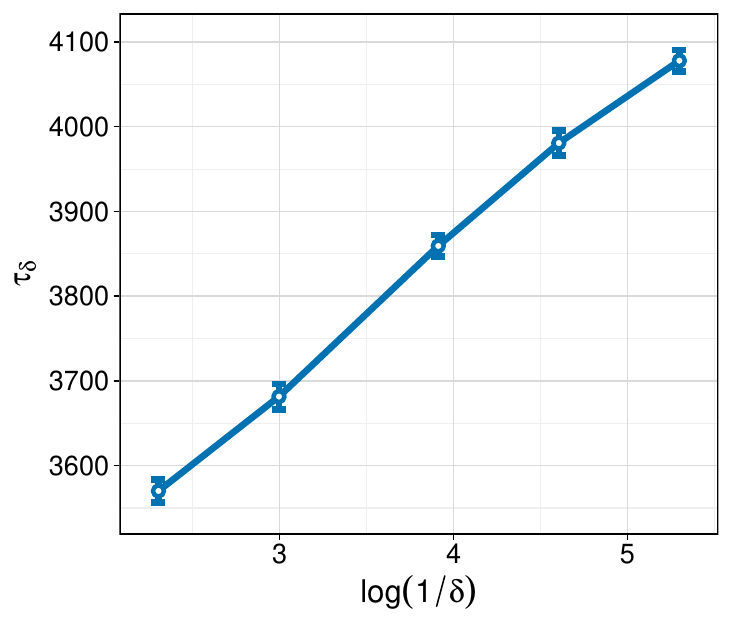}
        \caption{Stopping-time scaling}
        
    \end{subfigure}

    \caption{
        Performance for $p=5$ and $\rho=0.3$. Panels (a)--(c) show SHD, maximum
edge-weight estimation error, and the stopping time of \textsc{FOCUS} versus
$\log(1/\delta)$, respectively. Shaded bands and error bars denote 95\%
confidence intervals.
    }
    \label{fig:additional_p5_rho03}
\end{figure*}

\begin{figure*}[h]
    \centering
    \begin{subfigure}[t]{0.32\textwidth}
        \centering
        \includegraphics[width=\linewidth]{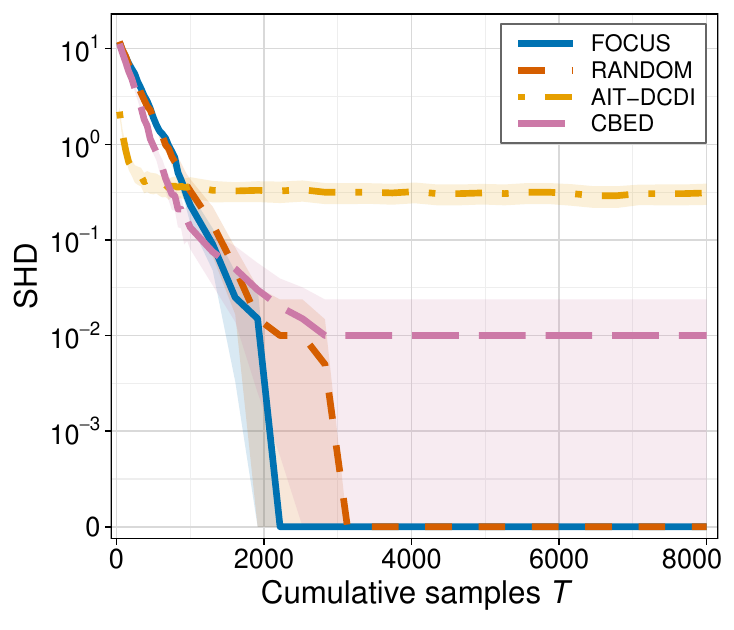}
        \caption{Structure error}
        
    \end{subfigure}
    \hfill
    \begin{subfigure}[t]{0.32\textwidth}
        \centering
        \includegraphics[width=\linewidth]{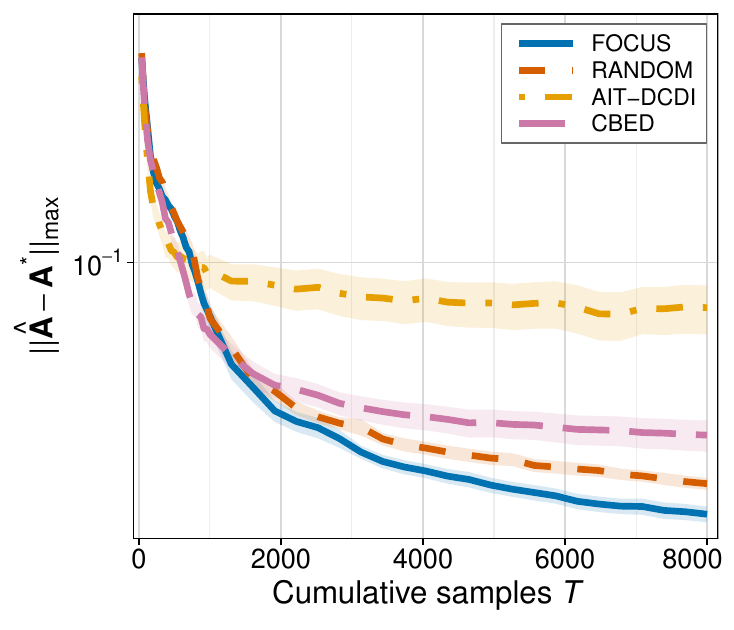}
        \caption{Edge-weight error}
        
    \end{subfigure}
    \hfill
    \begin{subfigure}[t]{0.32\textwidth}
        \centering
        \includegraphics[width=\linewidth]{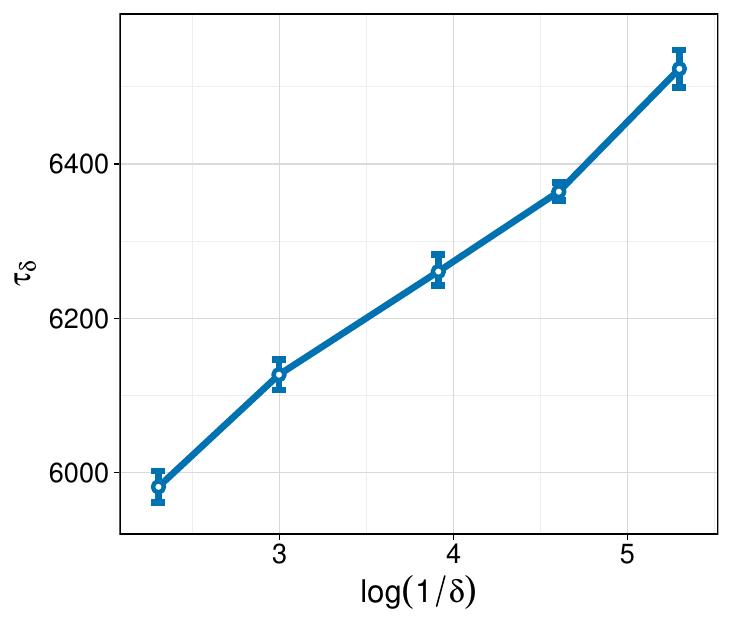}
        \caption{Stopping-time scaling}
        
    \end{subfigure}

    \caption{
        Performance for $p=7$ and $\rho=0.3$. Panels (a)--(c) show SHD, maximum
edge-weight estimation error, and the stopping time of \textsc{FOCUS} versus
$\log(1/\delta)$, respectively. Shaded bands and error bars denote 95\%
confidence intervals.
    }
    \label{fig:additional_p7_rho03}
\end{figure*}

\begin{figure*}[h]
    \centering
    \begin{subfigure}[t]{0.32\textwidth}
        \centering
        \includegraphics[width=\linewidth]{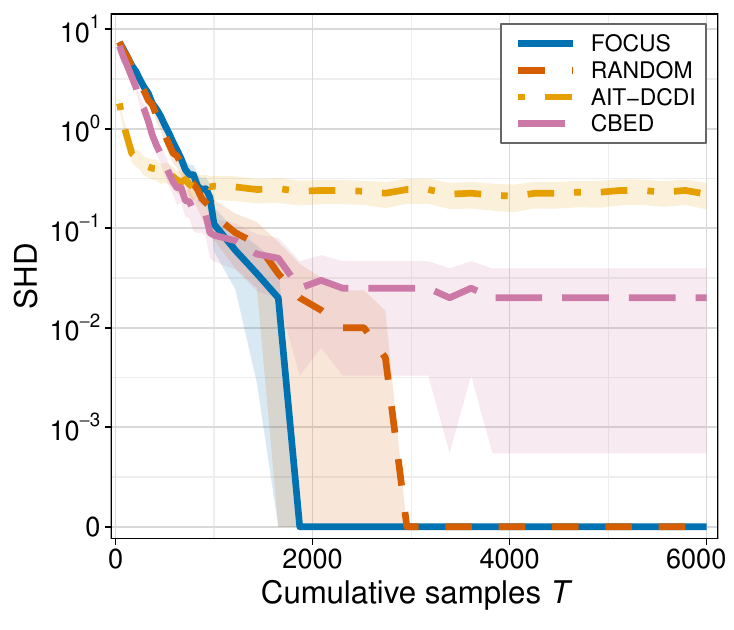}
        \caption{Structure error}
        
    \end{subfigure}
    \hfill
    \begin{subfigure}[t]{0.32\textwidth}
        \centering
        \includegraphics[width=\linewidth]{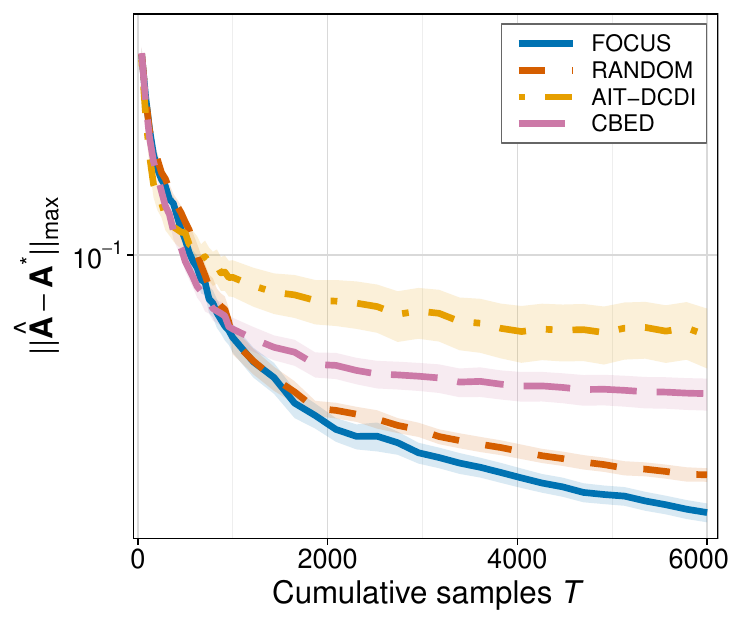}
        \caption{Edge-weight error}
        
    \end{subfigure}
    \hfill
    \begin{subfigure}[t]{0.32\textwidth}
        \centering
        \includegraphics[width=\linewidth]{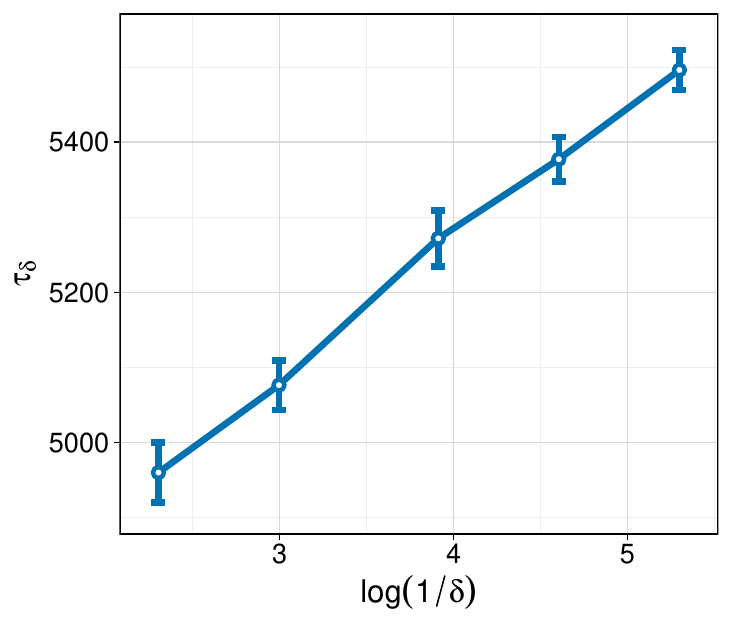}
        \caption{Stopping-time scaling}
        
    \end{subfigure}

    \caption{
        Performance for $p=6$ and $\rho=0.4$. Panels (a)--(c) show SHD, maximum
edge-weight estimation error, and the stopping time of \textsc{FOCUS} versus
$\log(1/\delta)$, respectively. Shaded bands and error bars denote 95\%
confidence intervals.
    }
    \label{fig:additional_p6_rho04}
\end{figure*}

\begin{figure*}[h]
    \centering
    \begin{subfigure}[t]{0.32\textwidth}
        \centering
        \includegraphics[width=\linewidth]{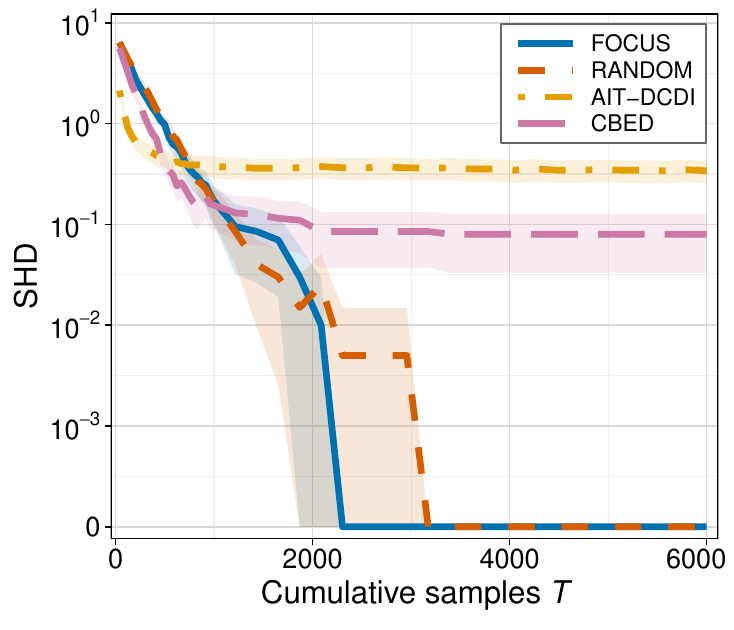}
        \caption{Structure error}
        
    \end{subfigure}
    \hfill
    \begin{subfigure}[t]{0.32\textwidth}
        \centering
        \includegraphics[width=\linewidth]{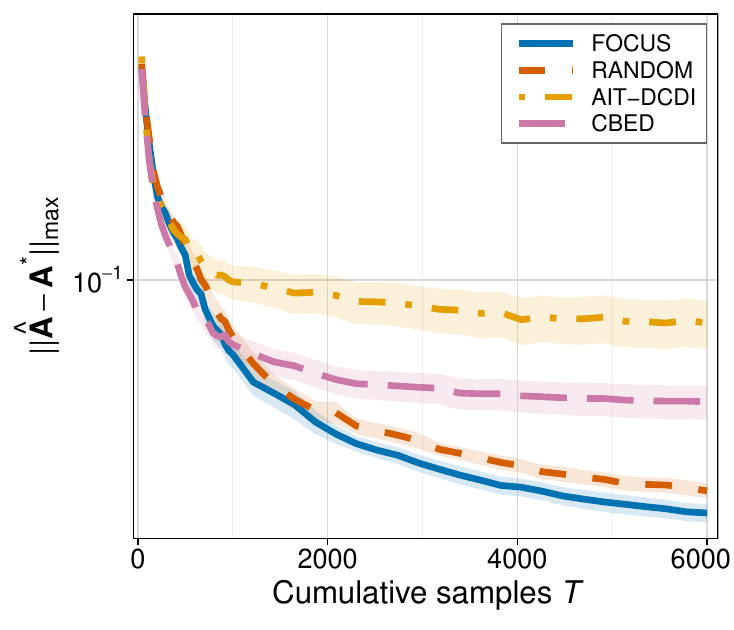}
        \caption{Edge-weight error}
        
    \end{subfigure}
    \hfill
    \begin{subfigure}[t]{0.32\textwidth}
        \centering
        \includegraphics[width=\linewidth]{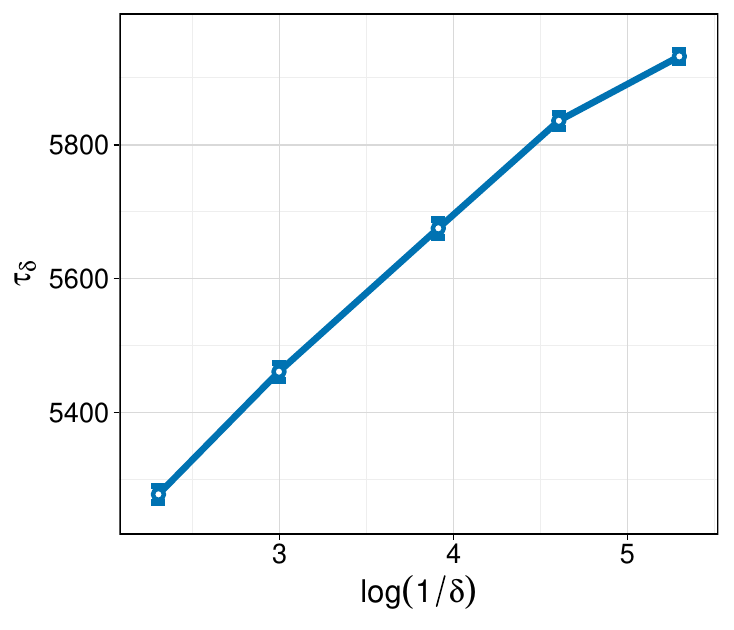}
        \caption{Stopping-time scaling}
        
    \end{subfigure}

    \caption{
        Performance for $p=6$ and $\rho=0.5$. Panels (a)--(c) show SHD, maximum
edge-weight estimation error, and the stopping time of \textsc{FOCUS} versus
$\log(1/\delta)$, respectively. Shaded bands and error bars denote 95\%
confidence intervals.
    }
    \label{fig:additional_p6_rho05}
\end{figure*}

\end{document}